\definecolor{color1}{HTML}{105e8a}
\definecolor{color2}{HTML}{e99926}
\definecolor{color3}{HTML}{b82a0c}
\definecolor{color4}{HTML}{3e8a10}
\definecolor{color5}{HTML}{80037e}
\newcommand{\linearspan}{\mathrm{span}}
\newcommand{\sign}{\mathrm{sign}}
\newcommand{\indep}{\texttt{independent-ERM}\xspace}
\newcommand{\oracle}{\texttt{oracle}\xspace}
\newenvironment{restate}[2]{%
  \par\noindent\textbf{#1~\ref{#2}.}\ \itshape
}{\par\normalfont}
\title{Bridging Lifelong and Multi-Task Representation Learning via Algorithm and Complexity Measure} 
\author[1]{Zhi Wang}
\author[2]{Chicheng Zhang}
\author[1]{Ramya Korlakai Vinayak}
\affil[1]{University of Wisconsin--Madison}
\affil[2]{University of Arizona}
\date{}
\begin{document}

\maketitle

\begin{abstract}%
In lifelong learning, a learner faces a sequence of tasks with shared structure and aims to identify and leverage it to accelerate learning. We study the setting where such structure is captured by a common representation of data. Unlike multi-task learning or learning-to-learn, where tasks are available upfront to learn the representation, lifelong learning requires the learner to make use of its existing knowledge while continually gathering partial information in an {\em online} fashion. In this paper, we consider a generalized framework of lifelong representation learning. We propose a simple algorithm that uses multi-task empirical risk minimization as a subroutine and establish a sample complexity bound based on a new notion we introduce---the {\em task-eluder dimension}. Our result applies to a wide range of learning problems involving general function classes. As concrete examples, we instantiate our result on classification and regression tasks under noise.
\end{abstract}

\begin{bibunit}

\section{Introduction}
\label{sec:introduction}

In many real-world settings, learning naturally involves a collection of related tasks~\citep{caruana1997multitask}. The ability to identify and leverage shared structure among tasks allows a learner to transfer knowledge and accelerate learning. One common form of structure lies in a shared {\em representation} of data such that simple functions operating on it can support effective and efficient learning across tasks. For example, adapting linear classifiers over a pre-trained set of deep neural network features has demonstrated state-of-the-art performance in computer vision \citep{donahue2014decaf}.

The benefit of representation transfer is a central topic in the study of multi-task learning (MTL) and learning to learn (LTL) \citep{baxter2000model,maurer2016benefit,tripuraneni2020theory,aliakbarpour2024metalearning}. In MTL, the learner is given a fixed set of tasks and aims to jointly learn a shared representation and task-specific prediction layers. In LTL, also known as meta-learning, tasks are drawn from an unknown distribution: the learner is first trained on a collection of tasks and then evaluated on a new task sampled from the same distribution. In both cases, tasks are available upfront, and LTL often relies on having enough {\em diversity} in seen tasks for the learner to fully identify the representation before applying it in an unseen task \citep{tripuraneni2020theory,du2021fewshot}.

However, learning may unfold over time, with tasks arriving in a sequence, as is the case in how humans learn. On the one hand, the learner {\em should} be able to begin making use of the shared structure before it is fully uncovered. On the other hand, the learner can always continually gather partial information about the structure and refine its internal representation. This motivates the study of {\em lifelong representation learning}, where the learner aims to identify and utilize a common representation in an online manner to reduce the sample complexity over the sequence of tasks.

\begin{figure*}[t]
    \begin{subfigure}{0.45\textwidth}
        \centering
        \includegraphics[width=\linewidth]{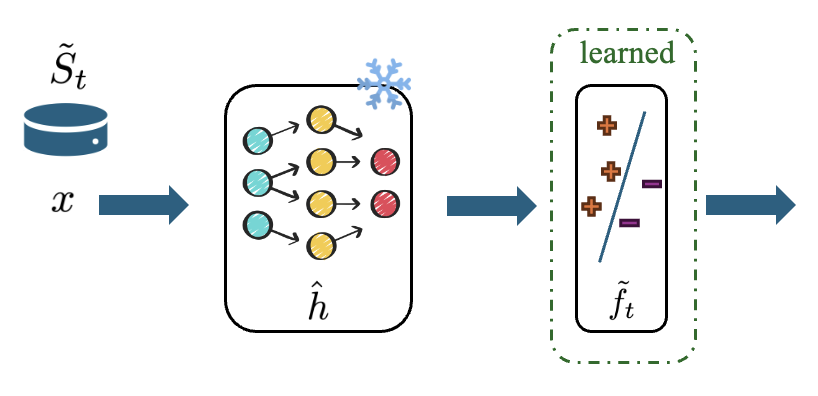}
        \caption{Few-shot property test}
        \label{fig:few-shot}
    \end{subfigure}
    \hfill
    \begin{subfigure}{0.47\textwidth}
        \includegraphics[width=\linewidth]{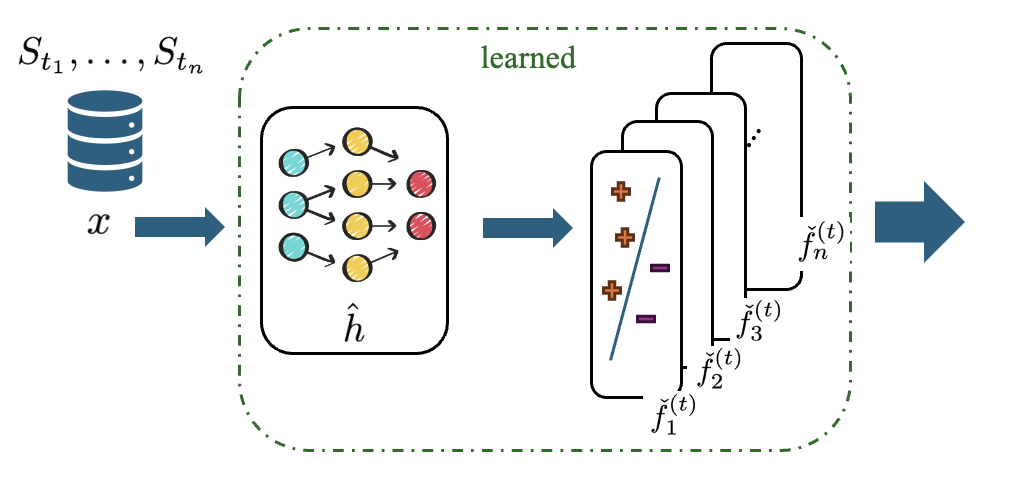}
        \caption{Multi-task ERM}
        \label{fig:mtl}
    \end{subfigure}
    \caption{Our algorithm maintains a representation $\hat{h}$. When a new task arrives, the algorithm first performs a few-shot property test to check whether $\hat{h}$ admits a prediction layer with low excess risk. If not, it performs MTL on data from a subset of previously seen tasks and updates $\hat{h}$.}
    \label{fig:illustration}
\end{figure*}

While MTL and LTL with a shared representation have been extensively studied, theoretical understanding of the lifelong/online regime remains relatively limited (see Section~\ref{sec:related-work} for related work). \citet{balcan2015efficient} study lifelong learning of linear classifiers that lie in a low-dimensional linear subspace in the noiseless, realizable setting. More recently, \citet{cao2022provable} consider a broader family of representations---namely, one hidden-layer neural networks---again under the noiseless, realizable assumption. In this work, our goal is to develop an algorithm and a theory for general function classes in noisy but well-specified settings (Section~\ref{sec:preliminaries}).
Our main contributions are:
\begin{enumerate}[leftmargin=*]
    \item We study a simple, provably efficient algorithm (Section~\ref{sec:algorithm}) for lifelong representation learning with composite predictors $f_t \circ h$, where $h \in \Hcal$ is a shared representation (e.g., a deep neural network) and $f_t \in \Fcal$ is a prediction layer specialized to task $t$ (e.g., a linear function). 
    Our algorithm is composed of two modular subroutines,    
    few-shot property test and multi-task empirical risk minimization (ERM).
    This design points toward a connection between the theory of lifelong learning and more practical algorithmic frameworks in real-world MTL.
    
    \item We provide theoretical guarantees on the sample and space complexities of our algorithm (Section~\ref{sec:main-result}) based on a new complexity measure, the {\em task-eluder dimension} (Section~\ref{sec:task-eluder}).
    Together, our algorithm and complexity measure bridge lifelong learning and MTL: multi-task ERM serves as a mechanism for refining the learner's representation in lifelong learning, and the task-eluder dimension bounds how many times it needs to be invoked.
    When the task-eluder dimension is small, the benefit of representation transfer in lifelong learning becomes more evident.
    
    \item To demonstrate the applicability of our findings, we provide examples of how our result can be instantiated in noisy regression and classification tasks (Section~\ref{sec:examples}), and we empirically validate our theoretical results on synthetic and semi-synthetic data (Section~\ref{sec:empirical-validation}).
\end{enumerate}

\section{Preliminaries}
\label{sec:preliminaries}

\paragraph{Notation.} Throughout, we denote by $[n] := \cbr{1, \ldots, n}$. We use $(f \circ h)(x) = f(h(x))$ to denote the composition of functions $f$ and $h$. We write $\lesssim$ and $\gtrsim$ for inequality up to a universal constant factor. For matrices $A$ and $B$, we use $A \precsim B$ to denote that there exists a universal constant $c > 0$ such that $A \preceq cB$. $\tilde{\Ocal}$ hides logarithmic factors.

\subsection{Problem formulation}
\label{sec:problem-formulation}

We now present the lifelong representation learning problem studied in this work, which generalizes the settings in \citep{balcan2015efficient,cao2022provable}. 
Consider a sequence of $T$ tasks arriving over time. They share common input and output spaces, denoted by $\Xcal \subset \RR^d$ and $\Ycal \subset \RR$.
Each task $t \in [T]$ is associated with an underlying data-generating distribution $\Pcal_t$ over $\Xcal \times \Ycal$.\footnote{Following prior work, we assume that when a task is completed, it does not come back. For example, even when task $10$ has the same distribution as task $1$, we treat it as a new task and may produce a different model. Extending our setting to continual learning~\citep{kirkpatrick2017overcoming} where old tasks can reappear is an interesting future direction.
}
We assume that $\Pcal_1, \ldots, \Pcal_T$ share a common marginal distribution\footnote{With some care, our results can be extended to handle heterogeneous marginal distributions that are well-conditioned.
} over $\Xcal$, denoted by $P_X$, as is standard in the literature for studying representation transfer \citep{tripuraneni2020theory,xu2021representation}.

Let $\Hcal \subseteq \cbr{h: \Xcal \rightarrow \Zcal}$ be a class of representations and $\Fcal \subseteq \cbr{f: \Zcal \rightarrow \RR}$ be a class of task-specific prediction layers that operate on these representations. A predictor for a task is then given by $f \circ h$. As an example, $\Hcal$ may represent a family of deep neural networks that map $\Xcal$ to $\Zcal = \RR^k$, while $\Fcal$ is a class of linear functions in $\RR^k$. Let $\Fcal \circ \Hcal := \cbr{f \circ h: f \in \Fcal, h \in \Hcal}$; we assume that $\Fcal \circ \Hcal$ is permissible (see \citep{pollard1984convergence} and \citep[Appendix D therein]{baxter2000model}).

Let $\ell: \RR \times \Ycal \rightarrow [0,1]$ be a loss function. For any distribution $P$ over $\Xcal \times \Ycal$, denote by %
\begin{align*}
    \Lcal_P(f \circ h) := \EE_{(x,y) \sim P} \sbr{\ell \rbr{ (f\circ h)(x), y}}
\end{align*}
the risk of $f \circ h$, for any $f \in \Fcal$ and $h \in \Hcal$.

In this paper, we consider a family of distributions parameterized by $g: \Xcal \rightarrow \RR$ of the form
\[
P_g(x,y) = P_X(x) P_{Y|X}(y | x; g).
\]
We assume that the loss $\ell$ is calibrated with respect to $P_g$,
i.e., 
$g$ is Bayes optimal with respect to $P_{g}$ and $\ell$---formally, 
\[
g = \argmin_{g': \Xcal \to \RR } \Lcal_{P_g}(g').
\]
The specific form of $P_{Y | X} (y | x; g)$ may depend on the exact learning problem at hand. For example, in classification under realizability, labels are given deterministically as $y = g(x)$. 
In regression where the response variable follows a distribution from an exponential family, such as logistic regression or Poisson regression,
$P_{Y \mid X}(y \mid x; g) = \lambda(y) \exp(g(x) y - b(g(x)))$,
where $\lambda$ is a base measure and $b(\theta) = \ln \int_{\Ycal} e^{\theta y} \lambda(dy)$ is the associated log partition function.
We refer to $\PP := \cbr{P_{f \circ h}: f \in \Fcal, h \in \Hcal}$ as the probabilistic model of the problem. \\

We assume that the tasks admit a {\em shared representation} under which they are well-specified.
\begin{assumption}[Well-specified model]
\label{assum:well-specified}
There exist $h^* \in \Hcal$ and $f_1^*, \ldots, f_T^* \in \Fcal$ such that, for each task $t \in [T]$, $\Pcal_t = P_{f^*_t \circ h^*}$.
\end{assumption}

\paragraph{Lifelong learning.} 
For each task $t \in [T]$, the learner can request i.i.d.\ samples from $\Pcal_t$, possibly over more than one rounds. Using these samples, the learner produces a predictor $f_t \circ h_t$. Once completed, the learner proceeds to the next task and may never revisit a previous one. However, we assume that the learner may have access to a memory buffer of $o(T)$ size to store some data. Let $\delta, \epsilon \in (0,1)$. The learner's goal is to return predictors $f_t \circ h_t$'s such that
\[
\Pr \big(\forall t \in [T], \ \ \underbrace{\Lcal_{\Pcal_t} (f_t \circ h_t) - \Lcal_{\Pcal_t} (f_t^* \circ h^*)}_{\text{excess risk for task $t$}} \le \epsilon \big) \ge 1 - \delta.
\]
The performance of the learner is measured by its {\em sample complexity}, i.e., the total number of samples it acquires across all $T$ tasks.
We note that the lifelong learning objective here is akin to that of the KWIK (knows what it knows) framework studied in reinforcement learning and active learning \citep{li2008knows}. Here, for each task, the learner can always choose to acquire more data (equivalent to saying ``I don't know'') if they are uncertain about their current prediction. Since the task sequence may be chosen {\em adversarially} (so long as Assumption~\ref{assum:well-specified} is satisfied), the total number of samples naturally accounts for variation in task difficulty, and an important challenge in lifelong representation learning lies in carefully managing when (or in which tasks) to request more data. %

\begin{remark}[Comparison with prior work]
    \label{rem:comparison-settings}
    Our framework generalizes the settings in \citep{balcan2015efficient,cao2022provable} as follows. Earlier formulations assume specific function classes (e.g., $\Fcal$ as a class of linear functions, and $\Hcal$ as a class of low-dimensional linear representations or one-hidden-layer neural networks), whereas our formulation accomodates {\em general function classes}. %
    
    \citet{balcan2015efficient} and \citet{cao2022provable} assume noiseless realizability, i.e., $\Lcal_{\Pcal_t}(f_t^* \circ h^*) = 0$.
    In contrast, we consider a well-specified model in which $f_i^* \circ h^*$ may have non-zero risk due to noise (Assumption~\ref{assum:well-specified}); we defer the extension to the full agnostic setting to future~work.
    
    Lastly, while prior work focuses on specific learning problems (e.g., binary classification in \citep{balcan2015efficient}), our formulation---through suitable choices of $\Xcal$, $\Ycal$, and $\ell$---offers a unified framework for studying lifelong representation learning across a wide range of learning problems.
\end{remark}

\subsection{Background: empirical risk and multi-task ERM}

Before presenting our algorithm and main results, we provide background on multi-task ERM which we use as a subroutine.
Let $P$ be the data-generating distribution over $\Xcal \times \Ycal$ for a task, and $S = \cbr{(x_j, y_j)}_{j=1}^m$ be drawn i.i.d.\ from $P^m$. For any $h \in \Hcal$ and $f \in \Fcal$, define
\begin{align*}
    \widehat{\Lcal}_{S} (f \circ h) := \frac{1}{m} \sum_{j=1}^m \ell \rbr{(f \circ h)(x_j), y_j}
\end{align*}
to be the empirical risk of $f \circ h$ over $S$.

Multi-task ERM for representation transfer has been widely studied in the literature \citep[e.g.,][]{baxter2000model,maurer2016benefit,tripuraneni2020theory,aliakbarpour2024metalearning}. 
Let $P_1, \ldots, P_n$ be the data-generating distribution of $n$ tasks. Suppose for each task $i \in [n]$, we draw an i.i.d.\ sample of size $m$, $S_i \sim P_i^m$, then multi-task ERM jointly finds a shared representation and task-specific prediction layers by solving the following optimization problem:
\begin{align}
    \label{eqn:multi-task-erm}
     \argmin_{\substack{h \in \Hcal \\ f_1, \ldots, f_n \in \Fcal}} \frac{1}{n} \sum_{i=1}^n \widehat{\Lcal}_{S_i} (f_i \circ h).
\end{align}
When $n=1$, we refer to this approach as single-task ERM, or simply ERM.

\section{Algorithm and complexity measure for lifelong representation learning}
\label{sec:algorithm}

\begin{algorithm}[t]
\SetNoFillComment
\SetCommentSty{commentfont}
\caption{Lifelong representation learning with multi-task ERM as a subroutine}
\label{alg:lifelong-nonparametric}
\KwIn{$\Hcal$, $\Fcal$, target error $\epsilon$, confidence $\delta$, number of tasks $T$, noise levels $(\kappa_t)_t$\;}

\vspace{2pt}
Initialize memory $\Mcal \gets \emptyset$\;

\vspace{2pt}
{Initialize $N \gets 1$ (unless $\dim(\Hcal, \Fcal, \epsilon)$ is known (see Definition~\ref{def:task-eluder-dimension}), in which case set $N$ to it)\;} \label{line:initialize-N}

\vspace{2pt}
\For{task $t = 1$}{
    \vspace{2pt}
    Draw a sample $S_1$ of size $m_N$ from $\Pcal_1^{m_N}$, apply ERM to learn $\hat{h}$ and $\hat{f}_1$\, and output $\hat{f}_1 \circ \hat{h}$\;

    \vspace{2pt}
    Set $n \gets 1$ and $t_n \gets t$, and update the memory $\Mcal \gets \Mcal \cup \cbr{S_{t_n}}$\;
}

\vspace{5pt}
\For{tasks $t = 2, \ldots, T$}{
    \vspace{2pt}
    \tcp{Few-shot property test: check if $\hat{h}$ admits a hypothesis for current task $t$ with risk at most $\epsilon$} 
    Draw a sample $\tilde{S}_t$ of size $\tilde{m}$ from $\Pcal_t^{\tilde{m}}$, and apply ERM with current $\hat{h}$ to learn $\tilde{f}_t$\; \label{line:property-test}

    \vspace{2pt}
    \If{$\widehat{\Lcal}_{\tilde{S}_t}(\tilde{f}_t \circ \hat{h}) \le \kappa_t + \frac{3}{4}\epsilon$}{ \label{line:test-success-begin} 
        \vspace{3pt}
        Output $\tilde{f}_t \circ \hat{h}$\; \label{line:test-success-end}
    }
    \Else{
        \vspace{2pt}
        \color{color3}
        \If{$n = N$}{ \label{line:double-start}
            \vspace{2pt}
            Set $n \gets 1$, $N \gets 2N$, and clear the memory $\Mcal \gets \emptyset$\; \label{line:double-end}
        }
        \color{black}
        \Else{
            \vspace{2pt}
            Set $n \gets n+1$;
        }
    
        \vspace{5pt}
        Draw a sample $S_t$ of size $m_N$ from $\Pcal_t^{m_N}$\; \label{line:mtl-begin}
        \vspace{2pt}
        Set $t_n \gets t$, and update the memory $\Mcal \gets \Mcal \cup \cbr{S_{t_n}}$\;
        \vspace{2pt}
        Apply ERM over the samples stored in the memory to learn
        \[
        \hat{h}, \check{f}^{(t)}_1, \ldots, \check{f}^{(t)}_{n} \gets \argmin_{\substack{h \in \Hcal \\ f_1, \ldots, f_{n} \in \Fcal}} \frac{1}{n} \sum_{i=1}^{n} \widehat{\Lcal}_{S_{t_i}} (f_i, h); \label{line:multi-task-erm}
        \]

        Set $\hat{f}_t \gets \check{f}^{(t)}_{n}$, update $\hat{h}$, and output $\hat{f}_t \circ \hat{h}$\; \label{line:mtl-end}
    }
}
\end{algorithm}

\subsection{Algorithm}

Our algorithm builds upon the methods of \citet{balcan2015efficient} and \citet{cao2022provable}, which are tailored to specific function classes under the realizability assumption. In contrast, our algorithm is amenable to general function classes and noise models. In addition, our algorithm is designed with practical considerations in mind, leveraging MTL as a subroutine, which has been extensively studied and aligns with real-world implementations. Algorithm~\ref{alg:lifelong-nonparametric} contains the~pseudocode.

\paragraph{High-level overview.} In our algorithm, the learner maintains a representation $\hat{h} \in \Hcal$, which is initially learned from the first task. For each subsequent task, the learner performs a property test to determine if there exists a prediction layer $f \in \Fcal$ such that the predictor $f \circ \hat{h}$ has excess risk at most $\epsilon$. Specifically, given the restricted class of functions, $\{f \circ \hat{h}: f \in \Fcal\} \subset \Fcal \circ \Hcal$, the learner checks whether the best predictor in this restricted class 
has risk $\epsilon$-close to that of the Bayes optimal predictor.
This is done via few-shot learning the prediction layer with $\hat{h}$ frozen (see also Figure~\ref{fig:few-shot}, and requires a small number of samples depending only on the complexity of $\Fcal$. We call this the {\em few-shot property test}. 

If the test succeeds, the learner simply retains $\hat{h}$ and safely outputs the predictor (lines~\ref{line:test-success-begin} to~\ref{line:test-success-end}). Otherwise, the learner updates its maintained representation $\hat{h}$ by performing multi-task ERM on a subset of tasks where few-shot property test previously failed---whose samples are stored in memory (lines~\ref{line:mtl-begin} to~\ref{line:mtl-end}, see also Figure~\ref{fig:mtl}). Applying multi-task ERM ensures that the new $\hat{h}$ is an effective representation for this subset of tasks. We note that only samples from tasks where the property test failed are added to the memory buffer. To facilitate the few-shot property test, we make an additional assumption.
\begin{assumption}[Known noise levels]
\label{assum:noise-levels}
Let $\kappa_t := \Lcal_{\Pcal_t} (f_t^* \circ h^*)$ denote the Bayes-optimal risk for each task $t \in [T]$. The learner knows the values $(\kappa_t)_{t=1}^T$.
\end{assumption}

\begin{remark}
    While not explicitly stated, the noiseless realizability assumption in \citep{balcan2015efficient,cao2022provable} entails that the learner knows $\kappa_t = 0$ for all $t \in [T]$.
    We introduce Assumption~\ref{assum:noise-levels} because the excess risk measured within $\{f \circ \hat{h}: f \in \Fcal\} \subset \Fcal \circ \Hcal$ can be misleading in estimating the excess risk with respect to Bayes optimal predictor, if only a limited number of examples are sampled for task $t$, as the risk of the best predictor in this class may itself be far from $\Lcal_{\Pcal_t}(f_t^* \circ h^*)$.    Proposition~\ref{prop:property-test-neg} highlights the hardness in a simple linear setting; 
    its formal statement and proof (based on a reduction from \citep[Proposition 2]{kong2018estimating}) are deferred to Appendix~\ref{app:hardness-property-test}.
    We conjecture that it may be impossible to design an algorithm whose sample complexity adapts to the quality of representation of $\hat{h}$ without the knowledge of $\kappa_t$.

    That said, Assumption~\ref{assum:noise-levels} can be relaxed to the knowledge of an upper bound $\kappa$ on the Bayes optimal risk for each task. In this case, the objective for each task would then be to learn a hypothesis with risk at most $\kappa + \epsilon$.

\end{remark}

\vspace{-10pt}
\begin{proposition}[informal]
    \label{prop:property-test-neg}
    Suppose we observe $n$ examples $\cbr{(x_i, y_i)}_{i=1}^n \sim P^n$, where $P$ denotes some noisy linear regression model in $\RR^d$. Let $\Gcal$ be a class of linear predictors in $\RR^d$ and $\Gcal_0 \subset \Gcal$ be restricted to a fixed subspace of dimension $r \le \frac{d}{2}$; that is, $g \in \Gcal_0$ uses a given linear representation. For any $P$ and $\Gcal' \subseteq \Gcal$, let $\kappa_{P}(\Gcal') := \inf_{g \in \Gcal'} \EE_{(x,y) \sim P} \sbr{(g(x) - y)^2}$.
    Consider two hypotheses:
    \[
    H_0 = \cbr{P: \kappa_P(\Gcal_0) =\kappa_P(\Gcal)} \quad \text{and} \quad H_1 = \cbr{P: \kappa_P(\Gcal_0) > \kappa_P(\Gcal) + 0.9}.
    \]
    There exists some constant $c$ such that no test can successfully distinguish between $H_0$ and $H_1$ with probability $\frac{2}{3}$ using fewer than $c\sqrt{d}$ samples.

\end{proposition}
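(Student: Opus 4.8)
The plan is to exhibit two families of noisy linear models, one contained in $H_0$ and one in $H_1$, whose induced laws over $n$ samples are statistically indistinguishable whenever $n \le c\sqrt{d}$; the claim then follows from Le Cam's two-point method, and I would carry it out via a reduction to the signal-detection lower bound of \citet[Proposition 2]{kong2018estimating}. The starting observation is that when the shared marginal is isotropic, $P_X = N(0, I_d)$, the risk of a linear predictor $w$ is $\EE[(\langle w, x\rangle - y)^2] = \|w\|^2 - 2\langle w, \EE[xy]\rangle + \EE[y^2]$, so the unconstrained minimizer is $w^\star = \EE[xy]$ and, writing $V$ for the fixed $r$-dimensional subspace defining $\Gcal_0$, the constrained minimizer is $P_V w^\star$. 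Hence
\[
\kappa_P(\Gcal_0) - \kappa_P(\Gcal) = \|P_{V^\perp}\, \EE[xy]\|^2 ,
\]
so $H_0$ is exactly the event that $\EE[xy]$ has no component in $V^\perp$, while $H_1$ asks that this component have squared norm exceeding $0.9$. The testing problem is thereby reduced to detecting a planted correlation between $y$ and the $(d-r)$-dimensional coordinates of $x$ lying in $V^\perp$.

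To build the reduction, I would set $D := \dim V^\perp = d - r \ge d/2$. Under the null I take $y$ independent of the $V^\perp$-coordinates of $x$, so $P_{V^\perp}\EE[xy] = 0$ and $P \in H_0$. Under the alternative I plant $y = \langle u, x\rangle + \xi$ with $u \in V^\perp$ drawn uniformly from the sphere of radius $\rho$, $\rho^2 = 0.95$, and independent noise $\xi$, choosing the noise variance so that the marginal variance of $y$ is matched across the two cases. The displayed identity then gives $\kappa_P(\Gcal_0) - \kappa_P(\Gcal) = \rho^2 > 0.9$, so the alternative lies in $H_1$. Any tester distinguishing $H_0$ from $H_1$ from our $(x,y)$ samples immediately distinguishes the null from the planted alternative in the $D$-dimensional detection problem, which by \citet[Proposition 2]{kong2018estimating} requires $\Omega(\sqrt{D}) = \Omega(\sqrt{d})$ samples (here $r \le d/2$ is used only to guarantee $D \ge d/2$); this produces the stated $c\sqrt{d}$ barrier.

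If one prefers a self-contained argument over invoking the external lower bound, the same indistinguishability can be established directly by the second-moment method. Let $Q$ be the $n$-fold null law and $\bar P = \EE_u[P_u^{\otimes n}]$ the planted mixture. A Gaussian integral (after integrating out $y$ and then $x$) shows that the per-sample cross-term $\EE_Q[L_u L_{u'}] = f(\langle u, u'\rangle)$ depends on $u, u'$ only through their inner product, so that $\chi^2(\bar P \,\|\, Q) + 1 = \EE_{u,u'}\big[f(\langle u, u'\rangle)^n\big]$. Since $u, u'$ are uniform on a sphere in $\RR^D$, the quantity $\langle u, u'\rangle$ has mean zero and variance $\rho^4/D$; expanding $f(\gamma)^n = 1 + n f'(0)\gamma + O(n^2\gamma^2)$ and taking expectations leaves $1 + O(n^2/D)$, which stays below $1.1$ once $n \le c\sqrt{D}$, yielding total variation below $1/6$ and thus no test of success probability $2/3$.

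The hard part will be the joint calibration in the reduction: the planted signal must be strong enough to push the risk gap past the fixed constant $0.9$, yet its $V^\perp$-component must be small in each coordinate and ``spread out'' as a random spike across the $\ge d/2$-dimensional complement, so that it cannot be recovered from the empirical cross-covariance $\frac1n\sum_i x_i y_i$ with $o(\sqrt d)$ samples---it is precisely the isotropy of the spike over high-dimensional $V^\perp$ that produces a $\sqrt d$ rather than a constant threshold. Achieving both simultaneously, and matching the noise and variance normalizations so that the null and planted marginals of $y$ coincide (leaving the only statistical signal in the hard-to-estimate second-order correlation), is where the care is needed; in the direct route the identical difficulty reappears as the need to control the higher-order terms of $f(\gamma)^n$ uniformly over the tail of $\langle u, u'\rangle$.
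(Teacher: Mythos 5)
Your proposal is correct and takes essentially the same route as the paper's proof: both reduce to \citet[Proposition~2]{kong2018estimating} by planting a random spike $w_*$ drawn uniformly from a sphere in $U^\perp$ (of dimension $d-r \ge d/2$), calibrating the regression noise so that the marginal law of $y$ is $\Ncal(0,1)$ under both the null and the planted mixture, and lifting the low-dimensional samples to $\RR^d$ by padding with fresh independent Gaussian coordinates along $U$ (a step you leave implicit but which is the obvious mechanics of your reduction). The only nit is that to invoke the cited proposition verbatim you should take $\rho^2 = 0.99$ with noise variance $0.01$ (still yielding a risk gap $0.99 > 0.9$) rather than your $0.95/0.05$ calibration, which would otherwise require re-deriving the lower bound at those constants---e.g., via the self-contained $\chi^2$ second-moment argument you sketch, which is sound but not pursued by the paper.
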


\vspace{-6pt}
\begin{remark}
The use of a memory buffer in lifelong learning is not specific to our algorithm \citep[e.g.][]{isele2018selective}. \citet{cao2022provable} utilize past features stored in a memory buffer to perform representation refinement in their main algorithm, \texttt{LLL-RR}, and they also  suggest a heuristic algorithm, \texttt{H-LLL}, that stores and reuses training datasets.
\end{remark}

It remains to understand when property tests in Algorithm~\ref{alg:lifelong-nonparametric} stop failing---that is, when multi-task ERM is no longer needed and how large the memory buffer must be. We now introduce a complexity measure that characterizes the sample and memory requirements of our algorithm, before returning to examine our algorithm in greater detail.

\vspace{-4pt}
\subsection{The task-eluder dimension}
\label{sec:task-eluder}
The eluder dimension is a well-established complexity measure studied in sequential decision-making \citep[e.g.,][]{russo2013eluder,foster2020instance,li2022understanding,hanneke2024star}. Its essence is captured by the canonical illustrative example: how many times can a politician ``elude'' by answering questions without revealing their true position \citep{russo2013eluder}?
In this work, we adapt and extend the eluder dimension to characterize the complexity of lifelong representation learning.
\begin{definition}[$\epsilon$-independence]
\label{def:eps-independence}
Let $\Hcal \subset \cbr{h: \Xcal \rightarrow \Zcal}$ be a class of representations, and $\Fcal \subset \cbr{f: \Zcal \rightarrow \RR}$ be a class of prediction layers that operate on these representations. A predictor is the composition $f \circ h$, where $f \in \Fcal$ and $h \in \Hcal$. Let $\PP = \cbr{P_{f \circ h}: f \in \Fcal, h \in \Hcal}$ be the probabilistic model and $\ell: \RR \times \Ycal \rightarrow [0,1]$ be a loss function. \\

\vspace{-10pt}
\noindent For any representation $h \in \Hcal$, we say $(h, f_{n})$ is {\em $\epsilon$-independent} of $\cbr{(h, f_1), \ldots, (h, f_{n-1})}$ with respect to $(\Hcal, \Fcal)$ if there exist $h' \in \Hcal$ and $f'_1, \ldots, f'_{n-1} \in \Fcal$ such that
\begin{align}
    \sum_{i=1}^{n-1} \EE_{P_{f_i \circ h}} \sbr{ \ell((f'_i \circ h')(x), y) - \ell((f_i \circ h)(x), y) }\le \epsilon, \label{eqn:eps-independence-sum}
\end{align}
but for any $f'_{n} \in \Fcal$, $\EE_{P_{f_n \circ h}} \sbr{\ell((f'_{n} \circ h')(x), y) - \ell((f_{n} \circ h)(x), y) } > \frac{\epsilon}{2}$.
\end{definition}

\begin{definition}[Task-eluder dimension]
\label{def:task-eluder-dimension}
Given the setting of Definition~\ref{def:eps-independence}, for any representation $h \in \Hcal$, we denote by $\rho_h \rbr{\Fcal, \epsilon}$ the length of the longest sequence centered at $h$, $\cbr{(h,f_i)}_{i}$, such that each tuple is $\epsilon$-independent of its predecessors. \\

\vspace{-10pt}
\noindent Then, the $\epsilon$-task-eluder dimension of $(\Hcal, \Fcal)$ under probabilistic model $\PP$ and loss function $\ell$ is $\dim_{\PP, \ell}(\Hcal, \Fcal, \epsilon) := \sup_{h \in \Hcal} \rho_h(\Fcal, \epsilon)$. To avoid overloading the notation, we omit $\PP$ and $\ell$ and write $\dim(\Hcal, \Fcal, \epsilon)$ when they are clear from context.
\end{definition}

Intuitively, a task $(h, f_n)$ is independent of its predecessors if there is another representation $h'$  that is indistinguishable from $h$ on earlier tasks---because good prediction layers also exist under $h'$---yet $(h, f_n)$ provides new information that helps disambiguate between $h$ and $h'$, as there is no prediction layer for $h'$ that can match $(h, f_n)$. Here, ``equivalence'' between predictors is measured in terms of excess risk at scale $\Ocal(\epsilon)$ under the distributions induced by $\rbr{f_i \circ h}$'s.\footnote{In Section~\ref{sec:discussion}, we examine an alternative notion of $\epsilon$-independence (Definition~\ref{def:eps-independence}) and show, via a linear example, that it can fail to yield meaningful bounds.}

In other words, the task-eluder dimension quantifies how long the learner can continue picking up {\em at least some new information} from a possibly adversarial task sequence without pinning down a near-optimal representation. 
As a sanity check, we prove the following simple bound on the task eluder dimension when either $\Hcal$ or $\Fcal$ is finite, similar to the basic bounds of eluder dimension~\citep[][Proposition 1 therein]{osband2014model}: 

\begin{proposition}\label{prop:finite-eluder}
For any $\epsilon \ge 0$,
\[
\dim(\Hcal, \Fcal, \epsilon) \le 2\min \rbr{\abr{\Hcal}, \abr{\Fcal}}.
\] 
\end{proposition}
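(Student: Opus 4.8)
The plan is to establish the two inequalities $\dim(\Hcal,\Fcal,\epsilon)\le 2\abr{\Hcal}$ and $\dim(\Hcal,\Fcal,\epsilon)\le 2\abr{\Fcal}$ separately and then take the minimum; both follow the same counting template. Fix a representation $h\in\Hcal$ and a sequence $(h,f_1),\dots,(h,f_L)$ in which every tuple is $\epsilon$-independent of its predecessors, so that it suffices to bound $L$. For each index $n$, fix once and for all one witnessing certificate: a competitor $h'_n\in\Hcal$ and layers $f^{(n)}_1,\dots,f^{(n)}_{n-1}\in\Fcal$. For $i<n$ write the per-task excess risk
\[
\Delta_i(h',f'):=\EE_{(x,y)\sim P_{f_i\circ h}}\sbr{\ell\rbr{(f'\circ h')(x),y}-\ell\rbr{(f_i\circ h)(x),y}},
\]
so the certificate says $\sum_{i<n}\Delta_i(h'_n,f^{(n)}_i)\le\epsilon$ while, for every $f'\in\Fcal$, the ``failing'' condition $\EE_{(x,y)\sim P_{f_n\circ h}}\sbr{\ell\rbr{(f'\circ h'_n)(x),y}-\ell\rbr{(f_n\circ h)(x),y}}>\epsilon/2$ holds. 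Two facts drive everything: by the calibration assumption $f_i\circ h$ is Bayes optimal under $P_{f_i\circ h}$, hence $\Delta_i(h',f')\ge 0$ for all $h',f'$; and taking $h'=h,\,f'=f_n$ makes the failing quantity equal $0$, so no certificate can use $h'_n=h$.

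The core lemma is that each competitor value is reused by at most two indices. Group the indices $1,\dots,L$ by the value of $h'_n$ and consider a group $n_1<\dots<n_k$ sharing a common competitor $h'$. Apply the certificate of the \emph{largest} index $n_k$: its total excess $\sum_{i<n_k}\Delta_i(h',f^{(n_k)}_i)\le\epsilon$. For each earlier group member $n_j$ with $j<k$, the failing condition of index $n_j$—which is stated with the same competitor $h'$—forces $\Delta_{n_j}(h',f^{(n_k)}_{n_j})>\epsilon/2$, and this quantity is one of the summands of the $n_k$-certificate. Since all summands are nonnegative, the full sum is at least the sum over these $k-1$ distinguished members, giving $(k-1)\epsilon/2<\epsilon$ and hence $k\le 2$. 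With at most $\abr{\Hcal}$ distinct competitor values, $L\le 2\abr{\Hcal}$.

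The bound $L\le 2\abr{\Fcal}$ is the same argument with the grouping organized around the layers appearing in the sequence rather than the competitors. Group the indices by the value of $f_n$, and take a group $n_1<\dots<n_k$ with common layer $f_{n_j}=f$. Now the distributions $P_{f_{n_j}\circ h}$ all coincide with $P_{f\circ h}$ and the corresponding Bayes predictors all coincide with $f\circ h$. Applying the certificate and the failing condition of the largest index $n_k$ exactly as before, the alignment that lets the failing condition of index $n_j$ lower-bound the $n_j$-summand of the $n_k$-certificate now comes from the distributions being identical (so the $n_k$ failing condition, instantiated at the layer $f^{(n_k)}_{n_j}$, is literally that summand), rather than from the competitors being identical. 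The same arithmetic yields $k\le 2$, so $L\le 2\abr{\Fcal}$. Taking the supremum over $h$ gives $\dim(\Hcal,\Fcal,\epsilon)\le 2\min(\abr{\Hcal},\abr{\Fcal})$.

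I expect the only real subtlety to be this alignment step—matching an earlier index's failing condition to a summand of the latest index's certificate—which is precisely why fixing a single certificate per index up front and then partitioning by shared competitor (resp. shared layer, giving a shared distribution and Bayes predictor) is the clean way to organize the bookkeeping. The only analytic input is the nonnegativity of excess risk supplied by calibration; it also disposes of the boundary case $\epsilon=0$, where the certificate sum is forced to be $0$ term by term and each value is used at most once. Everything else is counting.
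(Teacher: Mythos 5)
Your proof is correct and takes essentially the same route as the paper's: both bounds follow from showing that each witness representation (resp.\ each repeated prediction layer, via the shared distribution and shared Bayes predictor) can account for at most two $\epsilon$-independent indices, by playing the $\epsilon$-sum certificate of the latest such index against the strict $\epsilon/2$ failing conditions at the earlier ones, with nonnegativity of excess risk (from calibration) disposing of the remaining summands. Your grouping-by-certificate bookkeeping and the paper's three-indices contradiction are the same counting argument in different clothing, and your explicit handling of $\epsilon=0$ and of nonnegativity only makes explicit what the paper uses implicitly.
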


The proof of Proposition~\ref{prop:finite-eluder} is deferred to Appendix~\ref{app:finite-task-eluder}. 
In Section~\ref{sec:examples}, we provide examples of upper bounds on the task-eluder dimension for common classes.

\subsection{A closer look at Algorithm~\ref{alg:lifelong-nonparametric}}
Now that we are equipped with the notion of task-eluder dimension, we revisit Algorithm~\ref{alg:lifelong-nonparametric} to analyze its sample and space complexities.
For now, assume that the learner knows the task-eluder dimension $\dim(\Hcal, \Fcal, \epsilon) \le \Xi$ in advance and initializes $N = \Xi$ (line~\ref{line:initialize-N}); let us also disregard the steps in \textcolor{color3}{red} (lines~\ref{line:double-start} to \ref{line:double-end}).\footnote{See Appendix~\ref{app:pseudocode-known-task-eluder} for a clean version of the algorithm under knowledge of $\dim(\Hcal, \Fcal, \epsilon)$.} At the beginning of any task $t \ge 2$, suppose the counter $n = n_0$.

\begin{enumerate}[leftmargin=*]
    \item By choosing $m_N$ appropriately (as stated in Theorem~\ref{thm:main}) and leveraging data in the memory buffer for which the initial task and tasks where property test failed, 
    $\Mcal = \cbr{S_{t_1}, \ldots, S_{t_{n_0}}}$?, the learner ensures from multi-task ERM (line~\ref{line:multi-task-erm}) that the currently maintained $\hat{h}$ satisfies, with high probability,
    \[
    \sum_{i=1}^{n_0} \rbr{\Lcal_{\Pcal_{t_i}} (\check{f}^{(t_{n_0})}_i \circ \hat{h}) - \Lcal_{\Pcal_{t_i}} (f^*_{t_i} \circ h^*)} \le \epsilon. 
    \]

    \item Meanwhile, with $\tilde{m}$ chosen appropriately (as stated in Theorem~\ref{thm:main}), if the few-shot property test fails, then with high probability,
    \[
    \Lcal_{\Pcal_t}(f \circ \hat{h}) - \Lcal_{\Pcal_t}(f_t^* \circ h^*) > \frac{\epsilon}{2}, \quad \forall f \in \Fcal;
    \]
    in other words, $(h^*, f_t^*)$ is $\epsilon$-independent of $\cbr{(h^*, f^*_{t_i})}_{i=1}^{n_0}$.
\end{enumerate}
Since $\rho_{h^*}(\Fcal, \epsilon) \le \dim(\Hcal, \Fcal, \epsilon)$, the task-eluder dimension bounds the number of times this can happen. In other words, the task-eluder dimension characterizes the number of tasks for which the learner needs to acquire {\em additional} data beyond what is required for property testing, which then determines the sample and space complexities of the algorithm. 

\paragraph{Doubling trick.} In practice, the learner may not know $\dim(\Hcal, \Fcal, \epsilon)$ in advance. To address this, we utilize the doubling trick from the online learning literature \citep[e.g.,][]{shalev2012online}.
Specifically, we let $N$ be the running estimate of $\Xi$. We begin with a small estimate, $N = 1$ (line~\ref{line:initialize-N}). Each time it proves insufficient---i.e., for the current estimate $N$, property tests fail more than $N$ times---we double the estimate, clear the memory, and restart the process (lines~\ref{line:double-start} to~\ref{line:double-end}). This ensures that, without prior knowledge of $\dim(\Hcal, \Fcal, \epsilon)$, the number of times that the few-shot property test fails can still be bounded by $\Ocal \rbr{\dim(\Hcal, \Fcal, \epsilon)}$.

\section{Theoretical guarantees}
\label{sec:theoretical-guarantees}

In this section, we first provide additional background and introduce a few technical tools for our theoretical analysis (Section~\ref{sec:additional-background}). We then introduce two benchmark algorithms (Section~\ref{sec:benchmarks}) before presenting our main result (Section~\ref{sec:main-result}).

\subsection{Background: sample complexity of multi-task ERM} 
\label{sec:additional-background}

\paragraph{Capacities of $\Hcal$ and $\Fcal$ based on covering numbers.}
As is standard in statistical learning theory, sample complexity for generalization depends on the capacity of the learner's model class, often captured by notions such as VC dimension and Rademacher complexity \citep[e.g.,][]{shalev2014understanding}. Following \citep{baxter2000model}, we use {\em covering numbers} to characterize the capacities of $\Hcal$ and $\Fcal$ and later analyze sample complexity. We note that our findings in this work are not tied to this specific choice and can extend to other suitable complexity measures.

For any $f \in \Fcal$, let $f_\ell(z, y) := \ell(f(z), y)$ for any low-dimensional representation $z \in \Zcal$ and target $y \in \Ycal$, and let $\Fcal_\ell := \cbr{f_\ell: f \in \Fcal}$. For any distribution $Q$ on $\Zcal \times \Ycal$, let $d_Q(f_\ell, f'_\ell) := \int_{\Zcal \times \Ycal} \abr{f_\ell(z,y) - f'_\ell(z,y)} dQ(z,y)$ be the $L^1(Q)$ pseudo-metric on $\Fcal_\ell$. Then, for any $\epsilon_0 > 0$, we define the capacity of $\Fcal$ at scale $\epsilon_0$ to be
\begin{align*}
    \Ccal(\Fcal_\ell, \epsilon_0) := \sup_Q N(\Fcal_\ell, \epsilon_0, d_Q),
\end{align*}
where $N(\Fcal_\ell, \epsilon_0, d_Q)$ denotes the $\epsilon_0$-covering number of $\rbr{\Fcal_{\ell}, d_Q}$. For ease of notation, we often use $\Ccal(\Fcal, \epsilon_0)$ in place of $\Ccal(\Fcal_\ell, \epsilon_0)$.

For measure $P$ on $\Xcal \times \Ycal$, let $d_{P, \Fcal_\ell}(h,h') := \int_{\Xcal \times \Ycal} \sup_{f_\ell \in \Fcal_\ell} \abr{f_\ell(h(x), y) - f_\ell(h'(x), y)} dP(x,y)$ be a pseudo-metric on $\Hcal$. Then, for any $\epsilon_0 > 0$, we define the capacity of $\Hcal$ at scale $\epsilon_0$ to be
\begin{align*}
    \Ccal_{\Fcal_\ell}(\Hcal, \epsilon_0) := \sup_P N(\Hcal, \epsilon_0, d_{P, \Fcal_\ell}),
\end{align*}
where $N(\Hcal, \epsilon_0, d_{P, \Fcal_\ell})$ denotes the $\epsilon_0$-covering number of $\rbr{\Hcal, d_{P, \Fcal_\ell}}$. When the context is clear, we write
$\Ccal(\Hcal, \epsilon_0)$ for $\Ccal_{\Fcal_\ell}(\Hcal, \epsilon_0)$ to avoid clutter.

We note that $\log \Ccal(\Fcal, \epsilon_0)$ and $\log \Ccal(\Hcal, \epsilon_0)$ are often referred to as the metric entropy of $\Fcal$ and $\Hcal$, respectively \citep{haussler1992decision}. 
Section~\ref{sec:examples} provides examples of capacity bounds for $\Hcal$ and $\Fcal$.

\paragraph{Sample complexity of multi-task ERM.}
\citet{baxter2000model} established the following uniform bound for multi-task ERM using capacities based on covering numbers:
\begin{theorem}[\citealp{baxter2000model}, Theorem 4 and Theorem 6 thereof]
\label{thm:baxter-uniform-convergence}
Let $P_1, \ldots, P_n$ be the data-generating distributions of $n$ tasks. Let $\Hcal_0$ be a class of representations and $\Fcal_0$ a class of prediction layers. Suppose for each task $i$, an i.i.d.\ sample $S_i$ of size $m$ is drawn from $P_i^m$, where
\[
m \ge \Ocal \rbr{\frac{1}{n\epsilon_0^2} \rbr{\log \Ccal \big(\Hcal_0, \frac{\epsilon_0}{32}\big) + n \log \Ccal \big(\Fcal_0, \frac{\epsilon_0}{32} \big) + \log \frac{1}{\delta_0} }},
\]
then with probability at least $1 - \delta_0$, for any $(h, f_1, \ldots, f_n)$,
\[
\abr{\frac{1}{n} \sum_{i=1}^n \Lcal_{P_i}(f_i \circ h) - \frac{1}{n} \sum_{i=1}^n \widehat{\Lcal}_{S_i}(f_i \circ h)} \le \epsilon_0.
\]
\end{theorem}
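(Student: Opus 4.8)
The plan is to prove this as a two-sided uniform convergence bound over the multi-task composite class, following the classical symmetrization-plus-covering approach \citep{pollard1984convergence, baxter2000model}. Write the quantity of interest as
\[
\Delta(S) := \sup_{\substack{h \in \Hcal_0 \\ f_1, \ldots, f_n \in \Fcal_0}} \abr{\frac{1}{nm}\sum_{i=1}^n \sum_{j=1}^m \rbr{\EE_{P_i}\sbr{\ell((f_i\circ h)(X),Y)} - \ell((f_i\circ h)(x_{ij}), y_{ij})}},
\]
where I used that $\frac{1}{n}\sum_i(\Lcal_{P_i}-\widehat{\Lcal}_{S_i})$ is exactly an average of $nm$ independent $[0,1]$-valued summands (i.i.d.\ within a task, independent across tasks). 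The goal is to show $\Pr(\Delta(S) > \epsilon_0) \le \delta_0$ under the stated lower bound on $m$; the $nm$-fold averaging is what will ultimately drive the concentration and produce the $\frac{1}{n\epsilon_0^2}$ scaling.

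First I would symmetrize: introduce an independent ghost sample $S' = (S_1', \ldots, S_n')$ of the same sizes and invoke the first symmetrization inequality to bound $\Pr(\Delta(S) > \epsilon_0)$ by twice the probability that the ghost-minus-sample empirical average exceeds $\epsilon_0/2$, which is valid once $m \gtrsim 1/\epsilon_0^2$ so that each per-task deviation is small (a Chebyshev step). Inserting Rademacher signs $\sigma_{ij}$ then lets me work conditionally on the pooled $2nm$ points, where the natural metric becomes the empirical $L^1$ distance on these points; crucially, because the capacities $\Ccal(\Hcal_0,\cdot)$ and $\Ccal(\Fcal_0,\cdot)$ are defined as suprema over all measures, they dominate the empirical covering numbers on the pooled sample, so the distribution-free capacities absorb the empirical-versus-population mismatch.

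The crux is the covering decomposition, where the shared representation must be handled so that $\Hcal_0$ enters additively (once) rather than multiplicatively across tasks. I would build a cover of the multi-task class by (i) covering $\Hcal_0$ a single time at scale $\Theta(\epsilon_0)$ with respect to the sup-over-layers metric $d_{P,\Fcal_\ell}$, and (ii) covering $\Fcal_0$ at scale $\Theta(\epsilon_0)$ once per task, giving total log-cardinality $\log\Ccal(\Hcal_0,\cdot) + n\log\Ccal(\Fcal_0,\cdot)$. The supporting approximation lemma is a triangle-inequality estimate: for any $(h, f_1,\ldots,f_n)$ and its representative $(h', f_1',\ldots,f_n')$, split $\abr{\ell((f_i\circ h)(x),y) - \ell((f_i'\circ h')(x),y)}$ into an $h$-term $\abr{\ell(f_i(h(x)),y) - \ell(f_i(h'(x)),y)} \le \sup_{f_\ell\in\Fcal_\ell}\abr{f_\ell(h(x),y) - f_\ell(h'(x),y)}$, controlled by $d_{P,\Fcal_\ell}(h,h')$, and an $f$-term controlled by the $L^1$ distance of $f_{i,\ell}, f_{i,\ell}'$ under the pushforward of the sample through $h'$. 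This is precisely why the capacity of $\Hcal_0$ is defined through the sup-over-$\Fcal_\ell$ metric: one cover of $\Hcal_0$ then serves all $n$ tasks and all layer choices simultaneously. This second step is the main obstacle, since $h$ is coupled to all $n$ prediction layers and a naive cover would pay $\log\Ccal(\Hcal_0,\cdot)$ a factor of $n$ times.

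Finally I would apply a union bound over the resulting cover together with Hoeffding's inequality for each fixed element: the symmetrized average of $nm$ independent $[0,1]$ summands exceeds $\Theta(\epsilon_0)$ with probability at most $2\exp(-\Omega(nm\epsilon_0^2))$. Setting the per-element failure probability to $\delta_0$ divided by the cover cardinality $\Ccal(\Hcal_0,\Theta(\epsilon_0))\cdot\Ccal(\Fcal_0,\Theta(\epsilon_0))^n$ and solving $nm\epsilon_0^2 \gtrsim \log\Ccal(\Hcal_0,\cdot) + n\log\Ccal(\Fcal_0,\cdot) + \log(1/\delta_0)$ yields exactly $m \ge \Ocal\big(\frac{1}{n\epsilon_0^2}(\log\Ccal(\Hcal_0,\frac{\epsilon_0}{32}) + n\log\Ccal(\Fcal_0,\frac{\epsilon_0}{32}) + \log\frac{1}{\delta_0})\big)$, with the constant $32$ emerging from tracking the symmetrization halvings and the triangle-inequality slack through the covering scales.
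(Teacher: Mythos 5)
Your proposal is correct and follows essentially the same route as the source: the paper does not reprove this statement but imports it from \citet{baxter2000model}, whose proof is precisely the symmetrization-plus-covering argument you outline---ghost sample and Rademacher signs (with the $m \gtrsim 1/\epsilon_0^2$ Chebyshev condition, which is why the explicit version in Lemma~\ref{lem:baxter-multi-task-detailed} carries an additive $16/\epsilon_0^2$ term), a single cover of $\Hcal_0$ at scale $\Theta(\epsilon_0)$ under the sup-over-layers metric $d_{P,\Fcal_\ell}$ shared across all $n$ tasks, per-task covers of $\Fcal_0$ under the pushforward empirical $L^1$ metric, and a union bound with Hoeffding over the $\Ccal(\Hcal_0,\cdot)\,\Ccal(\Fcal_0,\cdot)^n$ cover. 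You also correctly identify the key structural point that makes $\log\Ccal(\Hcal_0,\cdot)$ enter additively rather than $n$ times, which is exactly why the paper defines the capacity of $\Hcal$ through the $\sup_{f_\ell \in \Fcal_\ell}$ pseudo-metric.
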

\begin{corollary}
\label{cor:baxter-multitask-sample}
It follows immediately that if the number of samples per task exceeds

\[
\Ocal \rbr{\frac{1}{n\epsilon_0^2} \rbr{\log \Ccal \big(\Hcal_0, \frac{\epsilon_0}{64}\big) + n \log \Ccal \big(\Fcal_0, \frac{\epsilon_0}{64} \big) + \log \frac{1}{\delta_0} }},
\]
then with probability at least $1 - \delta_0$,
\[
\frac{1}{n} \sum_{i=1}^n \Lcal_{P_i}(\hat{f}_i \circ \hat{h}) \le \min_{h, f_1, \ldots, f_n} \frac{1}{n} \sum_{i=1}^n \Lcal_{P_i}(f_i \circ h) + \epsilon_0,
\]
where $(\hat{h}, \hat{f}_1, \ldots, \hat{f}_n)$ is the solution to multi-task ERM (Eq.~\eqref{eqn:multi-task-erm}).
\end{corollary}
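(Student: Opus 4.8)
The plan is to deduce the corollary from Theorem~\ref{thm:baxter-uniform-convergence} via the standard argument that two-sided uniform convergence controls the population suboptimality of empirical risk minimization. The key observation is that the sample-size requirement in the corollary is exactly that of the theorem with the target accuracy halved: replacing $\epsilon_0$ by $\epsilon_0/2$ turns the covering scale $\epsilon_0/32$ into $\epsilon_0/64$ and the prefactor $1/(n\epsilon_0^2)$ into $4/(n\epsilon_0^2)$, the latter being absorbed into the hidden universal constant of the $\Ocal(\cdot)$.

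First I would invoke Theorem~\ref{thm:baxter-uniform-convergence} at accuracy $\epsilon_0/2$ and confidence $\delta_0$. On the resulting event (of probability at least $1-\delta_0$), the uniform bound
\[
\abr{\frac{1}{n}\sum_{i=1}^n \Lcal_{P_i}(f_i\circ h) - \frac{1}{n}\sum_{i=1}^n \widehat{\Lcal}_{S_i}(f_i\circ h)} \le \frac{\epsilon_0}{2}
\]
holds simultaneously for every tuple $(h, f_1,\ldots,f_n)$. This is the only probabilistic ingredient; the remainder of the argument is deterministic and takes place on this event.

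Next, let $(h^\star, f_1^\star, \ldots, f_n^\star)$ attain the minimum of the average population risk $\frac{1}{n}\sum_i \Lcal_{P_i}(f_i\circ h)$; existence of such a minimizer is what the permissibility of $\Fcal\circ\Hcal$ is there to guarantee. I would then chain three inequalities in order: apply uniform convergence to pass from the population risk of the ERM solution $(\hat h, \hat f_1,\ldots,\hat f_n)$ to its empirical risk (losing $\epsilon_0/2$); use optimality of the ERM solution for the empirical objective \eqref{eqn:multi-task-erm} to replace it by the benchmark $(h^\star, f_1^\star,\ldots,f_n^\star)$, which can only increase the empirical objective; and apply uniform convergence a second time to return to the population risk of the benchmark (losing another $\epsilon_0/2$). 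Summing the two $\epsilon_0/2$ slacks yields the claimed $\epsilon_0$ bound.

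There is essentially no hard step here: the content is entirely in Theorem~\ref{thm:baxter-uniform-convergence}, and the corollary is a mechanical consequence. The only points warranting a line of care are the bookkeeping of constants (the $32\to 64$ and $1/\epsilon_0^2 \to 4/\epsilon_0^2$ rescaling, both harmless inside $\Ocal(\cdot)$) and the existence of the population minimizer, which can be sidestepped by working with an infimizing sequence and letting the slack tend to zero if the minimum is not attained.
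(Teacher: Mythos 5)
Your proposal is correct and matches the paper's intended argument: the paper offers no explicit proof beyond ``it follows immediately,'' and the canonical route is exactly what you do---invoke Theorem~\ref{thm:baxter-uniform-convergence} at accuracy $\epsilon_0/2$ (which accounts for the $32 \to 64$ covering scale and the constant absorbed into $\Ocal(\cdot)$), then chain uniform convergence, ERM optimality on the empirical objective, and uniform convergence again to lose $\epsilon_0/2$ twice. Your remark on handling a possibly unattained infimum via an infimizing sequence is a sensible extra precaution but not a departure from the paper's approach.
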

This result highlights the benefit of MTL. With only one task, the learner bears the full burden of learning both the representation and the prediction layer. Since we are mostly interested in the regime where $\log \Ccal \big(\Hcal_0, \frac{\epsilon_0}{64}\big) \gg \log \Ccal \big(\Fcal_0, \frac{\epsilon_0}{64}\big)$, learning the representation individually for each task can be costly. With MTL, while the learner still has to learn $f_1, \ldots, f_n$, the cost of learning the shared representation is amortized over the tasks.

\subsection{Warm-up: two benchmarks} 
\label{sec:benchmarks}
To better highlight the performance of our algorithm, we introduce two benchmark methods for the lifelong representation learning problem.
The first is a naive {\em baseline} algorithm in which the learner simply ignores any shared structure and solves each task independently using single-task ERM, which we refer to as \indep. 
Applying Corollary~\ref{cor:baxter-multitask-sample} (with $n = 1$) and the union bound, we obtain the following sample complexity bound of \indep,
\begin{align} \label{eqn:baseline-guarantee}
\tilde{\Ocal} \rbr{\frac{T}{\epsilon^2} \rbr{ \log \Ccal \big(\Hcal, \frac{\epsilon}{64} \big) + \log \Ccal \big(\Fcal, \frac{\epsilon}{64} \big)}}. 
\end{align}
Alternatively, had the learner known $h^*$ beforehand, it suffices to only learn the prediction layer for each task by solving single-task ERM with a singleton representation class $\cbr{h^*}$ (using a much smaller sample). We call this {\em skyline} algorithm \oracle. Formally, it follows from Corollary~\ref{cor:baxter-multitask-sample} and the union bound that \oracle has a much lower sample complexity of
\begin{align}
    \label{eqn:oracle-guarantee}
    \tilde{\Ocal} \rbr{\frac{T}{\epsilon^2} \log \Ccal \big(\Fcal, \frac{\epsilon}{64} \big)}.
\end{align}

\subsection{Main result}
\label{sec:main-result}
We now present our main theorem. In the interest of space, its proof is deferred to Appendix~\ref{app:proofs-main}.
\begin{theorem}
\label{thm:main}
Let $\Xi = \dim(\Hcal, \Fcal, \epsilon) < \infty$. 
Suppose $\Hcal$ and $\Fcal$ have finite capacities; that is, $\Ccal \rbr{\Hcal, \frac{\epsilon}{256\Xi}} < \infty$ and $\Ccal \rbr{\Fcal, \frac{\epsilon}{256\Xi}} < \infty$. 
In Algorithm~\ref{alg:lifelong-nonparametric}, set
\[
m_N = \tilde{\Ocal} \rbr{\frac{N}{\epsilon^2} \sbr{\log \Ccal \rbr{\Hcal, \frac{\epsilon}{64N}} + N \log \Ccal\rbr{\Fcal, \frac{\epsilon}{64N}}}}
\]
for each $N$, and set
\[
\tilde{m} = \tilde{\Ocal} \rbr{\frac{1}{\epsilon^2} \log \Ccal\rbr{\Fcal, \frac{\epsilon}{128}}},
\]
where $\tilde{\Ocal}$ hides logarithmic factors in $T$ and $\frac{1}{\delta}$.
Then, with probability at least $1-\delta$,
\begin{itemize}[leftmargin=*,itemsep=0pt,topsep=2pt]
    \item[-] For every task, Algorithm~\ref{alg:lifelong-nonparametric} outputs a predictor with excess risk at most $\epsilon$;
    \item[-] Algorithm~\ref{alg:lifelong-nonparametric} performs multi-task ERM at most $\Ocal \rbr{\Xi}$ times;
    \item[-] The sample complexity is bounded by
        \begin{align} \label{eqn:alg-sample-complexity}
         \tilde{\Ocal} \bigg( \underbrace{\frac{T}{\epsilon^2} \log \Ccal\rbr{\Fcal, \frac{\epsilon}{128}}}_{\text{cost of few-shot tests}} +  \underbrace{\frac{\Xi^2}{\epsilon^2} \sbr{\log \Ccal\rbr{\Hcal, \frac{\epsilon}{128\Xi}} + \Xi \log \Ccal\rbr{\Fcal, \frac{\epsilon}{128\Xi}}}}_{\text{overhead of lifelong representation learning}}\bigg).
        \end{align}
        In addition, the size of the memory buffer it requires is at most 
        \[
        \tilde{\Ocal} \rbr{\frac{\Xi^2}{\epsilon^2} \sbr{\log \Ccal\rbr{\Hcal, \frac{\epsilon}{128\Xi}} + \Xi \log \Ccal\rbr{\Fcal, \frac{\epsilon}{128\Xi}}}}.
        \]
\end{itemize}
\end{theorem}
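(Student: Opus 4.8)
The plan is to establish two ``good events''---one controlling the few-shot property test, one controlling each multi-task ERM call---and then run an eluder-style counting argument on top of them. For the test at task $t$, the key observation is that $\tilde{S}_t$ is drawn \emph{after} $\hat{h}$ is fixed, so conditionally on $\hat{h}$ the class $\cbr{f \circ \hat{h} : f \in \Fcal}$ is a single-task class whose loss composition has capacity at most $\Ccal(\Fcal, \cdot)$ by definition. Hence the single-task, singleton-representation specialization of Theorem~\ref{thm:baxter-uniform-convergence} (with $\Hcal_0 = \cbr{\hat{h}}$, so $\log \Ccal(\Hcal_0, \cdot) = 0$) shows $\tilde{m} = \tilde{\Ocal}(\epsilon^{-2}\log \Ccal(\Fcal, \tfrac{\epsilon}{128}))$ samples force $\abr{\widehat{\Lcal}_{\tilde{S}_t}(f \circ \hat{h}) - \Lcal_{\Pcal_t}(f \circ \hat{h})} \le \tfrac{\epsilon}{4}$ uniformly over $f \in \Fcal$. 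This conditional independence $\tilde{S}_t \perp \hat{h}$ is exactly what keeps the test cheap: it never pays for the capacity of $\Hcal$. For the ERM call with counter $n$ and estimate $N \ge n$, I would apply Corollary~\ref{cor:baxter-multitask-sample} at target accuracy $\epsilon_0 = \epsilon/n$; a direct check using $n \le N$ and monotonicity of covering numbers shows $m_N$ meets the required per-task size, so the returned $\hat{h}, \check{f}^{(t)}_1, \dots, \check{f}^{(t)}_n$ satisfy $\tfrac1n \sum_i (\Lcal_{\Pcal_{t_i}}(\check{f}^{(t)}_i \circ \hat{h}) - \kappa_{t_i}) \le \epsilon/n$, where Assumption~\ref{assum:well-specified} bounds the population optimum by $\tfrac1n\sum_i \kappa_{t_i}$. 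A union bound over the $\le T$ tests and $\Ocal(T)$ ERM slots puts all good events in force with probability $1-\delta$.

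Correctness then follows case by case. When the test succeeds, $\widehat{\Lcal}_{\tilde{S}_t}(\tilde{f}_t \circ \hat{h}) \le \kappa_t + \tfrac34\epsilon$ together with the $\tfrac{\epsilon}{4}$ uniform bound and the knowledge of $\kappa_t$ (Assumption~\ref{assum:noise-levels}) yields excess risk at most $\epsilon$. When the test fails and ERM runs, I would multiply the average guarantee by $n$ to get $\sum_{i=1}^n (\Lcal_{\Pcal_{t_i}}(\check{f}^{(t)}_i \circ \hat{h}) - \kappa_{t_i}) \le \epsilon$; since every summand is nonnegative ($\kappa_{t_i}$ is Bayes-optimal) and the current task is among them, its excess risk is at most $\epsilon$. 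Task $1$ is the $n=1$ base case.

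The heart of the argument is bounding the number of failures. Fix an ``epoch'' during which $N$ is constant and the memory is not cleared, and let $t_1, t_2, \dots$ be its failed tasks. At the failure of $t_n$, the representation in force is the $\hat{h}$ from the previous ERM, which well-predicts $t_1, \dots, t_{n-1}$ (their excess risks sum to $\le \epsilon$), while test failure certifies $\Lcal_{\Pcal_{t_n}}(f \circ \hat{h}) - \kappa_{t_n} > \tfrac{\epsilon}{2}$ for every $f \in \Fcal$. Taking $h' = \hat{h}$ and $f'_i = \check{f}_i$ as witnesses, and using $P_{f^*_{t_i} \circ h^*} = \Pcal_{t_i}$, this is exactly the statement that $(h^*, f^*_{t_n})$ is $\epsilon$-independent of $\cbr{(h^*, f^*_{t_1}), \dots, (h^*, f^*_{t_{n-1}})}$ in the sense of Definition~\ref{def:eps-independence}. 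Hence the failed tasks in any epoch form an $\epsilon$-independent sequence centered at $h^*$, of length at most $\rho_{h^*}(\Fcal, \epsilon) \le \dim(\Hcal, \Fcal, \epsilon) = \Xi$.

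Finally I would close the loop on the doubling trick and the accounting. Each epoch has at most $\min(N, \Xi)$ failures, so once $N \ge \Xi$ the memory never fills to $N$ and no further doubling occurs; thus $N \le 2\Xi$ throughout, there are $\Ocal(\log \Xi)$ epochs, and summing per-epoch failure counts over $N = 1, 2, 4, \dots, \Ocal(\Xi)$ gives $\Ocal(\Xi)$ ERM calls. The same geometric sum controls samples: the tests cost $(T-1)\tilde{m} = \tilde{\Ocal}(\tfrac{T}{\epsilon^2}\log \Ccal(\Fcal, \tfrac{\epsilon}{128}))$, while each epoch draws at most $N m_N$ fresh ERM samples, and $\sum_N N m_N$ is dominated by its last term at $N = \Theta(\Xi)$; bounding every covering number at the coarsest scale $\tfrac{\epsilon}{128\Xi}$ (valid since $N \le 2\Xi$) produces the overhead term, and the memory---holding at most $N$ datasets of size $m_N$---obeys the same bound. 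I expect the main obstacle to be the probabilistic bookkeeping under adaptivity: because the task sequence is adversarial and the memory contents, the representation $\hat{h}$, and the ERM calls are all data-dependent, each uniform-convergence statement must be arranged to be either over the full classes or conditionally independent of the randomness it is applied to (as with $\tilde{S}_t \perp \hat{h}$), and the union bound must range over the $\Ocal(T)$ \emph{potential} invocation slots rather than a realized set.
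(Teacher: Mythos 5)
Your proposal follows the paper's proof essentially step for step in every component but one: the same two families of clean events, with the few-shot test made cheap exactly as in the paper by instantiating the multi-task uniform convergence bound with the singleton class $\{\hat{h}_t\}$ (so only $\log \Ccal(\Fcal,\cdot)$ is paid, justified because $\tilde{S}_t$ is drawn after $\hat{h}_t$ is fixed); the same correctness argument (empirical threshold $\kappa_t + \tfrac{3}{4}\epsilon$ plus $\tfrac{\epsilon}{4}$-uniformity on test success; sum-level ERM guarantee with nonnegative summands on failure); the same eluder-counting step with witnesses $h' = \hat{h}$, $f'_i = \check{f}_i$ certifying that each failed task yields a tuple $(h^*, f^*_{t})$ that is $\epsilon$-independent of its predecessors centered at $h^*$; and the same doubling-trick geometric sums giving $N \le 2\Xi$, $\Ocal(\Xi)$ ERM calls, and the stated sample and memory bounds.

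The genuine gap is in the probabilistic bookkeeping for the ERM events, which you yourself flag as the main obstacle but then resolve incorrectly. You propose a union bound over ``the $\Ocal(T)$ potential invocation slots,'' but the needed event at a slot is not a fixed event: it is uniform convergence for the joint sample of whichever tasks sit in memory, and the identity of those tasks is data-dependent. Nor can you condition on the realized memory set and treat the stored samples as clean i.i.d.\ draws: the earlier stored samples $S_{t_1}, \ldots, S_{t_{n-1}}$ determine $\hat{h}$ through previous ERM calls, $\hat{h}$ determines which later tasks fail their tests, so the event $\{(t_1, \ldots, t_n) = (a_1, \ldots, a_n)\}$ is correlated with the stored samples and conditioning on it biases their law. (Your conditional-independence argument is valid for the tests precisely because each $\tilde{S}_t$ is drawn after $\hat{h}_t$ is fixed and used only once; the memory samples are reused across many data-dependent ERM calls, which is where that argument breaks.) The paper's fix is to make each event $\Bcal_N$ uniform over \emph{all} $\binom{T}{n}$ candidate task subsets for every $n \le N$; this is exactly why the explicit $m_N$ in Theorem~\ref{thm:restate-main} carries the term $\log \rbr{16 \log T \cdot \sum_{i=0}^{\log N} \binom{T}{2^i} / \delta}$, an extra $\Ocal(N \log T)$ per-task cost absorbed into the $\tilde{\Ocal}$. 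Note also that the cheaper-looking alternative of per-task single-task uniform convergence at accuracy $\epsilon/n$ would not rescue your slot-based bound: it forfeits the amortization of $\log \Ccal(\Hcal, \cdot)$ across the $n$ tasks and inflates the overhead term by a factor of $\Xi$, so the subset union bound on the joint multi-task event is really the needed device. With that substitution, the rest of your argument goes through at the advertised rates.
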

See Theorem~\ref{thm:restate-main} in Appendix~\ref{app:proofs-main} for a restatement of Theorem~\ref{thm:main} with all constants specified.
To interpret the sample complexity bound in Eq.~\eqref{eqn:alg-sample-complexity}, observe that the first term accounts for the sample complexity from property testing. It is necessary even with a priori knowledge of $h^*$ albeit a constant factor in the scale of the capacity (cf. the guarantee of \oracle in Eq.~\eqref{eqn:oracle-guarantee}). The second term reflects the overhead of learning the unknown representation. It is governed by the capacities of $\Hcal$ and $\Fcal$, as well as the task-eluder dimension $\Xi = \dim(\Hcal, \Fcal, \epsilon)$. Since $\dim(\Hcal, \Fcal, \epsilon)$ measures the complexity of the model class under $\PP$ and $\ell$, it does not grow with $T$. This highlights the benefit of representation transfer when compared to the baseline in Eq.~\eqref{eqn:baseline-guarantee}.

In particular, as $T$ approaches infinity in a truly {\em lifelong} setting, the cost of representation learning becomes negligible, and the sample complexity nearly matches that of \oracle. Similarly, the space complexity (size of the memory buffer) also remains bounded as $T$~grows.

\section{Examples}
\label{sec:examples}

We now provide two concrete examples of how our results can be specialized in regression and classification tasks. 
Let $\Xcal = \cbr{x \in \RR^d: \nbr{x} \le 1}$. We focus on a class of low-dimensional linear representations, $\Hcal = \cbr{x \mapsto B^\top x: B \in \RR^{d \times k}, B^\top B = I_k}$, where $k \ll d$. This class has been widely studied to demonstrate the benefit of representation transfer \citep[e.g.,][]{balcan2015efficient,hu2021near, tripuraneni2021provable,du2021fewshot}.
In the following, we sometimes abuse notation and identify a function (either representation or prediction layer) with its parameter. 
Due to space constraints, our proofs are deferred to Appendix~\ref{app:proofs-examples}.

\paragraph{Linear regression with noise.} 
Let $\Fcal^{\mathrm{lin}} := \cbr{z \mapsto w^\top z: w \in \RR^k, \nbr{w} \le \frac{1}{2}}$ be a class of linear functions, and $\Ycal = [-1,1]$. Consider the following probabilistic model $\PP$: $P_X$ over $\Xcal$ satisfies $I \precsim \EE_{x \sim P_X} \sbr{xx^\top} \precsim I$. For any $h$ and $f$, given an input $x$, $y = (f \circ h)(x) + \eta$, where $\eta$ is independently drawn from a shared noise distribution with support $[-\frac12,\frac12]$, mean zero, and variance $\kappa$. The $T$ tasks are well-specified with ground truth representation $B^*$ and prediction layers $w_1^*, \ldots, w_T^*$. Let $\ell(y', y) := \frac{1}{4} (y' - y)^2$ which has range $[0,1]$.

\begin{proposition}
\label{prop:example-linear-regression}
Let $\epsilon \in (0,1)$. We have
\[
\log \Ccal(\Fcal^{\mathrm{lin}}_{\ell}, \epsilon) \le \Ocal \Big(k \log\frac{1}{\epsilon}\Big), \ \log \Ccal_{\Fcal^{\mathrm{lin}}_{\ell}}(\Hcal, \epsilon) \le \Ocal \Big(dk \log \frac{1}{\epsilon} \Big), \
\dim_{\PP, \ell}(\Hcal, \Fcal^{\mathrm{lin}}, \epsilon) \le \Ocal \big(k \log \frac{1}{\epsilon}\big).
\]
\end{proposition}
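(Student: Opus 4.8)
\emph{Overview.} The three bounds split into two routine covering-number estimates and one more substantial eluder-dimension estimate, and I would treat them in that order.

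\emph{Covering numbers.} For the capacity of $\Fcal^{\mathrm{lin}}$, I would show that the loss-composed map $w \mapsto f_{\ell,w}$, with $f_{\ell,w}(z,y) = \frac14(w^\top z - y)^2$, is Lipschitz in $w$ uniformly over the support. Since $\nbr{z} = \nbr{B^\top x} \le \nbr{x} \le 1$ (orthonormal columns), $\abr{y}\le 1$, and $\nbr{w}\le\frac12$, a difference-of-squares expansion gives $\abr{f_{\ell,w} - f_{\ell,w'}} \le \frac34 \nbr{w-w'}$ pointwise, hence $d_Q(f_{\ell,w},f_{\ell,w'}) \le \frac34\nbr{w-w'}$ for every $Q$. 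An $\epsilon$-cover of $\Fcal^{\mathrm{lin}}_\ell$ then follows from a Euclidean $\Theta(\epsilon)$-cover of the radius-$\frac12$ ball in $\RR^k$, whose log-cardinality is $\Ocal(k\log\frac1\epsilon)$ by the standard volumetric bound. For the capacity of $\Hcal$, the same expansion comparing $B^\top x$ and $B'^\top x$ yields $\sup_{f_\ell}\abr{f_\ell(B^\top x,y)-f_\ell(B'^\top x,y)} \lesssim \nbr{B-B'}_{\mathrm{op}}\nbr{x}$, so $d_{P,\Fcal_\ell}(h,h') \lesssim \nbr{B-B'}_{\mathrm{op}} \le \nbr{B-B'}_F$ uniformly in $P$. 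As the Stiefel manifold $\cbr{B: B^\top B = I_k}$ lies in the Frobenius ball of radius $\sqrt k$ in $\RR^{dk}$, its $\Theta(\epsilon)$-cover has log-cardinality $\Ocal(dk\log\frac{\sqrt k}{\epsilon}) = \Ocal(dk\log\frac1\epsilon)$.

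\emph{Setting up the eluder bound.} Fix any center $h \leftrightarrow B$ with column space $V$ ($\dim V = k$) and identify each layer $f_i \leftrightarrow w_i$ with the effective $d$-dimensional parameter $\theta_i = Bw_i \in V$, $\nbr{\theta_i}\le\frac12$. Under the well-specified noise model the squared loss is calibrated and the noise cancels in the excess risk: for any competitor $(h',f') \leftrightarrow \theta' = B'w'$, $\EE_{P_{f_i\circ h}}[\ell(f'\circ h') - \ell(f_i\circ h)] = \frac14(\theta'-\theta_i)^\top\Sigma(\theta'-\theta_i)$ with $\Sigma = \EE[xx^\top]$, which by $I\precsim\Sigma\precsim I$ is equivalent up to universal constants to $\frac14\nbr{\theta'-\theta_i}^2$. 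Writing $V' = \mathrm{col}(B')$ and noting that the minimizing $f'_i$ is the norm-feasible projection of $\theta_i$ onto $V'$, $\epsilon$-independence of $(h,f_n)$ becomes: there is a $k$-dimensional subspace $V'$ with $\sum_{i<n}\mathrm{dist}(\theta_i,V')^2 \lesssim \epsilon$ but $\mathrm{dist}(\theta_n,V')^2 \gtrsim \epsilon$.

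\emph{The reduction (main step).} Two observations collapse this subspace condition to the ordinary $k$-dimensional linear eluder condition. First, since every $\theta_i\in V$, replacing $V'$ by $P_V V' \subseteq V$ leaves each $\mathrm{dist}(\theta_i,V')$ unchanged, because the $V^\perp$-component of any $v'\in V'$ only increases $\nbr{\theta_i - v'}$ (Pythagoras); this moves the whole problem inside $V\cong\RR^k$. Second, I would upgrade the witnessing subspace to a hyperplane: as $\mathrm{dist}(\theta_n,P_VV')\gtrsim\sqrt\epsilon$ forces $\dim P_VV'\le k-1$, extend $P_V V'$ to a hyperplane of $V$ whose normal is aligned with the residual of $\theta_n$; enlarging the subspace only shrinks the predecessor distances (so the $\lesssim\epsilon$ bound survives), while the aligned choice preserves $\mathrm{dist}(\theta_n,\cdot)\gtrsim\sqrt\epsilon$. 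With a hyperplane of unit normal $u_n$, $\mathrm{dist}(\cdot,\cdot)^2 = \langle u_n,\cdot\rangle^2$, so $\epsilon$-independence reads $\sum_{i<n}\langle u_n,\theta_i\rangle^2\lesssim\epsilon$ and $\langle u_n,\theta_n\rangle^2\gtrsim\epsilon$, which is exactly $\epsilon$-independence for $\frac12$-bounded linear predictors in $\RR^k$. The length of such a sequence is bounded by the eluder dimension of that linear class, namely $\Ocal(k\log\frac1\epsilon)$ via the standard elliptical-potential argument, giving $\dim_{\PP,\ell}(\Hcal,\Fcal^{\mathrm{lin}},\epsilon)\le\Ocal(k\log\frac1\epsilon)$.

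\emph{Main obstacle.} I expect the projection and hyperplane-upgrade steps to be the crux: the two covering estimates are routine Lipschitz-plus-volume computations, whereas correctly handling the existential subspace $V'$—showing it may be taken inside $V$ and then as a hyperplane without destroying either side of the two-sided independence inequality—is where the geometry must be done carefully. A secondary subtlety is bookkeeping the universal constants from $I\precsim\Sigma\precsim I$ together with the $\epsilon$ versus $\frac\epsilon2$ gap in Definition~\ref{def:eps-independence}, so that the constant-factor metric distortion still leaves a genuine multiplicative separation to feed into the linear eluder bound; this only affects constants inside the logarithm and is absorbed by $\Ocal(\cdot)$.
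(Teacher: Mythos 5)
Your covering-number arguments are sound (you use a parametric Lipschitz reduction plus volumetric covers where the paper passes to $L^1$ pseudo-metrics and invokes Haussler's Theorem~11; both give the stated rates, up to a harmless extra $\log\sqrt{k}$ inside your Stiefel-cover logarithm), and your translation of $\epsilon$-independence into subspace distances is also correct: since $\nbr{\theta_i}\le\frac12$, the unconstrained projection of $\theta_i$ onto $V'=\linearspan(B')$ is norm-feasible, so the constrained minimum equals $\mathrm{dist}(\theta_i,V')$ (the paper needs its Lemma~\ref{lem:subspace-distance-constrained} in this form only for the norm-lower-bounded classification class). The genuine gap is the first step of your ``reduction (main step)'': replacing $V'$ by $P_V V'$ does \emph{not} leave the distances unchanged. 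Pythagoras gives only $\mathrm{dist}(\theta_i, P_V V') \le \mathrm{dist}(\theta_i, V')$, which is the useful direction for the predecessors but the fatal direction for $\theta_n$: the lower bound $\mathrm{dist}(\theta_n, V')\gtrsim\sqrt{\epsilon}$ need not survive the projection. Concretely, take $d=3$, $k=2$, $V=\linearspan(e_1,e_2)$, $V'=\linearspan\rbr{e_1,\ (e_2+e_3)/\sqrt{2}}$, $\theta_n=\frac12 e_2$: then $\mathrm{dist}(\theta_n,V')=\frac{\sqrt{2}}{4}$ yet $P_V V'=V$, so $\mathrm{dist}(\theta_n,P_V V')=0$, and moreover $\dim P_V V' = k$, so your claimed dimension drop (and hence the hyperplane-upgrade step, which is itself fine once you have a proper subspace of $V$) never gets off the ground. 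Such witnesses genuinely occur: add the predecessor $\theta_1=\frac12 e_1\in V'$ and this is a legitimate independence witness that your construction cannot process.

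The repair keeps your architecture but projects the witness \emph{direction} rather than the witness \emph{subspace}. Let $r=P_{V'}^{\perp}\theta_n$, so $\nbr{r}=\mathrm{dist}(\theta_n,V')\gtrsim\sqrt{\epsilon}$, and set $u'=r/\nbr{r}$. Since $u'\in (V')^{\perp}$, $\langle u',\theta_i\rangle=\langle u',P_{V'}^{\perp}\theta_i\rangle$, hence $\sum_{i<n}\langle u',\theta_i\rangle^2\le\sum_{i<n}\mathrm{dist}(\theta_i,V')^2\lesssim\epsilon$, while $\langle u',\theta_n\rangle=\nbr{r}\gtrsim\sqrt{\epsilon}$. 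Now take $u=P_V u'$ \emph{without renormalizing}: inner products with the $\theta_i\in V$ are unchanged and $\nbr{u}\le 1$ (renormalizing would rescale both sides and is unnecessary). Identifying $V\cong\RR^k$ and $\theta_i\leftrightarrow w_i$, Cauchy--Schwarz in the $V_{n-1}=\sum_{i<n}w_iw_i^{\top}+\epsilon I$ norm gives $\nbr{w_n}^2_{V_{n-1}^{-1}}\ge \langle u,w_n\rangle^2/\rbr{\sum_{i<n}\langle u,w_i\rangle^2+\epsilon\nbr{u}^2}\gtrsim 1$, and the elliptical potential lemma yields $\tau\lesssim k\log\frac{1}{\epsilon}$ as you intended. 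With this fix your route is correct and genuinely different from the paper's Lemma~\ref{lem:task-eluder-linear}, which never extracts a witness direction: it expresses $\nbr{w_n}^2_{V_{n-1}^{-1}}$ via a ridge/Woodbury identity, decomposes $w_n=W_{n-1}\alpha^{\star}+z$, and applies H\"older to $\nbr{P_{B'}^{\perp}BW_{n-1}\alpha^{\star}}$. The paper's argument has the advantage of handling general exponents $p\in[1,2]$ (needed for its $0$-$1$-loss example), whereas your dual-direction argument, once repaired, is arguably cleaner for the $p=2$ case at hand.
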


\paragraph{Classification with logistic regression.}
Let $\Ycal = \cbr{0, 1}$. Denote by $\sigma(v) = \frac{1}{1 + e^{-v}}$ the logistic sigmoid function and let $\Fcal_{\mathrm{log}} := \cbr{z \mapsto \sigma(w^\top z): w \in \RR^k, \nbr{w} \le \frac 14}$. Data-generating distributions in $\PP$ are defined as follows: $P_X$ satisfies $I \precsim \EE[xx^T] \precsim I$, and for each $f$ and $h$, $P(y = 1\ |\ x; f \circ h) = \sigma((f \circ h)(x))$. The $T$ tasks are well specified by $B^*$ and $w_1^*, \ldots, w_T^*$. Let $\ell(y',y) = -y \log y' - (1 - y) \log ( 1- y')$.
\begin{proposition}
\label{prop:example-logistic}
Let $\epsilon \in (0,1)$. We have
\[
\log \Ccal(\Fcal^{\mathrm{log}}_\ell, \epsilon) \le \Ocal \Big(k \log\frac{1}{\epsilon}\Big), \ \log \Ccal_{\Fcal^{\mathrm{log}}_\ell}(\Hcal, \epsilon) \le \Ocal \Big(dk \log \frac{1}{\epsilon} \Big), \
\dim_{\PP, \ell}(\Hcal, \Fcal^{\mathrm{log}}, \epsilon) \le \Ocal \big(k \log \frac{1}{\epsilon}\big).
\]
\end{proposition}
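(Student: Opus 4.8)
The plan is to prove the three bounds separately, treating the two capacity estimates as routine consequences of the Lipschitz structure of the logistic loss and the task-eluder bound as the substantive step.

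For the capacity of $\Fcal^{\mathrm{log}}$, I observe that $f_\ell(z,y)=\ell(\sigma(w^\top z),y)$ has derivative $\sigma(w^\top z)-y\in[-1,1]$ in the scalar argument $w^\top z$, so $\ell(\sigma(\cdot),y)$ is $1$-Lipschitz. Since $z=B^\top x$ with $\nbr{z}\le\nbr{x}\le 1$, any $w,w'$ satisfy $\abr{f_\ell(z,y)-f'_\ell(z,y)}\le\abr{(w-w')^\top z}\le\nbr{w-w'}$ uniformly in $(z,y)$, hence in $L^1(Q)$ for every $Q$. An $\epsilon$-net of the ball $\cbr{w\in\RR^k:\nbr{w}\le\tfrac14}$ therefore induces an $\epsilon$-cover of $\Fcal^{\mathrm{log}}_\ell$, giving $\log\Ccal(\Fcal^{\mathrm{log}}_\ell,\epsilon)\le\Ocal(k\log\tfrac1\epsilon)$. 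For the capacity of $\Hcal$, I instead fix $w$ and vary the representation: $\sup_{\nbr{w}\le 1/4}\abr{\ell(\sigma(w^\top B^\top x),y)-\ell(\sigma(w^\top B'^\top x),y)}\le\tfrac14\nbr{(B-B')^\top x}\le\tfrac14\nbr{B-B'}_{\mathrm{op}}$, so $d_{P,\Fcal^{\mathrm{log}}_\ell}(B,B')\le\tfrac14\nbr{B-B'}_{\mathrm{op}}$ for every $P$, and covering the set of $d\times k$ matrices of operator norm at most $1$ yields $\log\Ccal_{\Fcal^{\mathrm{log}}_\ell}(\Hcal,\epsilon)\le\Ocal(dk\log\tfrac1\epsilon)$.

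For the task-eluder dimension, the first step is to convert excess risk into a quadratic form in the parameters. By calibration, for a well-specified task with true predictor $\sigma(w_i^\top B^\top x)$, the excess risk of $\sigma(w_i'^\top B'^\top x)$ equals $\EE_x[\mathrm{KL}(\mathrm{Ber}(\sigma(w_i^\top B^\top x))\,\|\,\mathrm{Ber}(\sigma(w_i'^\top B'^\top x)))]$. All sigmoid arguments lie in $[-\tfrac14,\tfrac14]$, where a second-order expansion gives $\mathrm{KL}(\mathrm{Ber}(\sigma(u))\|\mathrm{Ber}(\sigma(v)))\asymp(u-v)^2$ with absolute constants; combined with $I\precsim\EE[xx^\top]\precsim I$, the excess risk of task $i$ under $(B',w_i')$ is $\asymp\nbr{Bw_i-B'w_i'}^2$. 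Writing $\theta_i:=Bw_i\in\mathrm{col}(B)=:U$ with $\nbr{\theta_i}\le\tfrac14$, and noting the norm constraint on the fitted $w_i'$ is inactive, the best fit under a fixed $B'$ is the projection onto $\mathrm{col}(B')$. Thus the independence condition of Definition~\ref{def:eps-independence} becomes: there is a $k$-dimensional subspace $V=\mathrm{col}(B')$ with $\sum_{i<n}\mathrm{dist}(\theta_i,V)^2\lesssim\epsilon$ but $\mathrm{dist}(\theta_n,V)^2\gtrsim\epsilon$. Since every $\theta_i\in U$, I replace $V$ by the $k\times k$ PSD matrix $\tilde A$ representing $\Pi_U\Pi_{V^\perp}\Pi_U$ on $U\cong\RR^k$, which satisfies $0\preceq\tilde A\preceq I_k$ and $\mathrm{dist}(\theta_i,V)^2=w_i^\top\tilde A w_i$; so independence reads: there exists $\tilde A\preceq I_k$ with $\sum_{i<n}w_i^\top\tilde A w_i\lesssim\epsilon$ and $w_n^\top\tilde A w_n\gtrsim\epsilon$.

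The main obstacle is to bound the length of such a sequence by $\Ocal(k\log\tfrac1\epsilon)$ via an elliptical-potential argument. Set $G_n:=\tfrac\epsilon k I_k+\sum_{i<n}w_iw_i^\top$. From $\sum_{i<n}w_i^\top\tilde A w_i=\mathrm{tr}(\tilde A G_n)-\tfrac\epsilon k\mathrm{tr}(\tilde A)\lesssim\epsilon$ and $\mathrm{tr}(\tilde A)\le k$ I get $\mathrm{tr}(\tilde A G_n)\lesssim\epsilon$; then the identity $w_n^\top\tilde A w_n=(G_n^{-1/2}w_n)^\top(G_n^{1/2}\tilde A G_n^{1/2})(G_n^{-1/2}w_n)$ with $\nbr{G_n^{1/2}\tilde A G_n^{1/2}}_{\mathrm{op}}\le\mathrm{tr}(\tilde A G_n)$ yields $\epsilon\lesssim w_n^\top\tilde A w_n\le\mathrm{tr}(\tilde A G_n)\cdot w_n^\top G_n^{-1}w_n\lesssim\epsilon\,w_n^\top G_n^{-1}w_n$, so $w_n^\top G_n^{-1}w_n=\Omega(1)$ at each independent step. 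The elliptical potential lemma then bounds the number of such steps by $\Ocal(\log(\det G_{\mathrm{final}}/\det(\tfrac\epsilon k I)))=\Ocal(k\log(1+\tfrac m\epsilon))$, where $m$ is the sequence length; resolving this self-bounding inequality gives $m=\Ocal(k\log\tfrac1\epsilon)$, which is the claimed $\dim_{\PP,\ell}(\Hcal,\Fcal^{\mathrm{log}},\epsilon)$. The points I expect to require the most care are the two-sided $\mathrm{KL}\asymp$ squared-distance comparison with constants uniform over the whole sequence, and the reduction from an arbitrary subspace witness $V\subset\RR^d$ to a PSD form on the fixed $k$-dimensional subspace $U$, on which the entire potential argument rests.
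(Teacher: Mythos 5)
Your proposal is correct, and it reaches all three bounds by a genuinely different route than the paper's proof. For the two capacity bounds, the paper reduces the pseudo-metrics $d_Q$ and $d_{P,\Fcal^{\mathrm{log}}_\ell}$ to $L^1$ metrics on the function classes via the $3$-Lipschitzness of $\ell$ in its first argument and then invokes Haussler's covering theorem \citep[Theorem 11]{haussler1992decision}; you instead dominate both pseudo-metrics uniformly (in fact in sup norm) by the Euclidean and operator-norm metrics on the parameters, via $\abr{f_\ell(z,y)-f'_\ell(z,y)} \le \nbr{w-w'}$ and $\frac14 \nbr{B-B'}_{\mathrm{op}}$, and then take volumetric $\epsilon$-nets of the parameter balls. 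Your route is more elementary and exploits the smooth finite-dimensional parametrization directly; Haussler's machinery is what one would need absent such a parametrization, and the rates agree. For the task-eluder dimension, both arguments first establish the two-sided comparison between excess risk and $\nbr{B'w'-Bw}^2$---the paper via local strong convexity and smoothness of the population risk $L(\theta)$ (Hessian $\EE[\sigma'(x^\top\theta)xx^\top]$ bounded above and below on the domain $\abr{x^\top\theta}\le\frac14$), you via the second-order expansion of the Bernoulli KL divergence on the same interval; these are the same computation in different clothing, and your constants are uniform for the same reason the paper's are. The real divergence is in the potential argument. The paper's Lemma~\ref{lem:task-eluder-linear} decomposes $w_n = W_{n-1}\alpha^{\star} + z$ via the ridge/Woodbury identity (Lemma~\ref{lem:ridge-woodbury-identity}), controls the cross term by H\"older's inequality, and needs Lemma~\ref{lem:subspace-distance-constrained} to relate the constrained minimum $\min_{\nbr{w'}\in[\underline{b},\overline{b}]}\nbr{B'w'-Bw_n}$ to $\nbr{P_{B'}^\perp Bw_n}$. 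You instead encode each witness subspace $V$ as a PSD matrix $0 \preceq \tilde A = B^\top \Pi_{V^\perp} B \preceq I_k$ with $\mathrm{dist}(\theta_i,V)^2 = w_i^\top \tilde A w_i$, and use $\nbr{G_n^{1/2}\tilde A G_n^{1/2}}_{\mathrm{op}} \le \mathrm{tr}(\tilde A G_n) \lesssim \epsilon$ to force $w_n^\top G_n^{-1}w_n \gtrsim 1$ at each independent step; your observation that the constraint $\nbr{w}\le\frac14$ is a ball---so the projection of $\theta_i$ onto $\mathrm{col}(B')$ is always feasible and the constraint inactive---is exactly what lets you bypass the constrained-projection lemma. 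The trade-off: the paper's lemma is stated for general exponents $p\in[1,2]$ and annulus constraints $\underline{b}\le\nbr{w}\le\overline{b}$, which it reuses for the $0$-$1$ classification example (Proposition~\ref{prop:example-classification}, where $p=1$ and $\nbr{w}=1$, and where your trace trick would not directly apply since the conditions involve unsquared distances); within the $p=2$, ball-constrained logistic setting, your PSD-witness argument is arguably cleaner. Your choice of regularization $\frac{\epsilon}{k}I$ in $G_n$ differs from the paper's $\epsilon^{2/p}I$, but the elliptical potential lemma and the self-bounding resolution (Lemmas~\ref{lem:elliptical-potential} and~\ref{lem:elliptical-potential-bound-t}) go through with the same $\Ocal(k\log\frac{1}{\epsilon})$ conclusion.
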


In these examples, it follows straightforwardly from Theorem~\ref{thm:main} that the sample complexity of Algorithm~\ref{alg:lifelong-nonparametric} is $\tilde{\Ocal} \rbr{\rbr{kT + dk^3}/{\epsilon^2}}$. When $T$ is large, this bound is dominated by $\tilde{\Ocal} \rbr{kT/\epsilon^2}$ which only depends on $k \ll d$. This highlights the benefit of learning and leveraging the shared structure. 
We note that our goal here is to showcase the applicability of Theorem~\ref{thm:main}, rather than deriving the sharpest bound compared to what could be achieved with specialized techniques. In Appendix~\ref{app:proofs-examples}, we also discuss how our analysis can apply to classification with random classification noise and the $0$-$1$ loss, via a similar argument to Theorem~\ref{thm:main} using multi-task ERM guarantees based on VC~dimension.

\section{Related work}
\label{sec:related-work}

Multi-task learning \citep{caruana1997multitask} has been extensively studied in the literature; see \citep{zhang2021survey} for a survey. Lifelong learning can be traced back to \citep{thrun1995lifelong}. Since then, it has seen applications in a wide range of domains, such as robotics \citep{lowrey2018plan}, computer vision \citep{rebuffi2017icarl}, and natural language processing \citep{de2019episodic}. See \citep{sodhani2022introduction} for an introduction to various approaches to lifelong learning. Techniques from MTL and LTL have also been extended to lifelong learning \citep{finn2019online}. Much of the literature focuses on mitigating catastrophic forgetting \citep{mccloskey1989catastrophic}, whereas our main objective is to provide a theoretical study on the benefit of representation transfer.

In particular, our work builds upon a line of research that studies sample complexity guarantees of MTL and LTL where tasks share a common representation. \citet{baxter2000model} study a general framework for inductive bias learning and establish guarantees based on covering numbers of the hypothesis space family. \citet{maurer2016benefit} use Gaussian complexities with a chain rule \citep{maurer2016chain} to derive data-dependent bounds. \citet{tripuraneni2020theory} introduce a notion of task diversity which characterizes when transfer learning of representations can be achieved in LTL. \citet{xu2021representation} study the setting where source and target tasks may use different classes of prediction layers. \citet{watkins2023optimistic} establish optimistic rates that adapt to the difficulty of a target task. These papers all consider the composite model $f \circ h$ that we study in this work.

MTL and LTL with shared linear representations have been studied \citep{tripuraneni2021provable,du2021fewshot,chen2022active,aliakbarpour2024metalearning}. \citet{pentina2015multi} study MTL and lifelong learning of kernels. Sequential and parallel representation transfer have also been explored for linear bandits \citep{hu2021near,yang2021impact,qin2022non,duong2024beyond}.

Our work is most directly related to \citep{balcan2015efficient} and \citep{cao2022provable}. Both papers focus on lifelong learning for binary classification in the noiseless, realizable setting with linear prediction layers. \citet{balcan2015efficient} study low-dimensional linear representations, whereas \citet{cao2022provable} also consider one-hidden-layer neural networks. We note that the guarantee for nonlinear representations in \citep{cao2022provable} relies on an assumption (Assumption 1 therein) that for two maps $u$ and $v$,
$d(u, v) \lesssim \Pr_{P_X} \rbr{\sign(u(x)) \neq \sign(v(x))} \lesssim d(u,v)$,
where $d(\cdot, \cdot)$ denotes the angle. While this is true for linear maps when $P_X$ is isotropic, log-concave, the first inequality may not hold in general for one-hidden-layer neural
networks under the same assumption. Algorithmically, both papers dynamically expand their representation with \citep{cao2022provable} additionally performing refinement. In contrast, we use multi-task ERM to update our representation when it becomes~insufficient.

\citet{li2022provable} propose an architecture-based algorithm for continual representation learning. They provide sample complexity bounds under an assumption of {\em sequential task diversity}: earlier tasks are diverse enough to ensure small representational mismatch for new tasks. Similar assumptions have been made in \citep{tripuraneni2020theory} for LTL and \citep{qin2022non} for sequential transfer in linear bandits. In contrast, we consider an online setting where the task sequence may be chosen by an adversary, and the learner needs to carefully manage when to request more data.

\citet{alquier2017regret} study lifelong representation learning in an \textit{online-within-online} setting: tasks arrive sequentially, and within each task, data points are also revealed sequentially with the learner predicting each instance. They also consider a \textit{batch-within-online} setting, where data from each task is made available all at once. \citet{alquier2017regret} study and establish compound regret bounds of the learner. In contrast, we allow the learner to request samples over multiple rounds, and require that, with high probability, it outputs for every task a predictor with excess risk at most $\epsilon$.

\section{Empirical validation}
\label{sec:empirical-validation}

In this section, we empirically validate our theoretical results using synthetic and semi-synthetic data. In particular, we focus on the following questions:
\begin{enumerate}[leftmargin=*,itemsep=2pt,topsep=4pt]
    \item Is the number of updates to the internal representation maintained by Algorithm~\ref{alg:lifelong-nonparametric} indeed bounded by the task-eluder dimension, $\mathcal{\dim(\mathcal{H}, \mathcal{\Fcal}, \epsilon)}$?

    \item In practice, does Algorithm~\ref{alg:lifelong-nonparametric} require only a limited number of multi-task ERM calls (representation updates) for more expressive function classes beyond the examples given in Section~\ref{sec:examples}?
\end{enumerate}
To address these questions, we performed experiments in three settings. Implementation details are provided in Appendix~\ref{app:implementation-details}.

\subsection{Synthetic linear and semi-synthetic MNIST experiments}

\paragraph{Synthetic logistic regression tasks.}  We first consider a binary logistic regression setting with synthetic data. 
Let $\theta_1^*, \ldots, \theta_T^* \in \mathbb{R}^d$ denote the parameters associated with $T$ tasks, and let $\beta > 0$ be a parameter that governs the noise level of the tasks. There exists a $k$-dimensional shared representation, given by a semi-orthogonal matrix $B^* \in \mathbb{R}^{d \times k}$, such that for each $t \in T$, $\theta^*_t = B^* w_t^*$ for some $w^*_t \in \mathbb{R}^k$, where $\| w^*_t \| = \beta$. For each task $t$, the covariates $x \in \mathbb{R}^d$ are drawn from $\mathcal{N}(0, I)$, and the labels are generated such that $\Pr(y = 1 | x) = \sigma( x^\top \theta^*_t )$, where $\sigma$ denotes the logistic sigmoid function. We set $\ell$ to be the binary cross-entropy loss.

We set the parameters to $d = 10$, $T = 50$, and target excess risk $\epsilon = 0.05$. We varied $k \in \cbr{3,5,8}$ and $\beta \in \cbr{1,4,8}$, and for each configuration we ran $10$ trials, with $B^*$ and $w^*_t$'s generated randomly. We implemented Algorithm~\ref{alg:lifelong-nonparametric} with $N$ initialized as $k \log \frac{1}{\epsilon}$, based on Proposition~\ref{prop:example-logistic}. We set 
\[
m_N = \frac{1}{\epsilon^2} (dk + kN) \log \frac{1}{\epsilon}, \qquad \text{and} \qquad \tilde{m} = \frac{k}{\epsilon^2} \log \frac{1}{\epsilon}.
\]
Note that the choice of $m_N$ is smaller than what Theorem~\ref{thm:main} requires, $\tilde{O}\rbr{\frac{k}{\epsilon^2} (dk + kN)}$, which highlights the practical efficiency of our algorithm. 

To certify correctness, we evaluated each predictor output on a held-out dataset of size $\frac{32}{\epsilon^2}$ to verify that its excess risk is at most $\epsilon$. Note that this step is used only for evaluation and is not part of the algorithm.

\begin{figure}[t]
  \centering
  \setlength{\tabcolsep}{2pt}
  \begin{tabular}{ccc}

    \begin{subfigure}[t]{0.33\linewidth}
      \centering
      \includegraphics[width=\linewidth]{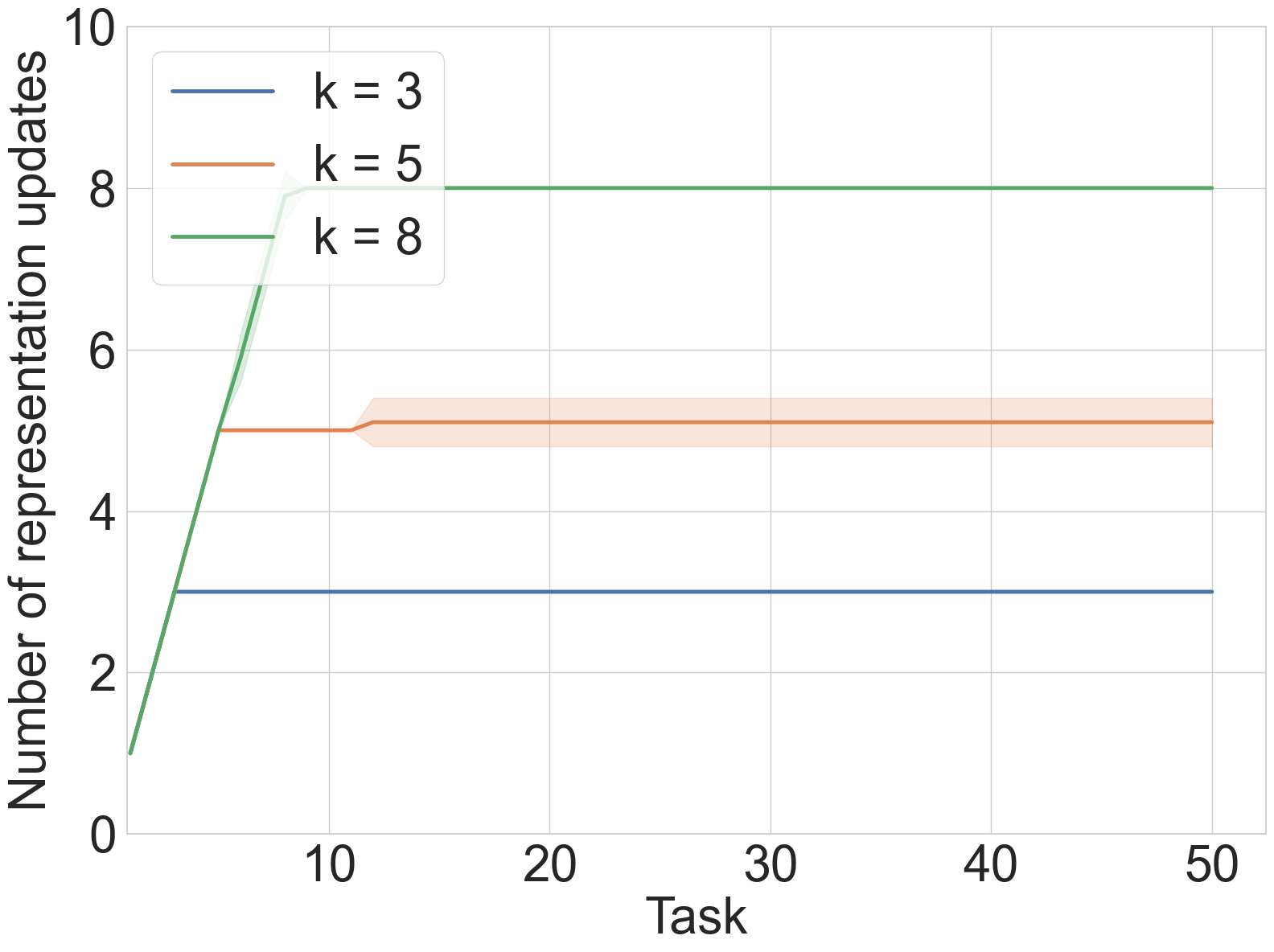}\\[3pt]
      \includegraphics[width=\linewidth]{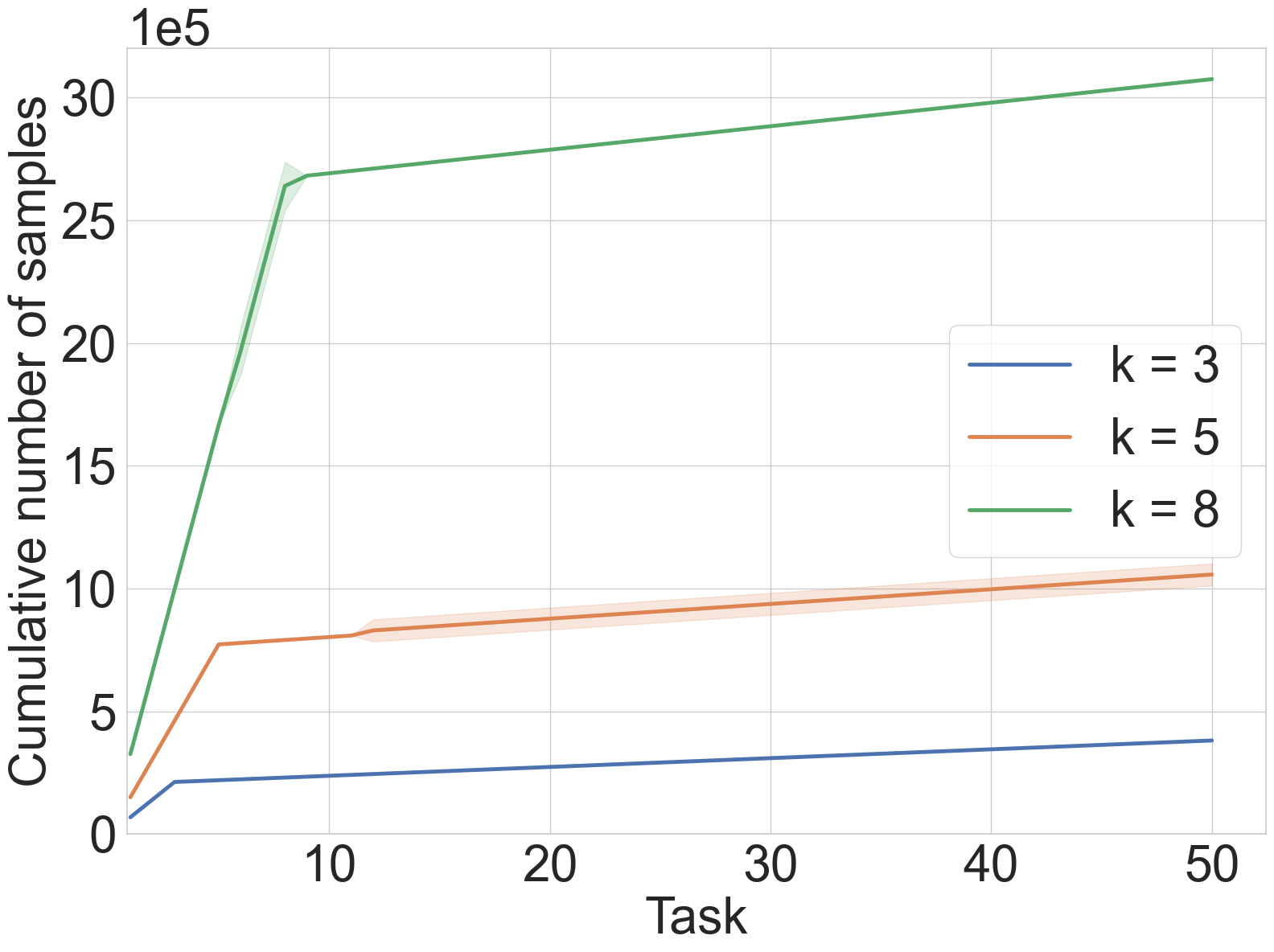}
      \caption{Low noise ($\beta = 8$)}
    \end{subfigure}
    
    \begin{subfigure}[t]{0.33\linewidth}
      \centering
      \includegraphics[width=\linewidth]{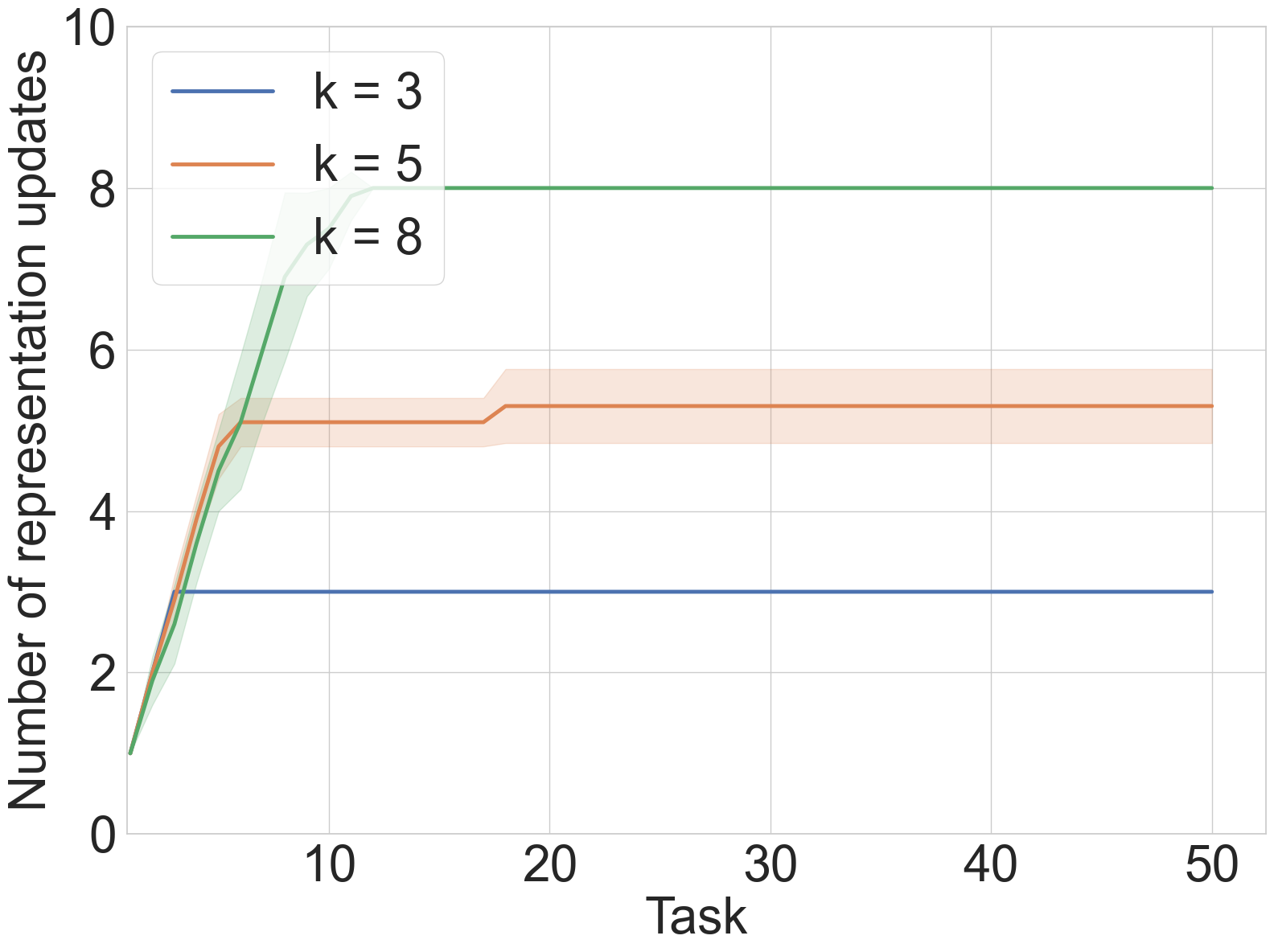}\\[3pt]
      \includegraphics[width=\linewidth]{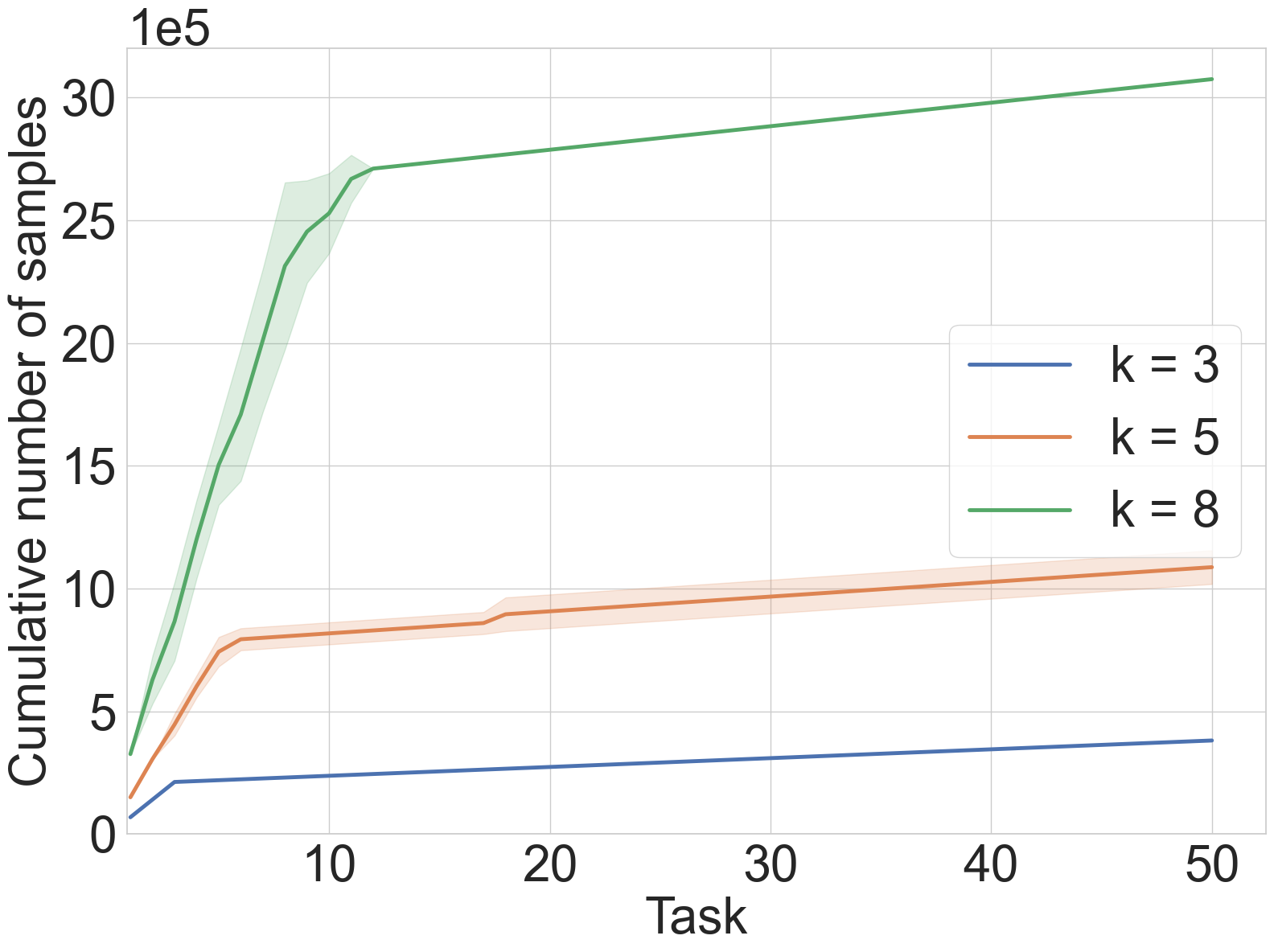}
      \caption{Medium noise ($\beta = 4$)}
    \end{subfigure}
    
    \begin{subfigure}[t]{0.33\linewidth}
      \centering
      \includegraphics[width=\linewidth]{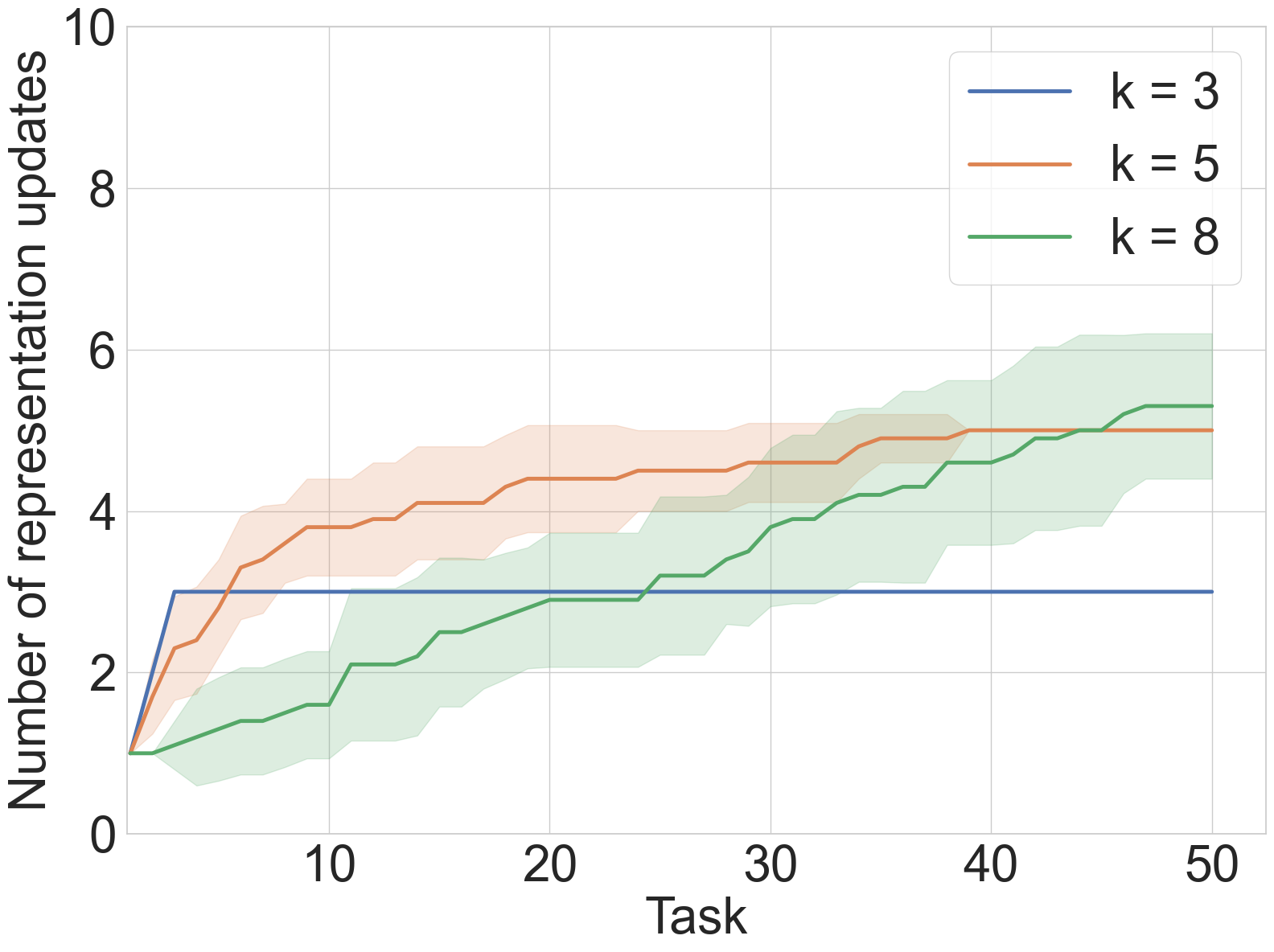}\\[3pt]
      \includegraphics[width=\linewidth]{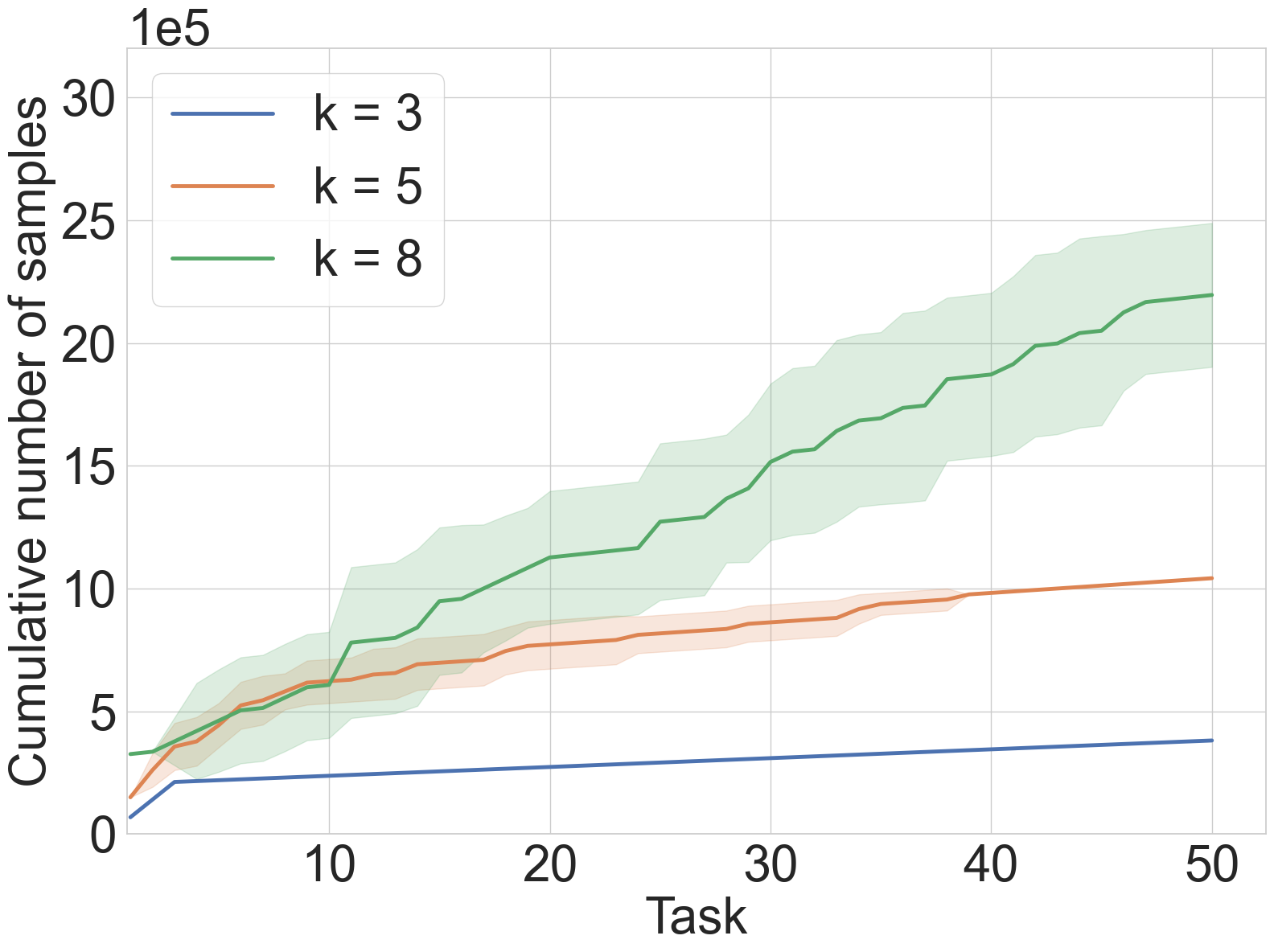}
      \caption{High noise ($\beta = 1$)}
    \end{subfigure}

  \end{tabular}
  \vspace{4pt}
  \caption{
    Results under different noise levels. For each noise level, the top plot shows the average cumulative number of samples used over $50$ tasks for each value of $k$, and the bottom plot shows how the cumulative number of representation updates evolve over the tasks. Shaded regions denote one standard deviation.
  }
  \label{fig:linear-exp}
\end{figure}

\begin{table}
\centering
\caption{Total number of representation updates performed by Algorithm~\ref{alg:lifelong-nonparametric} at each noise level.}
\label{tab:linear-exp-results}
\begin{tabular}{@{}lccc@{}} %
\specialrule{1.2pt}{0pt}{2pt}
 & Low noise ($\beta = 8$) & Medium noise ($\beta = 4$) & High noise ($\beta = 1$) \\
\midrule
$k=3$                       & $3 \pm 0$     & $3 \pm 0$     & $3 \pm 0$ \\
$k=5$                       & $5.1 \pm 0.3$ & $5.3 \pm 0.46$ & $5 \pm 0$ \\
$k=8$                       & $8.0 \pm 0$   & $8.0 \pm 0$   & $5.3 \pm 0.9$ \\
\specialrule{1.2pt}{2pt}{0pt}
\end{tabular}
\end{table}

Table~\ref{tab:linear-exp-results} reports the total number of representation updates performed by our algorithm (including task $1$), and the top row of Figure~\ref{fig:linear-exp} shows how the number of updates increases over tasks for each configuration. Across all regimes, the number of updates closely tracked the dimension of the representation $k$. This is consistent with Theorem~\ref{thm:main}. 
An interesting observation occurs in the high-noise regime, where the Bayes optimal risk is larger and so the absolute target risk is higher. The learner appeared to have not needed to fully uncover the underlying representation within $T = 50$ tasks when $k = 8$, as is seen in Figure~\ref{fig:linear-exp} which shows the cumulative number of samples and number of representation updates over the tasks for each configuration. 

\paragraph{Binary classification with MNIST digits.}
We also evaluated our algorithm in a semi-synthetic setting based on the MNIST handwritten digit database \citep{deng2012mnist}. Each task is defined as a binary classification problem, formed by randomly selecting one digit as the positive class and five other digits as the negative class. 
We flatten each $28 \times 28$ digit image so that $d = 784$, and we consider shared representations given by one-hidden layer neural networks,
\[
h(x) = \mathrm{ReLU}(B^\top x), \qquad B \in \RR^{d \times k}, \qquad k = 128.
\]
For each task $t$, prediction is made using a linear prediction layer $w_t \in \RR^k$ as $\mathds{1}\cbr{\inner{w_t}{h(x)} \ge 0}$.

We set $T= 50$ tasks and evaluated the performance of the learner under the $0$-$1$ loss with a target absolute error of $\epsilon = 0.05$. The learner uses only the MNIST training set for multi-task ERM and few-shot property tests\footnote{During training, the binary cross-entropy loss is used as a surrogate loss function.}. For few-shot property tests, $800$ samples are drawn from each of the positive and negative classes. If the empirical risk of the learned predictor is below $\frac{2}{3}\epsilon$, the test succeeds and the learner moves on; otherwise, the representation is updated via multi-task ERM. For each representation update (including the first task), the learner draws $m$ samples from the positive class and $m$ samples from the negative class, where $m$ is the number of available training images for the positive class (around $6000$), and applies multi-task ERM using these samples together with those stored in the memory.

Across $10$ trials, the learner updated its representation for an average of $4.9$ times, with a standard deviation of $0.54$. Figure~\ref{fig:mnist-num-updates} illustrates the average cumulative number of updates over $50$ tasks. 
Observe that the number of updates grew sublinearly with the number of tasks. This suggests that the learner quickly learned a representation that enabled accurate linear predictors to be learned from only a small number of examples.
To verify the correctness of our algorithm, we evaluated the predictors produced by the algorithm on data from the MNIST test set\footnote{This evaluation step is not part of the algorithm.
}. Figure~\ref{fig:mnist-errors} shows that the predictors consistently had $0$-$1$ risks below $\epsilon = 0.05$.

\begin{figure}[t]
  \centering
  \begin{subfigure}[t]{0.42\linewidth}
    \centering
    \includegraphics[width=\linewidth]{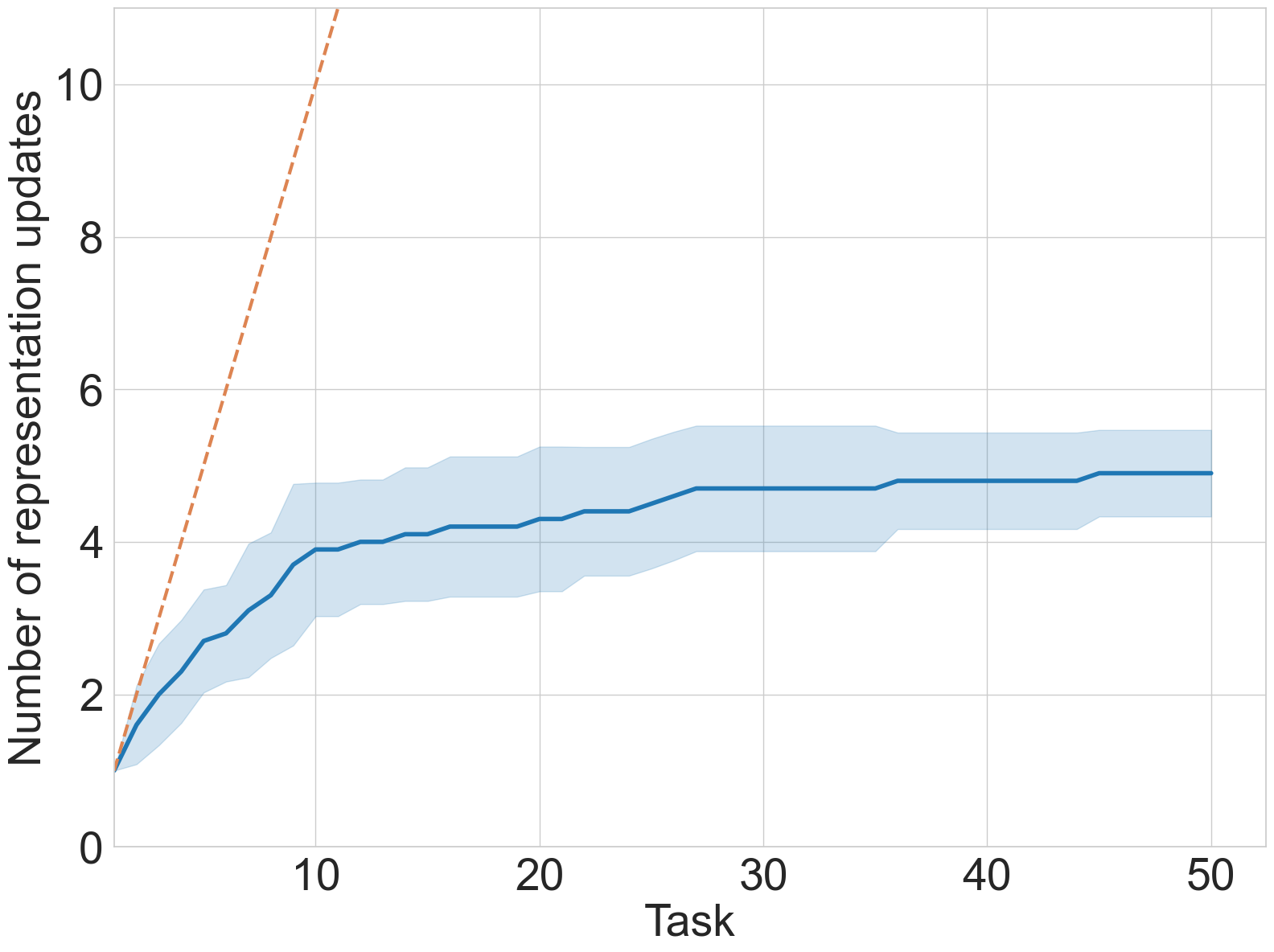}
    \caption{}
    \label{fig:mnist-num-updates}
  \end{subfigure}
  \hspace{40pt}
  \begin{subfigure}[t]{0.42\linewidth}
    \centering
    \includegraphics[width=\linewidth]{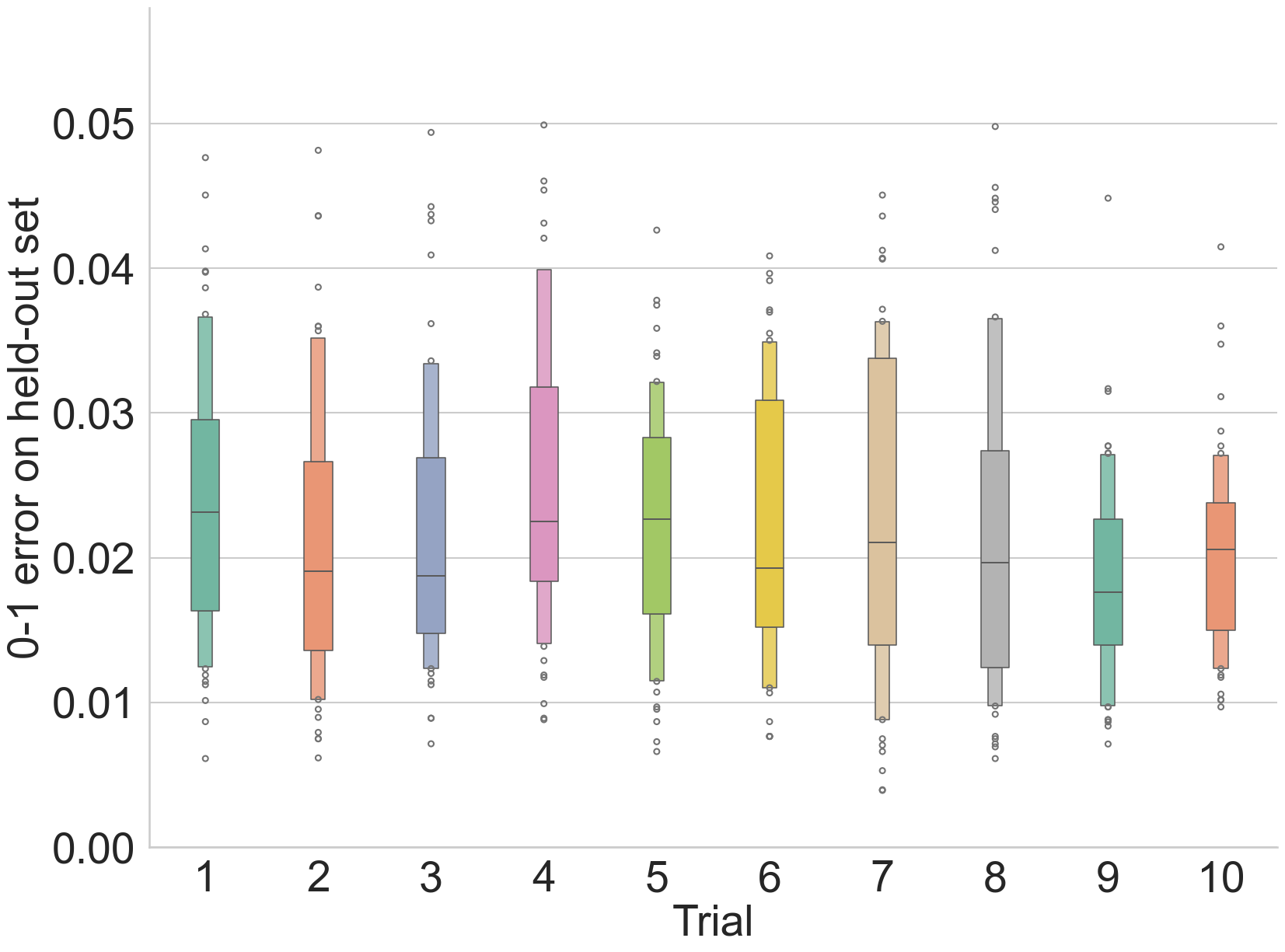}
    \caption{}
    \label{fig:mnist-errors}
  \end{subfigure}
  \caption{Performance on semi-synthetic experiments with MNIST digits. (a) The solid curve shows, on average, how the number of representation updates increases over $50$ binary digit classification tasks, with the shaded area showing one standard deviation. The dashed line represents linear growth. 
  (b) Each box plot shows the distribution of $0$-$1$ errors of the $50$ produced predictors when evaluated on held-out data from the MNIST test set in one of the $10$ independent trials.}
  \label{fig:mnist-exp}
\end{figure}

\begin{figure}[t]
  \centering
  \begin{subfigure}[t]{0.42\linewidth}
    \centering
    \includegraphics[width=\linewidth]{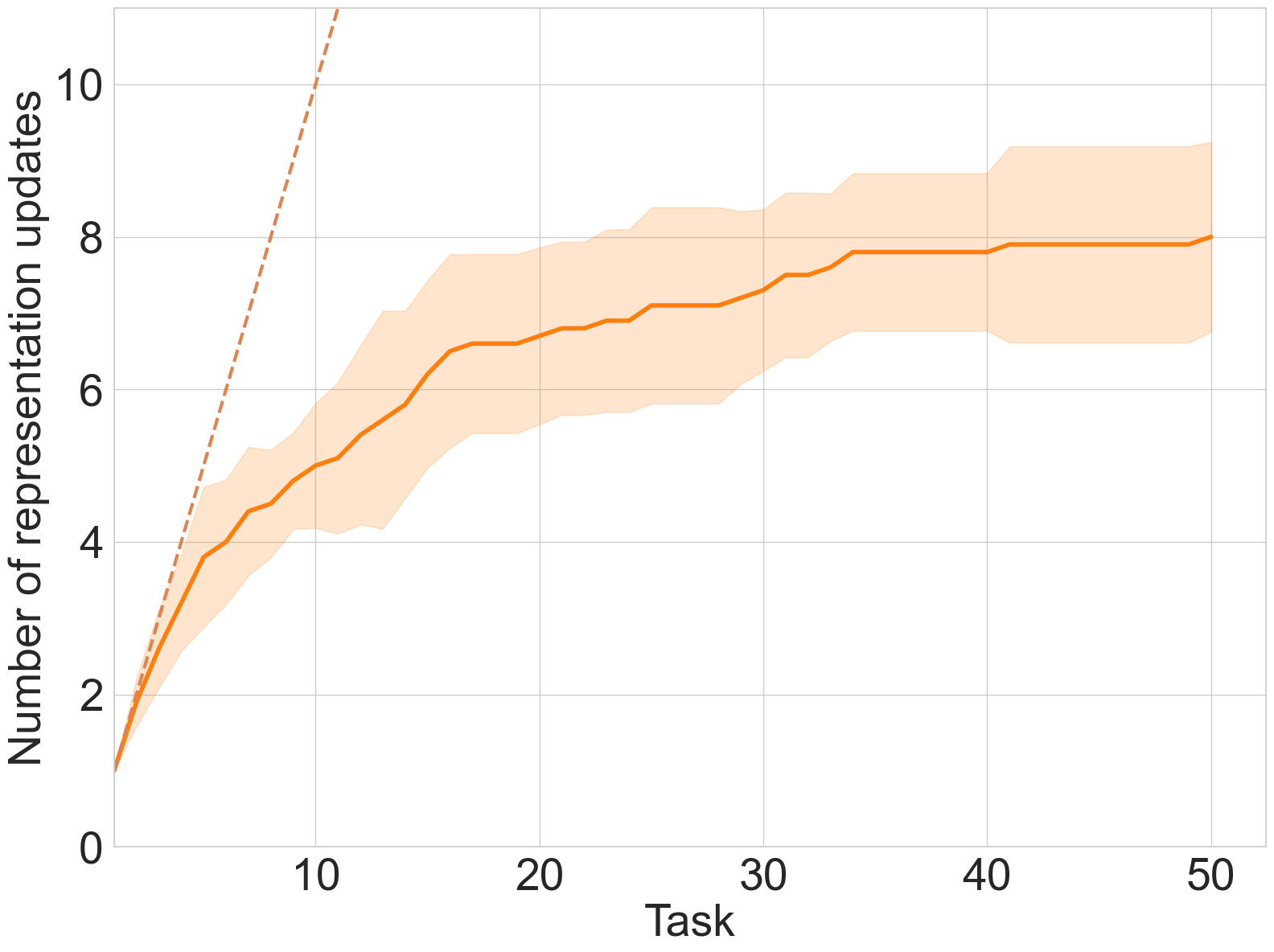}
    \caption{}
    \label{fig:cifar10-num-updates}
  \end{subfigure}
  \hspace{40pt}
  \begin{subfigure}[t]{0.42\linewidth}
    \centering
    \includegraphics[width=\linewidth]{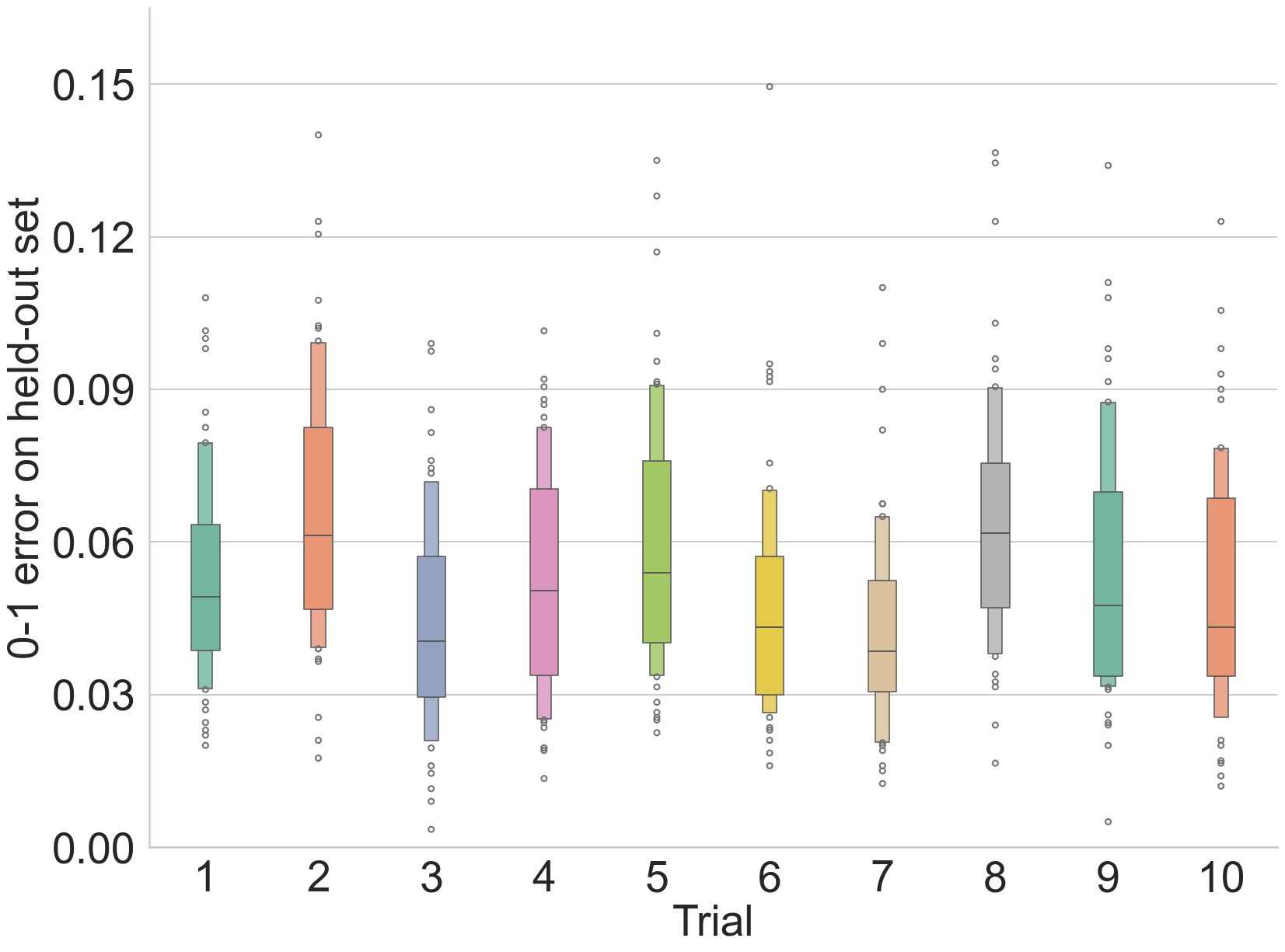}
    \caption{}
    \label{fig:cifar10-errors}
  \end{subfigure}
  \caption{Performance on semi-synthetic experiments with CIFAR-10 images. (a) The solid curve shows, on average, how the number of representation updates increases over $50$ image classification tasks, with the shaded area showing one standard deviation. The dashed line represents linear growth. 
  (b) Each box plot shows the distribution of $0$-$1$ errors of the produced predictors when evaluated on held-out data from the CIFAR-10 test set in one of the $10$ independent trials.}
  \label{fig:cifar-exp}
\end{figure}

\subsection{Semi-synthetic experiments with CIFAR-10 Images}

So far, we have considered relatively simple settings involving linear synthetic data and the MNIST image dataset. To further address the second question (whether the learner requires only a small number of representation updates in more complex settings) and to assess the practicality of our algorithm, we now turn to a more realistic setting with CIFAR-10 images and deep convolutional representations.

\paragraph{Binary classification with CIFAR-10 images: experimental setup.} We evaluated the algorithm on images from CIFAR-10 \citep{krizhevsky2009learning}. As in the MNIST experiment, each task is formulated as a binary classification problem, with one CIFAR-10 class chosen as positive and five others as negative. We consider representations $h(x)$ given by a slightly modified ResNet-18 architecture with over $11$ million parameters, which outputs a $256$-dimensional feature vector (see Appendix~\ref{app:implementation-details} for details). For each task $t$, we again consider linear prediction layers parameterized by $w_t \in \RR^{k}$ such that predictions are in the form of $\mathds{1}\cbr{\inner{w_t}{h(x)} \ge 0}$, with $k = 256$ in this setting.

We consider the $0$-$1$ loss with a target absolute error of $\epsilon = 0.15$. The learner has access to the CIFAR-10 training set and uses binary cross-entropy as a surrogate loss during training. To make the experimental setup more realistic, we introduce minor practical modifications to the algorithm. We reserve a validation split from the CIFAR-10 training set consisting of $100$ images per CIFAR-10 class. For few-shot property tests, the learner draws $800$ samples from each of the positive and negative classes to train a linear prediction layer and evaluates its error on the validation split ($100$ validation images from each of the positive and negative classes). The test succeeds if the validation error is at most $\frac{2}{3}\epsilon$. If the test fails, the learner then draws $4900$ samples per class from the training split and reuses stored data in the memory to update its representation via multi-task ERM.

\paragraph{Results.} We ran $10$ trials with $T = 50$ tasks. On average, the learner updated it representation $8.0$ times with a standard deviation of $1.18$, and Figure~\ref{fig:cifar10-num-updates} shows that the number of updates grew sublinearly with the number of tasks. We also evaluated the produced predictors on data from the CIFAR-10 test set. Figure~\ref{fig:cifar10-errors} shows that the errors across tasks and trials were consistently below $\epsilon = 0.15$. These results confirm that even for more expressive function classes based on deep convolutional architectures, our algorithm can perform effective lifelong representation learning while using a modest number of representation updates via multi-task ERM. In addition, these results highlight that our algorithm can be implemented and run in a practical setup using standard architectures and training procedures.

\section{Discussion and future work }
\label{sec:discussion}

In Section~\ref{sec:task-eluder}, we introduced the notion of $\epsilon$-independence (Definition~\ref{def:eps-independence}). We now examine an alternative definition of $\epsilon$-independence.
\begin{definition}[$\epsilon$-pointwise-independence]
\label{def:eps-independence-alternative}
Given the setting of Definition~\ref{def:eps-independence},
for any representation $h \in \Hcal$, we say that $(h, f_{n})$ is
{\em $\epsilon$-independent} of $\cbr{(h, f_1), \ldots, (h, f_{n-1})}$ with respect to $(\Hcal, \Fcal)$ if there exist $h' \in \Hcal$ and $f'_1, \ldots, f'_{n-1} \in \Fcal$ such that
\begin{align}
    \label{eqn:max-eps-independence}
    \max_{i \in [n-1]} \EE_{P_{f_i \circ h}} \sbr{ \ell((f'_i \circ h')(x), y) - \ell((f_i \circ h)(x), y) }\le \epsilon, 
\end{align}
but for any $f'_{n} \in \Fcal$, $\EE_{P_{f_n \circ h}} \sbr{\ell((f'_{n} \circ h')(x), y) - \ell((f_{n} \circ h)(x), y) } > \frac{\epsilon}{2}$.
\end{definition}
At first glance, this seems to be a more natural definition. Eq.~\eqref{eqn:max-eps-independence} requires that $(h', f'_i)$'s incur low risk on \textit{each} of the earlier tasks induced by $(h, f_i)$'s, which seems well aligned with the lifelong learning objective. In addition, this condition is weaker than Eq.~\eqref{eqn:eps-independence-sum} in Definition~\ref{def:eps-independence}, i.e.,
\[
\sum_{i=1}^{n-1} \EE_{P_{f_i \circ h}} \sbr{ \ell((f'_i \circ h')(x), y) - \ell((f_i \circ h)(x), y) }\le \epsilon,
\]
and so, following a similar analysis, this definition would suggest an improved sample complexity bound.

However, we now provide a negative result which shows that the condition in Eq.~\eqref{eqn:max-eps-independence} can, in fact, be insufficient.
Consider noiseless, binary linear classification tasks with a low-dimensional linear representation. Let $\Hcal = \{x \mapsto B^\top x: B \in \RR^{d \times k}, B^\top B = I\}$, $\Fcal= \cbr{z \mapsto w^\top z: w \in \RR^k, \nbr{w} = 1}$, and $\ell$ be the $0$-$1$ loss. Suppose $P_X = \Ncal (0, I)$. Then, for unit vectors $u, v \in \RR^d$,
\[
\Pr_{x \sim P_X} \sbr{\sign(u^\top x) \neq \sign(v^\top x)} = \frac{\theta(u, v)}{\pi},
\]
where $\theta(\cdot, \cdot)$ denotes the angle between two vectors~\cite[e.g., ][]{long1995sample}.

\begin{proposition}
    \label{prop:infinite-eps-ind-sequence}
    In the noiseless linear classification setting described above, suppose $d > k$. There exists an infinite sequence of $\epsilon$-independent tasks under Definition~\ref{def:eps-independence-alternative}.
\end{proposition}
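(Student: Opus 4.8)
The plan is to construct an explicit infinite sequence of representations and prediction layers in $\RR^d$ that defeats Definition~\ref{def:eps-independence-alternative} by exploiting the key weakness of the $\max$-based condition in Eq.~\eqref{eqn:max-eps-independence}: unlike the summation condition, it allows the alternative representation $h'$ to match each earlier task \emph{one at a time}, without having to simultaneously accommodate all of them. Since $d > k$, the class $\Hcal$ of $k$-dimensional subspaces is genuinely higher-dimensional than any single task constrains, so there is ample room to keep "hiding" a competing subspace. I would fix a center representation $h$ (say given by a matrix $B$ spanning a fixed $k$-dimensional subspace) and pick a sequence of prediction layers $f_1, f_2, \ldots$ — equivalently unit weight vectors inducing classifiers $\sign(\theta_n^\top x)$ with $\theta_n = B w_n$ in the subspace — whose induced tasks are pairwise "nearly orthogonal" in the angular sense, so that no single alternative subspace can fit more than one of them well at a time.

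The key computation reduces to the angle formula quoted just before the proposition: under the $0$-$1$ loss with $P_X = \Ncal(0,I)$, the excess risk of a classifier $\sign((f'\circ h')(x))$ relative to the Bayes-optimal $\sign((f_i\circ h)(x))$ on task $i$ equals $\theta(\cdot,\cdot)/\pi$, the normalized angle between the two decision boundaries' normal vectors. So Eq.~\eqref{eqn:max-eps-independence} becomes the requirement that $h'$ admits, for \emph{each} $i<n$ \emph{separately}, some prediction layer whose induced normal is within angle $\pi\epsilon$ of $\theta_i$; and $\epsilon$-independence of $(h,f_n)$ requires that \emph{no} prediction layer of $h'$ comes within angle $\pi\epsilon/2$ of $\theta_n$. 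The heart of the argument is to show we can always choose $h'$ (a $k$-dimensional subspace) that contains, or comes within angle $\pi\epsilon$ of, each of $\theta_1,\ldots,\theta_{n-1}$, yet stays angularly bounded away (by more than $\pi\epsilon/2$) from $\theta_n$. The cleanest realization is to make the $\theta_n$ mutually orthogonal directions inside the fixed subspace spanned by $B$ and, for the alternative at step $n$, take $h'$ to be a subspace that agrees with $\linearspan(\theta_1,\ldots,\theta_{n-1})$ but is tilted out of the $B$-subspace in the direction orthogonal to $\theta_n$ — using the extra ambient dimension afforded by $d > k$.

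First I would set up the angular dictionary and translate both clauses of Definition~\ref{def:eps-independence-alternative} into angle conditions as above, noting that because the model is noiseless the Bayes risk $\kappa_i = 0$ and excess risk is exactly the disagreement probability. Second, I would exhibit the infinite sequence: a concrete construction is to keep $h$ fixed but realize the "hiding" subspace $h'$ as a perturbation at each step. A convenient trick is to observe that since the condition need only hold for each earlier task in isolation, I can let the competing subspace at step $n$ be the one spanned by $\{\theta_1,\ldots,\theta_{n-1}\}$ together with whatever directions are needed to keep it $k$-dimensional, chosen so that $\theta_n$ has large angle to the entire subspace. Third, I would verify the two inequalities quantitatively: that each $\theta_i$ ($i<n$) lies in (or within $\pi\epsilon$ of) $h'$, giving an exactly- or nearly-zero excess risk and satisfying the $\max \le \epsilon$ clause, while $\theta_n$ sits at angle exceeding $\pi\epsilon/2$ from every direction in $h'$, satisfying the strict-independence clause. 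Since this holds for every $n$, the sequence is infinite.

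The main obstacle I anticipate is the \emph{simultaneous} geometric feasibility at each fixed step: I must ensure that a single $k$-dimensional subspace $h'$ can contain (or nearly contain) \emph{all} $n-1$ earlier directions $\theta_1,\ldots,\theta_{n-1}$ while being far from $\theta_n$ — and here the fact that these directions all live in the fixed $k$-dimensional $B$-subspace is actually a double-edged sword, since $\linearspan(\theta_1,\ldots,\theta_{n-1})$ may already exhaust that subspace as $n$ grows. The resolution is to use the ambient slack $d > k$: rather than keeping the earlier $\theta_i$ strictly inside one subspace, I would let $h'$ tilt slightly into the extra dimensions so that it stays within angle $\pi\epsilon$ of each \emph{individual} $\theta_i$ (which is all Eq.~\eqref{eqn:max-eps-independence} demands) without having to contain their full span, and orient the tilt to move away from $\theta_n$. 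Making this tilt precise — quantifying the angle between $\theta_n$ and the best-fitting direction in $h'$ and checking it exceeds $\pi\epsilon/2$ — is the one place where a careful (but still elementary) angle estimate is required, and I expect that to be the crux of the formal proof.
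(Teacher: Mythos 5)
There is a genuine gap: your construction cannot be continued to an \emph{infinite} sequence. You propose to take the task directions $\theta_n = Bw_n$ to be distinct and ``pairwise nearly orthogonal'' inside the fixed $k$-dimensional subspace $\linearspan(B)$, with each new $\theta_n$ angularly far from a witness subspace that nearly contains all earlier $\theta_i$. But a $k$-dimensional subspace contains only $k$ mutually orthogonal directions, and more generally only finitely many directions that are pairwise separated by any fixed angle; so a sequence of well-separated tasks terminates after at most $(C/\epsilon)^{O(k)}$ steps. Worse, the obstacle you flag as the ``crux'' is real and fatal for your scheme: although Eq.~\eqref{eqn:max-eps-independence} checks earlier tasks one at a time, the witness $h'$ is a \emph{single} $k$-dimensional subspace that must be within angle $\pi\epsilon$ of \emph{every} earlier $\theta_i$ simultaneously (each task merely gets its own layer $f'_i$). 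Once the earlier, well-separated directions span $\linearspan(B)$, any such $h'$ is forced close to $\linearspan(B)$ itself, and the room to stay at angle more than $\pi\epsilon/2$ from a new in-subspace direction evaporates. This is essentially why the \emph{sum}-based task-eluder dimension of this class is $O(k\log\frac{1}{\epsilon})$ (Proposition~\ref{prop:example-classification}); your construction, if it worked, would be bounded by the same packing-type argument, contradicting the claim of an infinite sequence.

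The paper's proof uses the opposite idea, which your proposal misses: it \emph{repeats the same task forever} and exploits the constant-factor slack between the per-task tolerance $\epsilon$ and the independence threshold $\epsilon/2$. Fix $B = [e_1 \ \cdots \ e_k]$ and $w_n \equiv v = e_1$, and take one fixed witness $B' = [s \ e_3 \ \cdots \ e_k \ e_{k+1}]$ with $s = e_1\cos\lambda + e_2\sin\lambda$ and $\lambda \in (\frac{\pi\epsilon}{2}, \pi\epsilon]$ (the extra coordinate $e_{k+1}$ is where $d > k$ enters). The best excess risk achievable under $h'$ on this single task is exactly $\frac{\lambda}{\pi} \in (\frac{\epsilon}{2}, \epsilon]$, so the \emph{same} witness simultaneously passes the pointwise earlier-task clause (each term is $\le \epsilon$) and violates the new-task clause ($> \frac{\epsilon}{2}$) at every step $n$, yielding an infinite $\epsilon$-independent sequence. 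Note that under the sum-based Definition~\ref{def:eps-independence}, this repetition trick fails after two repeats because the per-task errors accumulate---which is exactly the point of the proposition. The weakness of the $\max$-based definition is thus not that $h'$ matches earlier tasks ``one at a time'' (it cannot; it is one subspace), but that errors do not accumulate across repetitions, so your plan of injecting genuinely new, separated tasks at each step is the wrong direction entirely.
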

\noindent See Appendix~\ref{app:proof-task-eluder-alternative} for its proof. 
Despite this negative result, it may be possible to formulate intermediate notions of agreement on prior tasks that lie between the aggregate condition in Eq.~\eqref{eqn:eps-independence-sum} and the pointwise condition in Eq.~\eqref{eqn:max-eps-independence}, which could lead to improved sample complexity bounds. Understanding how to ensure that such conditions hold, both in terms of the number of samples needed and algorithmic techniques, would be an interesting direction for future study, with potential connections to collaborative PAC learning \citep{blum2017collaborative}. At the same time, it is important to also understand whether such definitions admit meaningful bounds on their induced notions of task-eluder dimension for natural function classes. 

Beyond this, several other open questions remain. In this work, we assumed a well-specified model (Assumpton~\ref{assum:well-specified}). While this relaxes the noiseless, realizable assumption studied in prior work, it remains to be seen if it can be further weakened. Other future directions include studying whether the learner can infer noise levels from other tasks under additional structural assumptions, and bounding the task-eluder dimension for more general function classes.

\section*{Acknowledgments}
This work was partially supported by NSF CAREER Award CCF 2238876. CZ acknowledges funding support by NSF CAREER Award IIS-2440266. 
 
\newpage
\putbib

\newpage
\appendix

\section*{Outline of the Appendix}

\startcontents[sections]
\printcontents[sections]{l}{1}{\setcounter{tocdepth}{2}}
\newpage

\newpage
\section{Supplementary material for Section~\ref{sec:algorithm}}
\subsection{Hardness of few-shot property testing}
\label{app:hardness-property-test}

\begin{proposition}[Formal version of Proposition~\ref{prop:property-test-neg}]
\label{prop:property-test-neg-formal}
Fix a subspace $U$ of $\RR^d$ with dimension 
$r \leq  d / 2$.
Consider two classes of functions, $\Gcal := \cbr{x \mapsto \inner{x}{w}: \nbr{w} \le 1}$ and 
$\Gcal_0 := \{x \mapsto \inner{x}{w}: \nbr{w} \le 1, w \in U\} \subset \Gcal$, where the latter is $\Gcal$ restricted to the subspace $U$ (equiv. to having a low-dimensional linear representation fixed). For any distribution $P$ and $\Gcal' \subseteq \Gcal$, let $\kappa_{P}(\Gcal') := \inf_{g \in \Gcal'} \EE_{(x,y) \sim P} \sbr{(g(x) - y)^2}$ denote the risk of the best function in $\Gcal'$.

Consider the following two hypotheses:
\[
H_0 = \cbr{P \in \Delta( \RR^d \times \RR ): \kappa_P(\Gcal_0) =\kappa_P(\Gcal)}, \ \text{and} \ H_1 = \cbr{P \in \Delta( \RR^d \times \RR ): \kappa_P(\Gcal_0) > \kappa_P(\Gcal) + 0.9}.
\]
Then, there exists some constant $c$ such that, with $n \le c\sqrt{d}$ examples, 
\[
\inf_{\psi: (\RR^d \times \RR)^n \to \cbr{0,1}}
\sup_{b \in \cbr{0,1}}
\sup_{P \in H_b}
\Pr_{(x_i, y_i)_{i=1}^n \sim P^n}
\rbr{
\psi( (x_i, y_i)_{i=1}^n ) \neq b
}
\geq \frac 1 3. 
\]

\end{proposition}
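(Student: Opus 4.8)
The plan is to prove the lower bound by a reduction to the problem of \emph{detecting a planted linear direction} hidden in the orthogonal complement of $U$, which is exactly the hard instance underlying \citep[Proposition 2]{kong2018estimating}. Write $U^\perp$ for the orthogonal complement of $U$ in $\RR^d$; since $r \le d/2$ we have $\dim U^\perp = d - r \ge d/2$. Throughout I take $P_X = \Ncal(0, I_d)$, so that for any direction $v \in U^\perp$ the coordinate $\inner{x}{v}$ is independent of the projection of $x$ onto $U$. The intuition is that the null distributions place no signal in $U^\perp$ (so fixing the representation to $U$ costs nothing), while the alternatives hide a constant-strength linear signal along some $v \in U^\perp$ (so the representation restricted to $U$ is provably worse by a constant), and detecting whether such a hidden signal is present requires $\Omega(\sqrt{d})$ samples.

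Concretely, I would fix a noise scale $\sigma$ and a signal norm $\|v\|^2 = 0.91$ (so $\|v\| \le 1$, making $v$ admissible as a weight in $\Gcal$). For the null, let $P_0$ generate $x \sim \Ncal(0,I_d)$ and, \emph{independently}, $y \sim \Ncal(0, \sigma^2 + \|v\|^2)$; for each $v \in U^\perp$ with $\|v\|^2 = 0.91$, let $P_v$ generate $x \sim \Ncal(0,I_d)$ and $y = \inner{x}{v} + \eta$ with $\eta \sim \Ncal(0,\sigma^2)$. The first step is to verify membership. Since $P_X$ is isotropic and centered, under the null the best linear predictor (even unconstrained) is $w=0$, giving $\kappa_{P_0}(\Gcal) = \kappa_{P_0}(\Gcal_0) = \sigma^2 + \|v\|^2$, so $P_0 \in H_0$. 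Under $P_v$, the risk of $w$ equals $\|w-v\|^2 + \sigma^2$; over $\Gcal$ this is minimized at the feasible point $w=v$ with value $\sigma^2$, whereas every $w \in U$ is orthogonal to $v$, forcing $\|w-v\|^2 \ge \|v\|^2$ and hence $\kappa_{P_v}(\Gcal_0) = \sigma^2 + \|v\|^2$. Thus $\kappa_{P_v}(\Gcal_0) - \kappa_{P_v}(\Gcal) = \|v\|^2 = 0.91 > 0.9$, so $P_v \in H_1$. A key design point is that the $y$-marginal is matched, $y \sim \Ncal(0, \sigma^2 + \|v\|^2)$ under both $P_0$ and every $P_v$, so no test can separate the families by inspecting labels alone; the only signal lives in the $x$--$y$ correlation.

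Given this setup, the remaining task is the information-theoretic lower bound, which I would obtain from \citep[Proposition 2]{kong2018estimating} applied in the $d' := \dim U^\perp \ge d/2$ dimensional space $U^\perp$, yielding that no test distinguishes $P_0$ from the planted-direction family with constant advantage using $o(\sqrt{d'}) = o(\sqrt d)$ samples. Equivalently, one can give a self-contained Le Cam argument: place a uniform prior on $v$ over $\{v \in U^\perp : \|v\|^2 = 0.91\}$ and bound the $\chi^2$-divergence between $P_0^{\otimes n}$ and the mixture $\EE_v[P_v^{\otimes n}]$ via the second-moment (``ingenuity'') identity $1 + \chi^2 = \EE_{v,v'}[(\int dP_v\,dP_{v'}/dP_0)^n]$; for these Gaussian instances the inner integral is an explicit function of $\inner{v}{v'}$, and since $\inner{v}{v'}$ concentrates at scale $\|v\|^2/\sqrt{d'}$, the divergence stays bounded as long as $n \le c\sqrt{d'}$, giving $\mathrm{TV}(P_0^{\otimes n}, \EE_v P_v^{\otimes n}) \le 1/3$ and hence Bayes error at least $\tfrac12(1-\tfrac13) = \tfrac13$. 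Since $P_0 \in H_0$ and each $P_v \in H_1$, this Bayes error lower-bounds the minimax error in the statement. The main obstacle is the $\chi^2$ computation: one must exploit the exactly matched $y$-marginals (arranged above), carry out the Gaussian integral cleanly, and invoke sharp concentration of $\inner{v}{v'}$ on the sphere to pin the threshold at $\sqrt d$ rather than a weaker power; aligning our norm-constrained classes $\Gcal, \Gcal_0$ and isotropic $P_X$ with the precise hypotheses of \citep[Proposition 2]{kong2018estimating} is the other point requiring care.
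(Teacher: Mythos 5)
Your proposal is correct and takes essentially the same route as the paper: a matched-$y$-marginal null against a uniform mixture of planted directions hidden in $U^\perp$ (with $\dim U^\perp = d-r \ge d/2$), membership in $H_0$/$H_1$ verified exactly as you do, and the $\Omega(\sqrt{d})$ lower bound imported from \citep[Proposition 2]{kong2018estimating}---the paper realizes your ``apply the result inside $U^\perp$'' step concretely by padding the hard instance with fresh Gaussian coordinates along $U$, i.e., $x_i = B_\perp x_i' + B_\parallel z_i$, and by observing that the mixture lower bound dominates the minimax error since each mixture component lies in the corresponding hypothesis class. The only substantive difference is in constants: the paper keeps the cited result's exact parameters ($\nbr{w_*}^2 = 0.99$, noise variance $0.01$, hence gap $0.99 > 0.9$) so that Proposition 2 of \citet{kong2018estimating} applies verbatim, whereas your choice $\nbr{v}^2 = 0.91$ with free $\sigma$ would require either aligning to those constants or actually carrying out the self-contained $\chi^2$/Le Cam computation you sketch (which you correctly flag as the point needing care).
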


\begin{proof}
Let $U^\perp$ denote the orthogonal complement of $U$, and $P_{U^\perp}$ denote the orthogonal projection matrix onto $U^\perp$. 
Consider two distributions, $Q_0$ and $Q_1$, over $(\RR^d \times \RR)^n$:

\begin{itemize}[leftmargin=*]
    \item $Q_0$: each example is drawn i.i.d.\ from $P_0$.
    Here, we define $P_0$ such that $(x,y) \sim P_0$ iff $x \sim \Ncal(0, I_d)$, $y \sim \Ncal(0, 1)$, and $x, y$ are independent. We have $\kappa_{P_0}(\Gcal) = \kappa_{P_0}(\Gcal_0) = \sigma^2 = 1$; that is, $P_0 \in H_0$.

    \item $Q_1$: this is a mixture of distributions
    $P_1(w_*)^n$, 
    where $w_*$ is drawn from the uniform distribution over $\cbr{z \in U^\perp: \nbr{z} = \sqrt{0.99}}$.
    Herein, $P_1(w_*)$ is defined such that 
    $x \sim N(0, I_d)$, $y = \inner{w_*}{x} + \eta$, 
    $\eta \sim \Ncal(0, 0.01)$, and 
    $x, \eta$ are independent. 
    Then, for any $w_*$, we have $\kappa_{P_1(w_*)}(\Gcal) = 0.01$ and $\kappa_{P_1(w_*)}(\Gcal_0) = 0.01 + \nbr{P_{U^\perp} w_*}^2 = 1$. Therefore, for any $w_*$, $\kappa_{P_1(w_*)}(\Gcal_0) > \kappa_{P_1(w_*)}(\Gcal) + 0.9$, and $P_1(w_*) \in H_1$.
\end{itemize}

To prove Proposition~\ref{prop:property-test-neg-formal}, it suffices to show that
\begin{equation}
\inf_{\psi: (\RR^d \times \RR)^n \to \cbr{0,1}}
\sup_{b \in \cbr{0,1}}
\Pr_{(x_i, y_i)_{i=1}^n \sim Q_b}
\rbr{
\psi( (x_i, y_i)_{i=1}^n ) \neq b
}
\geq \frac 1 3. 
\label{eqn:hyp-test-original}
\end{equation}
To see this, we note that for any $b \in \cbr{0,1}$, 
\[
\sup_{P \in H_b}
\Pr_{(x_i, y_i)_{i=1}^n \sim P^n}
\rbr{
\psi( (x_i, y_i)_{i=1}^n ) \neq b
}
\geq 
\Pr_{(x_i, y_i)_{i=1}^n \sim Q_b}
\rbr{
\psi( (x_i, y_i)_{i=1}^n ) \neq b
},
\]
since $Q_b$ are mixtures of distributions of $P^n$, for $P \in H_b$.

\vspace{15pt}
\noindent To establish this, we reduce our hypothesis testing problem from
a known hypothesis testing problem due to \cite{kong2018estimating}, which is known to require a large number of samples:

\begin{proposition}[\citealt{kong2018estimating}, Proposition 2; see also \citealt{verzelen2010goodness}, Proposition 4.2]
\label{prop:kv18}
Consider two probability distributions in $\Delta((\RR^{d'} \times \RR)^n)$ over random variables $(x_i', y_i')_{i=1}^n$:
\begin{enumerate}[leftmargin=*,itemsep=0pt,topsep=2pt]
    \item Pure noise $Q_0'$: each $(x_i', y_i')$ is drawn independently such that $x_i' \sim \Ncal(0, I_{d'})$ and $y'_i \sim \Ncal(0,1)$, with $x_i'$ independent of $y'_i$;

    \item Little noise $Q_1'$: let $w_*'$ be a random vector drawn uniformly from the sphere in $\RR^{d'-1}$ with radius $\sqrt{0.99}$. Conditioned on $w_*'$, each $(x_i', y_i')$ is drawn independently such that $x'_i \sim \Ncal(0, I_{d'})$ and $y'_i = \inner{w_*'}{x'_i} + \eta_i$, where $\eta_i \sim \Ncal(0, 0.01)$.
\end{enumerate}
There is a universal constant $c$ such that 
when $n \leq c\sqrt{d'}$, 
\begin{equation}
\inf_{\psi': (\RR^{d'} \times \RR)^n \to \cbr{0,1}}
\sup_{b \in \cbr{0,1}}
\Pr_{(x_i', y_i')_{i=1}^n \sim Q_b'}
\rbr{
\psi( (x_i', y_i')_{i=1}^n ) \neq b
}
\geq \frac 1 3.
\label{eqn:hyp-test-kv18}
\end{equation}
\end{proposition}

We now provide the reduction from~\cite{kong2018estimating}'s task to ours.
Pick an orthogonal matrix $B = [B_\parallel \ B_\perp] \in R^{d \times d}$ whose first $r$ columns span $U$ and last $d'$ columns span $U^\perp$. For any $(x_i', y_i')$, consider 
\[
x_i = B_\perp x_i' + B_\parallel z_i,
\]
where $z_i \sim \Ncal(0, I_r)$ is independent of $x_i'$, and $y_i = y_i'$.
It is easy to verify that $x_i \sim \Ncal(0, I_d)$. Consider two cases:
\begin{itemize}[leftmargin=*,itemsep=0pt,topsep=2pt]
    \item Under $Q'_0$, $y_i' \sim \Ncal(0, 1)$ and so $(x_i,y_i)_{i=1}^n$ follows the distribution $Q_0$.

    \item Under $Q'_1$, $y_i' = \inner{x_i'}{w'_*} + \eta_i$, where $w'_*$ is from the uniform distribution over $\mathbb{S}^{d'}(0, \sqrt{0.99})$. Set $w_* = B_\perp w'_*$. Then, it is easy to verify that $y = \inner{x_i}{w_*} + \eta_i$, and $w_*$ is drawn from the uniform distribution over $\cbr{z \in U^\perp: \nbr{z} = \sqrt{0.99}}$.
    In other words, $(x_i, y_i)_{i=1}^n$ follows the same distribution as $Q_1$.

\end{itemize}

\noindent Given any test 
$\psi: (\RR^d \times \RR)^n \to \cbr{0,1}$ that tries to distinguish $Q_0$ and $Q_1$,
we now construct a test $\psi': (\RR^{d'} \times \RR)^n \to \cbr{0,1}$ that distinguishes $Q'_0$ and $Q'_1$ with the same worst-case error rate. 
Specifically, define
$\psi'( (x_i', y_i')_{i=1}^n )
= 
\psi( (x_i, y_i)_{i=1}^n )
$.
To see why they have the same worst-case error rates, we note that for any $b \in \cbr{0,1}$, 
\[
\Pr_{(x_i', y_i')_{i=1}^n \sim Q_b'} (\psi'((x_i', y_i')_{i=1}^n) \neq b)
= 
\Pr_{(x_i, y_i)_{i=1}^n \sim Q_b} (\psi((x_i, y_i)_{i=1}^n) \neq b),
\]
due to the construction of $(x_i,y_i)_{i=1}^n$ based on $(x_i',y_i')_{i=1}^n$ above.
Thus, Eq.~\eqref{eqn:hyp-test-original} follows from Eq.~\eqref{eqn:hyp-test-kv18}.

\end{proof}

\subsection{Proof of Proposition~\ref{prop:finite-eluder}}
\label{app:finite-task-eluder}

\begin{restate}{Proposition}{prop:finite-eluder}
For any $\epsilon \ge 0$,
\[
\dim(\Hcal, \Fcal, \epsilon) \le 2\min \rbr{\abr{\Hcal}, \abr{\Fcal}}.
\] 
\end{restate}

\begin{proof}
We first show that $\dim(\Hcal, \Fcal, \epsilon) \le 2\abr{\Hcal}$.
Let $\cbr{(h, f_i)}_{i=1}^n$ be any sequence of $\epsilon$-independent tasks. For each $j \le n$, let $\Hcal_j \subset \Hcal$ denote the set of possible witness representations. That is, $h' \in \Hcal_j$ if there exist some $f'_1, \ldots, f'_{j-1}$ such that
\begin{align}
    \sum_{i=1}^{j-1} \EE_{P_{f_i \circ h}} \sbr{ \ell((f'_i \circ h')(x), y) - \ell((f_i \circ h)(x), y) }\le \epsilon, 
\end{align}
but for any $f'_{j} \in \Fcal$, $\EE_{P_{f_j \circ h}} \sbr{\ell((f'_{j} \circ h')(x), y) - \ell((f_{j} \circ h)(x), y) } > \frac{\epsilon}{2}$.
For any $h' \in \Hcal$ and $j$, let 
\[
m_j(h') = \abr{\cbr{i: i \le j, h' \in \Hcal_i}}
\]
denote the number of times $h'$ appears in the set of witness representations up until task $j$. It suffices to show that for any $h'$, $m_n(h') \le 2$, i.e., each $h'$ can witness at most twice, as this implies that any sequence of $\epsilon$-independent tasks will have length at most $2|\Hcal|$. 

Suppose, for contradiction, $i_1 < i_2 < i_3$ are three indices in $[n]$ such that $h' \in \Hcal_{i_1} \cap \Hcal_{i_2} \cap \Hcal_{i_3}$. Since $h' \in \Hcal_{i_1} \cap \Hcal_{i_2}$, for every $f'_{i_1} \in \Fcal$, we have
\[
\EE_{P_{f_{i_1} \circ h}} \sbr{\ell((f'_{i_1} \circ h')(x), y) - \ell((f_{i_1} \circ h)(x), y) }  > \frac{\epsilon}{2};
\]
and for every $f'_{i_2} \in \Fcal$,
\[
\EE_{P_{f_{i_2} \circ h}} \sbr{\ell((f'_{i_2} \circ h')(x), y) - \ell((f_{i_2} \circ h)(x), y) }  > \frac{\epsilon}{2}.
\]
Furthermore, since $h' \in \Hcal_{i_3}$, there exist $f'_1, f'_2, \ldots, f'_{i_3}$ such that
\begin{align}
    \label{eqn:finite-task-eluder-H-i3}
    \sum_{i=1}^{i_3-1} \EE_{P_{f_i \circ h}} \sbr{ \ell((f'_i \circ h')(x), y) - \ell((f_i \circ h)(x), y) }\le \epsilon.
\end{align}
However, for any $f'_{i_1}, f'_{i_2} \in \Fcal$,
\[
\EE_{P_{f_{i_1} \circ h}} \sbr{\ell((f'_{i_1} \circ h')(x), y) - \ell((f_{i_1} \circ h)(x), y) } + \EE_{P_{f_{i_2} \circ h}} \sbr{\ell((f'_{i_2} \circ h')(x), y) - \ell((f_{i_2} \circ h)(x), y) } > \epsilon,
\]
which contradicts with Eq.~\eqref{eqn:finite-task-eluder-H-i3}. \\

We now turn to show $\dim(\Hcal, \Fcal, \epsilon) \leq 2 |\Fcal|$.
To this end, it suffices to show that for any sequence of $\epsilon$-independent tasks $(h, f_1), \ldots, (h, f_n)$, i.e., for any $i \in [n]$, $(h, f_i)$ is $\epsilon$-independent of $(h, f_1), \ldots, (h, f_{i-1})$, the number of appearances of any $f \in \Fcal$ in $\cbr{f_1, \ldots, f_n}$ can be at most 2. 
This will conclude the proof since this implies that any sequence of $\epsilon$-independent tasks will have length at most $2|\Fcal|$. 

Fix one such $\epsilon$-independent sequence and 
any $f \in \Fcal$. Assume towards contradiction that $i_1, i_2, i_3$ are three indices in $[n]$ such that $f_{i_1} = f_{i_2} = f_{i_3} = f$.
Since $(h, f_{i_3})$ is $\epsilon$-independent of $(h, f_1), \ldots, (h, f_{i_3-1})$, we have that there exists $h' \in \Hcal$ such that 
\begin{equation}
\sum_{i=1}^{i_3-1} \underbrace{\min_{f' \in \Fcal} \EE_{P_{f_i \circ h}}  \sbr{ \ell((f' \circ h')(x), y) - \ell((f_i \circ h)(x), y) }}_{ =: L_i(h')} \le \epsilon,
\label{eqn:in-task-error-small}
\end{equation}
but 
\begin{equation}
\underbrace{\min_{f' \in \Fcal} \EE_{P_{f_{i_3} \circ h}}  \sbr{ \ell((f' \circ h')(x), y) - \ell((f_i \circ h)(x), y) }}_{ =: L_{i_3}(h')} > \epsilon / 2,
\label{eqn:out-of-task-error-large}
\end{equation}
Since $L_{i_1}(h') = L_{i_2}(h') = L_{i_3}(h')$, Eq.~\eqref{eqn:in-task-error-small} implies that 
$L_{i_3}(h') \leq \frac \epsilon 2$, which contradicts with Eq.~\eqref{eqn:out-of-task-error-large}. 
\end{proof}

\begin{remark}
The constant $2$ in front of $2\min(\abr{\Hcal}, \abr{\Fcal})$ is due to out-of-task excess risk threshold  $\frac{\epsilon}{2}$ in the definition of $\epsilon$-independence of tasks (Definition~\ref{def:eps-independence}). If that threshold were $\frac{\epsilon}{C}$, we will obtain $\dim(\Hcal, \Fcal, \epsilon) \leq C \min(\abr{\Hcal}, \abr{\Fcal})$ here. In summary, our definition of task-eluder dimension is robust to the choice of the constant in the out-of-task excess risk threshold.
\end{remark}

\subsection{Proof of Proposition~\ref{prop:infinite-eps-ind-sequence}}
\label{app:proof-task-eluder-alternative}

For completeness, we restate below the alternative definition of $\epsilon$-independence ($\epsilon$-pointwise-independence), the setting of the negative example, and Proposition~\ref{prop:infinite-eps-ind-sequence}. \\

\begin{restate}
{Definition}{def:eps-independence-alternative}
Given the setting of Definition~\ref{def:eps-independence},
for any representation $h \in \Hcal$, we say that $(h, f_{n})$ is
{\em $\epsilon$-independent} of $\cbr{(h, f_1), \ldots, (h, f_{n-1})}$ with respect to $(\Hcal, \Fcal)$ if there exist $h' \in \Hcal$ and $f'_1, \ldots, f'_{n-1} \in \Fcal$ such that
\begin{align*}
    \max_{i \in [n-1]} \EE_{P_{f_i \circ h}} \sbr{ \ell((f'_i \circ h')(x), y) - \ell((f_i \circ h)(x), y) }\le \epsilon, 
\end{align*}
but for any $f'_{n} \in \Fcal$, $\EE_{P_{f_n \circ h}} \sbr{\ell((f'_{n} \circ h')(x), y) - \ell((f_{n} \circ h)(x), y) } > \frac{\epsilon}{2}$.
\end{restate}

\paragraph{Setting.} We consider noiseless, binary linear classification tasks under a shared low-dimensional linear representation. Let $\Hcal = \cbr{x \mapsto B^\top x: B \in \RR^{d \times k}, B^\top B = I}$, $\Fcal= \cbr{z \mapsto w^\top z: w \in \RR^k, \nbr{w} = 1}$, and let $\ell$ denote the $0$-$1$ loss. Suppose $P_X = \Ncal (0, I)$. Then, for unit vectors $u, v \in \RR^d$,
\[
\Pr_{x \sim P_X} \sbr{\sign(u^\top x) \neq \sign(v^\top x)} = \frac{\theta(u, v)}{\pi},
\]
where $\theta(\cdot, \cdot)$ denotes the angle between two vectors. \\

\begin{restate}{Proposition}{prop:infinite-eps-ind-sequence}
In the noiseless linear classification setting described above, suppose $d > k$. There exists an infinite sequence of $\epsilon$-independent tasks under Definition~\ref{def:eps-independence-alternative}.
\end{restate}

\begin{proof}[Proof of Proposition~\ref{prop:infinite-eps-ind-sequence}]
    Fix $B = [e_1 \ e_2 \ \ldots \ e_k] \in \RR^{d \times k}$, where $e_i$ denotes the $i$-th standard basis vector in $\RR^d$. Let $v = [1 \ 0\ \ldots\ 0] \in \RR^k$. Consider a sequence, $\cbr{(B, w_n)}_n$, where $w_n \equiv v$; that is, the same task is seen at every step. It suffices to show that for every $n$, $(B, w_n)$ is $\epsilon$-independent of $\cbr{(B, w_1), \ldots, (B, w_{n-1})}$. To this end, consider $B' = [s \ e_3 \ \ldots \ e_k \ e_{k+1}] \in \RR^{d \times k}$, where
    \[
    s = e_1 \cos\lambda + e_2 \sin \lambda, \qquad \lambda \in \left(\frac{\pi \epsilon}{2}, \pi \epsilon\right].
    \]
    Let $U = \linearspan(B')$.
    Observe that
    \begin{enumerate}[leftmargin=*]
        \item for each $i < n$, by choosing $f'_i$ to be $v$,
        \begin{align*}
            &\ \EE_{P_{f_i \circ h}} \sbr{ \ell((f'_i \circ h')(x), y) - \ell((f_i \circ h)(x), y) } \\
            =& \ \Pr_{x \sim P_X} \sbr{\sign(x^\top B'v) \neq \sign(x^\top Bv)}  \\
            = & \ \frac{\theta(Bv, B'v)}{\pi} = \frac{\lambda}{\pi} \le \epsilon;
        \end{align*}

        \item for any $f'_n$, 
        \begin{align*}
            &\ \EE_{P_{f_n \circ h}} \sbr{ \ell((f'_n \circ h')(x), y) - \ell((f_n \circ h)(x), y) } \\
            \ge & \ \frac{\theta(Bv, U)}{\pi} \\
            = & \ \frac{\theta(Bv, s)}{\pi} 
            = \frac{\lambda}{\pi} > \frac{\epsilon}{2},
        \end{align*}
        where $\theta(Bv, U)$ denotes the angle between $Bv$ and the subspace $U$.
    \end{enumerate}

 Therefore, by Definition~\ref{def:eps-independence-alternative}, $(B, v_n)$ is $\epsilon$-independent of its predecessors at every step $n$, which yields an infinite sequence.
\end{proof}

\newpage
\subsection{Pseudocode for lifelong representation learning with known task-eluder dimension}
\label{app:pseudocode-known-task-eluder}

Algorithm~\ref{alg:lifelong-known-task-eluder} provides pseudocode of our lifelong representation learning algorithm in the setting where the task-eluder dimension is known in advance.

\begin{algorithm}[h]
\SetNoFillComment
\SetCommentSty{commentfont}
\caption{Lifelong representation learning (known task-eluder dimension)}
\label{alg:lifelong-known-task-eluder}
\KwIn{$\Hcal$, $\Fcal$, target error $\epsilon$, $\dim(\Hcal, \Fcal, \epsilon)$, confidence $\delta$, number of tasks $T$, noise levels $(\kappa_t)_t$\;}

\vspace{2pt}
Initialize memory $\Mcal \gets \emptyset$ and $N \gets \dim(\Hcal, \Fcal, \epsilon)$\;

\vspace{2pt}
\For{task $t = 1$}{
    \vspace{2pt}
    Draw a sample $S_1$ of size $m_N$ from $\Pcal_1^{m_N}$, apply ERM to learn $\hat{h}$ and $\hat{f}_1$\, and output $\hat{f}_1 \circ \hat{h}$\;

    \vspace{2pt}
    Set $n \gets 1$ and $t_n \gets t$, and update the memory $\Mcal \gets \Mcal \cup \cbr{S_{t_n}}$\;
}

\vspace{5pt}
\For{tasks $t = 2, \ldots, T$}{
    \vspace{2pt}
    \tcp{Few-shot property test: check if $\hat{h}$ admits a hypothesis for current task $t$ with risk at most $\epsilon$} 
    Draw a sample $\tilde{S}_t$ of size $\tilde{m}$ from $\Pcal_t^{\tilde{m}}$, and apply ERM with current $\hat{h}$ to learn $\tilde{f}_t$\;

    \vspace{5pt}
    \If{$\widehat{\Lcal}_{\tilde{S}_t}(\tilde{f}_t \circ \hat{h}) \le \kappa_t + \frac{3}{4}\epsilon$}{
        \vspace{3pt}
        Output $\tilde{f}_t \circ \hat{h}$\; 
    }
    \Else{
        \vspace{2pt}
        
        \tcp{Multi-task ERM on a subset of tasks where few-shot property test failed (plus task 1)}

        \vspace{4pt}

        Set $n \gets n+1$\;
        Draw a sample $S_t$ of size $m_N$ from $\Pcal_t^{m_N}$\;
        Set $t_n \gets t$, and update the memory $\Mcal \gets \Mcal \cup \cbr{S_{t_n}}$\;
        \vspace{4pt}
        Apply ERM over the samples stored in the memory to learn
        \[
        \hat{h}, \check{f}^{(t)}_1, \ldots, \check{f}^{(t)}_{n} \gets \argmin_{\substack{h \in \Hcal \\ f_1, \ldots, f_{n} \in \Fcal}} \frac{1}{n} \sum_{i=1}^{n} \widehat{\Lcal}_{S_{t_i}} (f_i, h);
        \]

        Set $\hat{f}_t \gets \check{f}^{(t)}_{n}$, update $\hat{h}$, and output $\hat{f}_t \circ \hat{h}$\;
    }
}
\end{algorithm}

\newpage
\section{Supplementary material for Section~\ref{sec:theoretical-guarantees}}

\subsection{Proof of Theorem~\ref{thm:main}} \label{app:proofs-main}
We first state a more precise version of Theorem~\ref{thm:main} with explicit constants. 
\begin{theorem}[Restatement of Theorem~\ref{thm:main}]
\label{thm:restate-main}
Let $\Xi = \dim(\Hcal, \Fcal, \epsilon) < \infty$. Suppose $\Ccal \rbr{\Hcal, \frac{\epsilon}{256\Xi}} < \infty$ and $\Ccal \rbr{\Fcal, \frac{\epsilon}{256\Xi}} < \infty$. 
In Algorithm~\ref{alg:lifelong-nonparametric}, for each $N$, set
\[
m_N = \frac{256N}{\epsilon^2} \sbr{\log \Ccal \rbr{\Hcal, \frac{\epsilon}{64N}} + N \log \Ccal\rbr{\Fcal, \frac{\epsilon}{64N}} + \log \frac{16\log T \cdot \sum_{i=0}^{\log N} \binom{T}{2^i} }{\delta}} + \frac{64}{\epsilon^2},
\]
and set
\[
\tilde{m} = \frac{1024}{\epsilon^2} \sbr{\log \Ccal\rbr{\Fcal, \frac{\epsilon}{128}} + \log \frac{8T}{\delta}} + \frac{256}{\epsilon^2}.
\]

\vspace{10pt}
\noindent Then, with probability at least $1 - \delta$,
\begin{itemize}[leftmargin=*,itemsep=0pt,topsep=2pt]
    \item[-] For every task, algorithm~\ref{alg:lifelong-nonparametric} outputs a predictor with excess risk at most $\epsilon$;

    \item[-] Algorithm~\ref{alg:lifelong-nonparametric} performs multi-task ERM at most $2\Xi$ times;

    \item[-]  The sample complexity of Algorithm~\ref{alg:lifelong-nonparametric} is upper bounded by
    \begin{align*}
     \tilde{\Ocal} \bigg(\frac{T}{\epsilon^2} \log \Ccal\rbr{\Fcal, \frac{\epsilon}{128}} + \frac{\Xi^2}{\epsilon^2} \sbr{\log \Ccal\rbr{\Hcal, \frac{\epsilon}{128\Xi}} + \Xi \log \Ccal\rbr{\Fcal, \frac{\epsilon}{128\Xi}}}\bigg).
    \end{align*}
    In addition, the size of the memory buffer it requires is at most 
    \[
    \tilde{\Ocal} \rbr{\frac{\Xi^2}{\epsilon^2} \sbr{\log \Ccal\rbr{\Hcal, \frac{\epsilon}{128\Xi}} + \Xi \log \Ccal\rbr{\Fcal, \frac{\epsilon}{128\Xi}}}}.
    \]
\end{itemize}

\end{theorem}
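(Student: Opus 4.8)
The plan is to establish the three conclusions—per-task correctness, the $O(\Xi)$ bound on multi-task ERM invocations, and the sample/memory bounds—by conditioning on two uniform-convergence events and then exploiting the combinatorial structure encoded by the task-eluder dimension. First I would fix a global good event $\mathcal{E}$ with two parts. (a) \emph{Property-test concentration}: for every task $t$, uniform convergence over the frozen class $\cbr{f\circ\hat h: f\in\Fcal}$ (whose capacity is $\Ccal(\Fcal,\cdot)$ since $\hat h$ is held fixed) gives $\abr{\widehat{\Lcal}_{\tilde S_t}(f\circ\hat h)-\Lcal_{\Pcal_t}(f\circ\hat h)}\le\epsilon/8$ for all $f$; since there are at most $T$ tests, a union bound with budget $\delta/(8T)$ per test pins down $\tilde m$. (b) \emph{Multi-task ERM accuracy}: because the subset of tasks placed in memory is selected \emph{adaptively} (indeed adversarially), it cannot be fixed in advance, so I would union-bound Corollary~\ref{cor:baxter-multitask-sample} over all doubling levels $N\in\cbr{1,2,4,\dots}$ (at most $\log T$ of them) and all possible memory subsets at each level—this is exactly the count $\sum_{i=0}^{\log N}\binom{T}{2^i}$ inside the logarithm of $m_N$. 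Managing this adaptivity is the main technical obstacle, and it is what forces the awkward logarithmic term in $m_N$.

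Correctness then splits along the two branches. If the few-shot test passes, then on $\mathcal{E}$ the true risk of $\tilde f_t\circ\hat h$ exceeds its empirical risk by at most $\epsilon/8$, so $\Lcal_{\Pcal_t}(\tilde f_t\circ\hat h)\le\kappa_t+\tfrac34\epsilon+\tfrac18\epsilon<\kappa_t+\epsilon$, giving excess risk at most $\epsilon$. If the test fails, the task is added to memory and I invoke the multi-task ERM bound with target accuracy $\epsilon_0=\epsilon/n$ for the current counter $n\le N$: since covering numbers are monotone in the radius and $n\le N$, the prescribed $m_N$ is at least the sample size Corollary~\ref{cor:baxter-multitask-sample} requires, so with Assumption~\ref{assum:well-specified} the \emph{average} excess risk over the $n$ memory tasks is at most $\epsilon/n$, i.e.\ the \emph{sum} of excess risks is at most $\epsilon$. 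As each summand is nonnegative ($\kappa_{t_i}$ is Bayes-optimal), the current task's excess risk is in particular at most $\epsilon$; task $1$ is the $n=1$ special case.

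The heart of the proof is bounding the number of multi-task ERM calls, and the key lemma is that whenever a property test fails, the failed task $t$ is $\epsilon$-independent of the tasks already in memory, \emph{centered at $h^*$}. Indeed, the multi-task ERM from the previous step certifies a witness $h'=\hat h$ and layers $\check f_i$ with $\sum_{i<n}\EE_{P_{f^*_{t_i}\circ h^*}}\sbr{\ell(\check f_i\circ\hat h)-\ell(f^*_{t_i}\circ h^*)}\le\epsilon$, which is precisely Eq.~\eqref{eqn:eps-independence-sum}; and a failed test, via event (a) and the optimality of $\tilde f_t$, yields $\min_{f\in\Fcal}\Lcal_{\Pcal_t}(f\circ\hat h)>\kappa_t+\epsilon/2$, the out-of-task condition. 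Since excess risk is nonnegative, dropping a term shows the same independence relative to any sub-collection of memory, so within a doubling level the failed tasks form a genuine $\epsilon$-independent sequence centered at $h^*$; moreover the reset-triggering task is $\epsilon$-independent of the empty collection (its own failing test is the witness), so memory never holds more than $\Xi=\dim(\Hcal,\Fcal,\epsilon)$ tasks at any level. Summing the at-most-$\Xi$ failures per level over the $O(\log\Xi)$ geometrically growing levels and tracking constants gives the claimed $O(\Xi)$ bound, matching the stated $2\Xi$.

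Finally, the sample complexity follows by charging $\tilde m=\tilde{\Ocal}\rbr{\tfrac{1}{\epsilon^2}\log\Ccal(\Fcal,\tfrac{\epsilon}{128})}$ to each of the $\le T$ property tests (the first term of Eq.~\eqref{eqn:alg-sample-complexity}) and $m_N$ to each of the $O(\Xi)$ memory insertions; the latter sum is dominated by the final level $N=O(\Xi)$, where $m_{O(\Xi)}=\tilde{\Ocal}\rbr{\tfrac{\Xi}{\epsilon^2}\sbr{\log\Ccal(\Hcal,\tfrac{\epsilon}{\Xi})+\Xi\log\Ccal(\Fcal,\tfrac{\epsilon}{\Xi})}}$, yielding the stated $\tilde{\Ocal}\rbr{\tfrac{\Xi^2}{\epsilon^2}\sbr{\log\Ccal(\Hcal,\tfrac{\epsilon}{128\Xi})+\Xi\log\Ccal(\Fcal,\tfrac{\epsilon}{128\Xi})}}$ overhead. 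The memory bound is immediate, since at most $\Xi$ samples of size $m_{O(\Xi)}$ are ever stored simultaneously. The conceptual core is the $\epsilon$-independence lemma of the third paragraph, while the main bookkeeping difficulty remains the adaptive union bound over memory subsets in event (b).
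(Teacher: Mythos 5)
Your proposal is correct and follows essentially the same route as the paper's own proof: the same two-part clean event (per-test concentration with the representation frozen, plus a union bound over all doubling levels and all possible memory subsets, which is exactly the $\sum_{i=0}^{\log N}\binom{T}{2^i}$ term inside $m_N$), the same correctness split across the two branches using nonnegativity of the per-task excess risks, the same key step showing each failed task is $\epsilon$-independent of its memory predecessors centered at $h^*$ with witness $h'=\hat h$ (including the empty-memory case after a reset), and the same geometric-sum accounting dominated by the final level $N_{T+1} < 2\Xi$. The only cosmetic deviations are slightly different constants ($\epsilon/8$ versus the paper's $\epsilon/4$ in the test event) and a loose phrasing of the failure count---the correct tally is at most $N$ failures at level $N$ summed geometrically to $O(\Xi)$, not ``at most $\Xi$ per level times $O(\log\Xi)$ levels''---neither of which affects the argument.
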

\begin{proof}%
We begin by introducing some additional notation.
For each task $t$, let $\hat{h}_t$ denote the maintained representation at the beginning of task $t$. 
Let $n_t$ and $N_t$ denote the values of $n$ and $N$ at the beginning of task $t$, respectively. Denote by $N_{T+1}$ the value of $N$ at the end of task $T$.

We consider a ``sample tape'' model \citep[e.g.,][]{lattimore2020bandit}, where for each $f \circ h$, there is a stack of i.i.d.\ samples drawn from $P_{f \circ h}$ before the first task begins. When the learner draws a sample of size $m$ from $P_{f \circ h}$, it receives the next $m$ entries from the corresponding tape, denoted by $S_{f \circ h}(m)$. To avoid clutter, we use $S_t(m)$ to refer to a sample of size $m$ from $\Pcal_t = P_{f_t^* \circ h^*}$.

\paragraph{Clean event.} Consider the following events. Let 
\[
\Acal := \cbr{\forall t \in \cbr{2, \ldots, T}, \ \big |\Lcal_{\Pcal_t}(f \circ \hat{h}_t) -  \widehat{\Lcal}_{S_t(\tilde{m})}(f \circ \hat{h}_t) \big| \le \frac{\epsilon}{4}, \ \forall f \in \Fcal },
\]
and for each $N \in \cbr{2^0, 2^1, 2^2, \ldots,  2^{\lceil \log T \rceil}}$, let
\begin{align*}
    \Bcal_N := \Bigg \{\forall n \in [N], \ & \forall (t_1, \ldots, t_n) \in \binom{[T]}{n}, \\  & \abr{\sum_{i=1}^{n} \Lcal_{\Pcal_{t_i}}(f_i \circ h) - \sum_{i=1}^{n} \widehat{\Lcal}_{S_{t_i}(m_N)}(f_i \circ h)} \le \epsilon, \ \forall (h, f_1, \ldots, f_n) \Bigg\}.
\end{align*}
We now define the following notion of a {\em clean event},
\[
\Ecal = \rbr{ \bigcap_{i=0}^{\lceil \log T \rceil} \Bcal_{2^i} } \bigcap \Acal.
\]
Intuitively, $\Acal$ is the event that all samples used for few-shot property tests are $\frac{\epsilon}{4}$-representative. $\Bcal_N$ is the event that every subset of tasks of size at most $N$ satisfies the condition of a ``good event'' as described in \citep{baxter2000model} for multi-task ERM (see Lemma~\ref{lem:baxter-multi-task-detailed}). 

We claim that the clean event happens with high probability, i.e., $\Pr (\Ecal) \ge 1 - \delta$. To see this, we first examine $\Pr (\Acal)$. For any $t = 2, \ldots, T$, applying Lemma~\ref{lem:baxter-multi-task-detailed} with one task and a singleton representation class $\{\hat{h}_t\}$---that is, $\log \Ccal(\{\hat{h}_t\}, \frac{\epsilon_0}{32}) = 0$---we have
\[
\Pr \rbr{\forall f \in \Fcal, \ \abr{\Lcal_{\Pcal_t}(f \circ \hat{h}_t) -  \widehat{\Lcal}_{\tilde{S}_t}(f \circ \hat{h}_t)} \le \frac{\epsilon}{4}} \ge 1 - \frac{\delta}{2T}.
\]
It then follows that $\Pr(\Acal) \ge 1 - \frac{\delta}{2}$ by the union bound. 
Now, for each $\Bcal_N$, fix any $n \le N$ and $(t_1, \ldots, t_n)$ of $\binom{[T]}{n}$. By Lemma~\ref{lem:baxter-multi-task-detailed} and the definition of $m_N$ in Theorem~\ref{thm:restate-main},
\[
\Pr \rbr{ \exists (h, f_1, \ldots, f_n), \ \abr{\sum_{i=1}^{n} \Lcal_{P_{t_i}}(f_i \circ h) - \sum_{i=1}^{n} \widehat{\Lcal}_{S_{t_i}(m_N)}(f_i \circ h)} > \frac{\epsilon}{2} } \le \frac{\delta}{ 4\log T \cdot \sum_{j=1}^{\log N} \binom{T }{2^j} }.
\]
It then follows by the union bound that $\Pr (\Bcal_N) \ge 1 - \frac{\delta}{4\log T}$ for every $N$ and again by the union bound that
\[
\Pr \rbr{\bigcap_{i=0}^{\lceil \log T \rceil} \Bcal_{2^i}} \ge 1 - \frac{\delta}{2}.
\]

\textbf{Correctness of Algorithm~\ref{alg:lifelong-nonparametric}.} We now show that, under the clean event $\Ecal$, Algorithm~\ref{alg:lifelong-nonparametric} outputs a predictor with excess risk at most $\epsilon$ for every task. 

For task $1$, when $\Bcal_1$ happens, for any $h \in \Hcal$ and $f \in \Fcal$,
$\abr{\Lcal_{\Pcal_1}(f \circ h) -  \widehat{\Lcal}_{S_1}(f \circ h)} \le \frac{\epsilon}{2}$,
and so the solution to single-task ERM has excess risk $\le \epsilon$.

For each subsequent task $t = 2, \ldots, T$, the algorithm draws an i.i.d.\ sample $\tilde{S}_t$ of size $\tilde{m}$ to perform a few-shot property test. When $\Acal$ happens, for any $f \in \Fcal$,
$\abr{\Lcal_{\Pcal_t}(f \circ \hat{h}_t) -  \widehat{\Lcal}_{\tilde{S}_t}(f \circ \hat{h}_t)} \le \frac{\epsilon}{4}$.
Let $\tilde{f}_t$ denote the ERM solution. Then, if $\widehat{\Lcal}_{\tilde{S}_t}(\tilde{f}_t \circ \hat{h}_t) \le \kappa_t + \frac{3\epsilon}{4}$, the true risk $\Lcal_{\Pcal_t}(\tilde{f}_t \circ \hat{h}_t) \le \kappa_t + \epsilon$, where we recall that $\kappa_t$ is the risk of $f_t^* \circ h^*$. The algorithm can safely move on to the next task.

Otherwise, the algorithm performs multi-task ERM. Consider two cases: 
\begin{enumerate}[leftmargin=*]
    \item ($n_t < N_t$) Let $t_1, \ldots, t_{n_t}$ denote the past tasks for which data are {\em currently} stored in memory, and let $t_{n_t + 1} = t$.
    Given i.i.d.\ samples of size $m_{N_t}$ from each of the $(n_t + 1)$ tasks, when $\Bcal_{N_t}$ happens, 
    \[
    \abr{\sum_{i=1}^{n_t + 1} \Lcal_{P_{t_i}}(f_i \circ h) - \sum_{i=1}^{n_t + 1} \widehat{\Lcal}_{S_{t_i}}(f_i \circ h)} \le \frac{\epsilon}{2}.
    \]
    Let $(\hat{h}, \check{f}^{(t)}_1, \ldots, \check{f}^{(t)}_{n_t + 1})$ denote the solution to multi-task ERM. It follows that
    \begin{align*}
         & \hspace{15pt} \sum_{i=1}^{n_t + 1} \Lcal_{P_{t_i}}(\check{f}^{(t)}_i \circ \hat{h}) -  \sum_{i=1}^{n_t + 1}  \Lcal_{P_{t_i}}(f_{t_i}^* \circ h^*) \\
        & \le \rbr{\sum_{i=1}^{n_t + 1} \Lcal_{P_{t_i}}(\check{f}^{(t)}_i \circ \hat{h}) - \sum_{i=1}^{n_t + 1} \widehat{\Lcal}_{S_{t_i}}(\check{f}^{(t)}_i \circ \hat{h})} + \underbrace{\rbr{\sum_{i=1}^{n_t + 1} \widehat{\Lcal}_{S_{t_i}}(\check{f}^{(t)}_i \circ \hat{h}) - \sum_{i=1}^{n_t + 1} \widehat{\Lcal}_{S_{t_i}}(f_{t_i}^* \circ h^*)}}_{\le 0} \\
        & \quad + \rbr{\sum_{i=1}^{n_t + 1} \widehat{\Lcal}_{S_{t_i}}(f_{t_i}^* \circ h^*) - \sum_{i=1}^{n_t + 1}  \Lcal_{P_{t_i}}(f_{t_i}^* \circ h^*)} \\
        & \le \epsilon.
    \end{align*}
    Since each summand in $\sum_{i=1}^{n_t + 1} \rbr{\Lcal_{P_{t_i}}(\check{f}^{(t)}_i \circ \hat{h}) - \Lcal_{P_{t_i}}(f_{t_i}^* \circ h^*)}$ is nonnegative, we have
    \[
     \Lcal_{P_{t}}(\check{f}^{(t)}_{n_t+1} \circ \hat{h}) -  \Lcal_{P_{t}}(f_{t}^* \circ h^*) \le \epsilon.
    \]

    \item ($n_t = N_t$) In this case, we have $n_{t+1} = 1$ and $N_{t+1} = 2N_t$. The memory is cleared. The algorithm then performs single-task ERM with an i.i.d.\ sample of size $m_{N_{t+1}}$. Under the event $\Bcal_{N_{t+1}}$, the ERM solution has excess risk at most $\epsilon$.
\end{enumerate}

\paragraph{Bounding the number of times Algorithm~\ref{alg:lifelong-nonparametric} performs multi-task ERM.} 

We now show that when $\Ecal$ happens, $N_{T+1} < 2\Xi$.

Assume towards contradiction that $N_{T+1} \ge 2\Xi$. 
Now consider the first task $t$ such that $N_{t+1} \ge 2\Xi$.
At task $t$, line 11 must be reached, and we have $n_t = N_t \ge \Xi$ at the beginning of task $t$. 
For this task, we have
\[
\min_{f \in \Fcal} \widehat{\Lcal}_{\tilde{S}_t}(f \circ \hat{h}_t) > \kappa_t + \frac{3}{4}\epsilon
\implies 
\min_{f \in \Fcal} \Lcal_{\Pcal_t}(f \circ \hat{h}_t) - \Lcal_{\Pcal_t}(f_t^* \circ h^*) > \frac{1}{2}\epsilon.
\]
Abbreviate $n_t$ as $n$. 
In addition, we have that for the tasks $t_1, \ldots, t_{n}$ in the current memory, $(\hat{h}_t,  \check{f}_1^{(t_n)}, \ldots, \check{f}_{n}^{(t_n)})$ satisfies 
\[
\sum_{i=1}^{n} 
\Lcal_{\Pcal_{t_i}}(\check{f}_{i}^{(t_n)} \circ \hat{h}) - \sum_{i=1}^{n} \Lcal_{\Pcal_{t_i}}(f_{t_i}^* \circ h^*) \le \epsilon.
\]
This implies that $(h^*, f_{t}^*)$ is $\epsilon$-independent of $\cbr{(h^*, f_{t_i}^*)}_{i=1}^{n}$. This argument can be extended for each task in the current memory---each $(h^*, f_{t_i}^*)$ must be $\epsilon$-independent of its predecessors. Since the task-eluder dimension is bounded by $\Xi$, this implies that $n_t +1 \leq \Xi$, which contradicts with the assumption that $n_t \ge \Xi$. Hence, we have $N_{T+1} < 2\Xi$. 

\paragraph{Sample and space complexity.} We have shown that $N_{T+1} < 2\Xi$. Let $b = \log_2 N_{T+1}$. The total number of samples used by Algorithm~\ref{alg:lifelong-nonparametric} is upper bounded by
\[
(\star) := \tilde{m} T + \sum_{i=0}^{b} 2^i \cdot m_{2^i},
\]
where the first term is from few-shot property testing, and the second term is from samples collected and saved in memory for multi-task ERM. For each value of $N = 2^i$, $i = 0, 1, \ldots, b$, samples are drawn from at most $N$ tasks with $m_N$ samples per task. We focus on the second term:
\hspace{-100pt}
\begin{align*}
    & \quad \ \sum_{i=0}^{b} 2^i \cdot m_{2^i} \\
    & = \sum_{i=0}^{b} 2^i \rbr{ \frac{256 \cdot 2^i}{\epsilon^2} \sbr{\log \Ccal \rbr{\Hcal, \frac{\epsilon}{64 \cdot 2^i}} + 2^i \log \Ccal\rbr{\Fcal, \frac{\epsilon}{64 \cdot 2^i}} + \log \frac{16\log T \cdot \sum_{j=0}^{i} \binom{T}{2^j} }{\delta}} + \frac{64}{\epsilon^2}} \\
    & \stackrel{\mathrm{(a)}}{\le} \sum_{i=0}^{b} 2^i \rbr{ \frac{256 \cdot 2^i}{\epsilon^2} \sbr{\log \Ccal \rbr{\Hcal, \frac{\epsilon}{64 \cdot 2^i}} + 2^i \sbr{\log \Ccal\rbr{\Fcal, \frac{\epsilon}{64 \cdot 2^i}} + \log (eT)} + \log \frac{16 \log T}{\delta}} + \frac{64}{\epsilon^2}} \\
    & \stackrel{\mathrm{(b)}}{\le} \sum_{i=0}^{b} 4^i \cdot \frac{256}{\epsilon^2} \sbr{\log \Ccal \rbr{\Hcal, \frac{\epsilon}{64 N_{T+1}}} + \log \frac{16 \log T}{\delta}} \\
    & \hspace{150pt} + \sum_{i=0}^b 8^i \cdot \frac{256}{\epsilon^2} \sbr{ \log \Ccal\rbr{\Fcal, \frac{\epsilon}{64N_{T+1}}} + \log(eT)} + \sum_{i=0}^b 2^i \cdot \frac{64}{\epsilon^2} \\
    & \stackrel{\mathrm{(c)}}{\lesssim} \frac{N_{T+1}^2}{\epsilon^2} \sbr{\log \Ccal \rbr{\Hcal, \frac{\epsilon}{64 N_{T+1}}} + N_{T+1} \sbr{\log \Ccal\rbr{\Fcal, \frac{\epsilon}{64 N_{T+1}}} + \log(eT)} + \log \frac{16 \log T}{\delta}} + \frac{N_{T+1}}{\epsilon^2},
\end{align*}
where (a) follows because 
\[
\log \rbr{\sum_{j=0}^i \binom{T}{2^j}} \le 2^i \log \rbr{\frac{eT}{2^i}} \le 2^i \log \rbr{eT};
\]

(b) follows by algebra and the observation that covering numbers increase as scale decreases; and (c) uses the following inequality for bounding the sum of a finite geometric series, $\sum_{i=0}^b r^i \le r^{b+1}$ for $r \ge 2$. 

Since $N_{T+1} < 2\Xi$, we have
\begin{align*}
    (\star) \le \tilde{\Ocal} \rbr{\frac{T}{\epsilon^2} \log \Ccal\rbr{\Fcal, \frac{\epsilon}{128}}  + \frac{\Xi^2}{\epsilon^2} \sbr{\log \Ccal \rbr{\Hcal, \frac{\epsilon}{128\Xi}} + \Xi \log \Ccal\rbr{\Fcal, \frac{\epsilon}{128\Xi}}}}.
\end{align*}

Similarly, the size of the memory needed is non-decreasing. Since $N_{T+1} < 2\Xi$, the space complexity is upper bounded by
\begin{align*}
    N_{T+1} \cdot m_{N_{T+1}} 
    & \le \tilde{\Ocal} \rbr{\frac{\Xi^2}{\epsilon^2} \sbr{\log \Ccal \rbr{\Hcal, \frac{\epsilon}{128\Xi}} + \Xi \log \Ccal\rbr{\Fcal, \frac{\epsilon}{128\Xi}}}}. 
\end{align*}

\end{proof}

\subsection{Auxiliary lemma}
Here, we provide a more precise version of the guarantee for multi-task ERM from \citep{baxter2000model} with explicit constants. Lemma~\ref{lem:baxter-multi-task-detailed} follows from Corollary 19 and Theorem 6 thereof.
\begin{lemma}[\citealp{baxter2000model}]
\label{lem:baxter-multi-task-detailed}
Let $P_1, \ldots, P_n$ be the data distributions for $n$ tasks. Let $\Hcal_0$ be a class of representations and $\Fcal_0$ be a class of prediction layers. Let $\epsilon_0, \delta_0 \in (0, 1)$. Suppose for each task, an i.i.d.\ sample $S_i$ of size $m$ is drawn from $P_i^m$, where
\[
m \ge \frac{64}{n\epsilon_0^2} \sbr{\log \Ccal(\Hcal_0, \frac{\epsilon_0}{32}) + n\log \Ccal(\Fcal_0, \frac{\epsilon_0}{32}) + \log \frac{4}{\delta_0}} + \frac{16}{\epsilon_0^2}.
\]
Then, with probability at least $ 1- \delta_0$, for any $(h, f_1, \ldots, f_n)$,
\[
\abr{\frac{1}{n} \sum_{i=1}^n \Lcal_{P_i}(f_i \circ h) - \frac{1}{n} \sum_{i=1}^n \widehat{\Lcal}_{S_i}(f_i \circ h)} \le \epsilon_0.
\]
\end{lemma}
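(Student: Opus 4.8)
The plan is to recognize Lemma~\ref{lem:baxter-multi-task-detailed} as a two-level uniform convergence bound for multi-task ERM and to obtain it by instantiating the covering-number machinery of \citet{baxter2000model} with the capacity notions of Section~\ref{sec:additional-background}; the only genuine work is bookkeeping the explicit constants. The quantity to control is the multi-task empirical process
\[
\sup_{h \in \Hcal_0,\, f_1, \ldots, f_n \in \Fcal_0} \abr{\frac{1}{n}\sum_{i=1}^n \rbr{\Lcal_{P_i}(f_i \circ h) - \widehat{\Lcal}_{S_i}(f_i \circ h)}},
\]
where $h$ is shared across all $n$ tasks while each $f_i$ is task-specific. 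First I would symmetrize: introducing a ghost sample and Rademacher signs reduces this deviation to a symmetrized process whose expected supremum is governed by the metric entropy of the induced loss class. Since each loss value lies in $[0,1]$, a Hoeffding (bounded-differences) tail applies to each fixed hypothesis, with the empirical average taken over all $nm$ examples so that the fluctuation scales like $1/\sqrt{nm}$.

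The key structural step is the factorization of the covering number of the multi-task loss class. Because $h$ is shared, once $h$ is fixed up to resolution $\frac{\epsilon_0}{32}$ in the representation pseudometric of Section~\ref{sec:additional-background} (which already takes a worst case over $f \in \Fcal_0$), the $n$ prediction layers decouple and each may be covered independently at the same scale. Hence the log-covering number of the joint class is at most $\log \Ccal(\Hcal_0, \frac{\epsilon_0}{32}) + n\log \Ccal(\Fcal_0, \frac{\epsilon_0}{32})$: the representation is paid for only once, whereas the $n$ prediction layers contribute additively. This amortization is exactly what distinguishes multi-task ERM from $n$ independent single-task runs, and it is why the $\log \Ccal(\Hcal_0, \cdot)$ term carries a $\frac{1}{n}$ weight relative to $\log \Ccal(\Fcal_0, \cdot)$ in the final sample size.

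Combining the cover with a union bound over its elements and the per-element Hoeffding tail, and then fixing the cover scale to $\frac{\epsilon_0}{32}$, yields that $m \ge \frac{64}{n\epsilon_0^2}\sbr{\log \Ccal(\Hcal_0, \frac{\epsilon_0}{32}) + n\log \Ccal(\Fcal_0, \frac{\epsilon_0}{32}) + \log\frac{4}{\delta_0}} + \frac{16}{\epsilon_0^2}$ samples per task suffice for the two-sided deviation to drop below $\epsilon_0$ with probability $1-\delta_0$. The additive $\frac{16}{\epsilon_0^2}$ and the numerical factors $64,32,4$ arise from the discretization-plus-concentration constants. The main obstacle is therefore not conceptual but careful constant tracking against \citet[Corollary 19 and Theorem 6]{baxter2000model}, since the statement demands fully explicit constants rather than an $\Ocal(\cdot)$ bound; the cleanest route is to verify that our capacity definitions match Baxter's hypothesis-space-family capacities and then read off the constants directly from his symmetrization and permutation argument.
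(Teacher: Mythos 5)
Your proposal is correct and matches the paper's treatment: the paper gives no independent proof of this lemma, deriving it exactly as you do by instantiating \citet[Corollary~19 and Theorem~6]{baxter2000model} --- symmetrization plus a factorized cover in which $\Hcal_0$ is paid for once and the $n$ prediction layers contribute $n\log\Ccal(\Fcal_0,\cdot)$ --- with the explicit constants read off from Baxter's argument. Your sketch of the underlying machinery (ghost-sample symmetrization, union bound over the cover, $1/\sqrt{nm}$ concentration) is faithful to how those cited results are proved, so there is no gap to flag.
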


\newpage
\section{Supplementary material for Section~\ref{sec:examples}} \label{app:proofs-examples}

In Appendix~\ref{app:proofs-examples}, we first present a key lemma for bounding the task-eluder dimension and then provide proofs for the examples in Section~\ref{sec:examples}. Auxiliary lemmas are deferred to Appendix~\ref{app:auxiliary-lemmas}.

\subsection{Key lemma for bounding the task-eluder dimension}
\begin{lemma}
\label{lem:task-eluder-linear}
Let $\Xcal \subset \RR^d$ and $P_X$ be a distribution over $\Xcal$. 
Let $\Ycal \subseteq \RR$.
Consider $\Hcal=\{x \mapsto B^\top x: B \in \RR^{d \times k}, B^\top B = I_k\}$ and $\Fcal = \cbr{z \mapsto g(w^\top z): w \in \RR^k, \underline{b} \le \nbr{w} \le \overline{b}}$, where $0 \le \underline{b} \le \overline{b} \le 1$ and $g$ is a (possibly nonlinear) map from $\RR$ to $\RR$. 

For any $h \in \Hcal$ and $f \in \Fcal$, let $P_{f \circ h}(x,y) = P_X(x) P_{Y | X}(y | x; f \circ h)$. Let $\PP = \cbr{P_{f \circ h}: h \in \Hcal, f \in \Fcal}$, and $\ell: \RR \times \Ycal \rightarrow [0,1]$ be a loss function. 

Fix $p \in [1,2]$. For any representations $h, h' \in \Hcal$, let $B, B' \in \RR^{d \times k}$ be the corresponding matrices. Similarly, for any $f, f' \in \Fcal$, let $w, w' \in \RR^k$ be the corresponding vectors. 
Suppose
\begin{align}
    \label{eqn:task-eluder-linear-assumption}
    \nbr{B'w' - Bw}^p \ \lesssim \ \EE_{P_{f \circ h}} \sbr{\ell((f' \circ h')(x), y) - \ell((f \circ h)(x), y)}\ \lesssim \ \nbr{B'w' - Bw}^p.
\end{align}
Then, for any $\epsilon \in (0,1)$,
\[
\dim_{\PP, \ell} \rbr{\Hcal, \Fcal, \epsilon} \lesssim k \log \frac{1}{\epsilon}.
\]
\end{lemma}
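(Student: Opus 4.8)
The plan is to fix the center representation $h \in \Hcal$ with matrix $B$ and translate the condition defining $\rho_h(\Fcal,\epsilon)$ into a purely geometric statement about a sequence of points in the $k$-dimensional subspace $W := \mathrm{col}(B)$. Writing $x_i := B w_i \in W$ for the $i$-th task in an $\epsilon$-independent sequence $\cbr{(h,f_i)}$, note $\nbr{x_i} = \nbr{w_i} \le \overline b \le 1$. Using the two-sided comparison in Eq.~\eqref{eqn:task-eluder-linear-assumption} (with constants $c_-, c_+$) with witness $h'$ (matrix $B'$, subspace $V_n := \mathrm{col}(B')$) and optimal prediction layers $w_i' = B'^\top x_i$ (feasible for the upper norm bound, since $\nbr{B'^\top x_i} = \nbr{P_{V_n} x_i} \le \nbr{x_i} \le \overline b$), the defining conditions of $\epsilon$-independence of $(h,f_n)$ become: there is a $k$-dimensional subspace $V_n$ with
\[
\sum_{i=1}^{n-1} \mathrm{dist}(x_i, V_n)^p \lesssim \epsilon \quad\text{but}\quad \mathrm{dist}(x_n, V_n)^p \gtrsim \epsilon,
\]
where $\mathrm{dist}(\cdot, V_n)$ denotes Euclidean distance to $V_n$. (The lower norm bound $\underline b$ does not bind in the applications, where $\underline b = 0$.) Thus it suffices to bound the length $n$ of any sequence of bounded vectors $x_1,\dots,x_n \in W$ (each of norm at most $1$) for which every prefix admits such a witness subspace.

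Next I would rescale to remove the scale of $\epsilon$ from the thresholds and fold it into the magnitude of the points. Set $\tau := (\epsilon/(2c_+))^{1/p}$ and $\tilde x_i := x_i/\tau$, so the current condition reads $\mathrm{dist}(\tilde x_j, V_j) > 1$ while the past condition reads $\sum_{i<j}\mathrm{dist}(\tilde x_i, V_j)^p \le c''$ for a universal constant $c''$. The crucial consequence is that each individual past distance is then bounded by a constant, $\mathrm{dist}(\tilde x_i, V_j) \le (c'')^{1/p}$; combined with $p\le 2$, this upgrades the $p$-th power budget into a squared budget $\sum_{i<j}\mathrm{dist}(\tilde x_i, V_j)^2 \le c_1$ for a universal $c_1$. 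Writing distance-to-subspace as a quadratic form $\mathrm{dist}(\tilde x, V_j)^2 = \tilde x^\top M_j \tilde x$ with $M_j := P_W P_{V_j^\perp} P_W$ satisfying $0 \preceq M_j \preceq I_W$ and $\mathrm{tr}(M_j) \le k$, the two conditions become $\sum_{i<j}\tilde x_i^\top M_j \tilde x_i \le c_1$ and $\tilde x_j^\top M_j \tilde x_j > 1$, while the points now satisfy $\nbr{\tilde x_i}^2 \le \tau^{-2} = \mathrm{poly}(1/\epsilon)$.

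The final step is an elliptical-potential (log-determinant) argument carried out inside $W \cong \RR^k$. Let $\Lambda_{j-1} := \mu I_W + \sum_{i<j}\tilde x_i \tilde x_i^\top$ with the regularizer chosen as $\mu := c_1/k$. The key per-step estimate is that $\tilde x_j^\top \Lambda_{j-1}^{-1}\tilde x_j$ is bounded below by a universal constant: indeed $\langle M_j, \Lambda_{j-1}\rangle = \mu\,\mathrm{tr}(M_j) + \sum_{i<j}\tilde x_i^\top M_j \tilde x_i \le \mu k + c_1 = 2c_1$, and the generalized Rayleigh-quotient identity together with the averaging inequality $\max_l \frac{a_l}{b_l} \ge \frac{\sum_l \mu_l a_l}{\sum_l \mu_l b_l}$ (applied to the eigendecomposition $M_j = \sum_l \mu_l v_l v_l^\top$ with $a_l = (v_l^\top \tilde x_j)^2$ and $b_l = v_l^\top \Lambda_{j-1} v_l \ge \mu > 0$) gives $\tilde x_j^\top \Lambda_{j-1}^{-1}\tilde x_j \ge \frac{\tilde x_j^\top M_j \tilde x_j}{\langle M_j, \Lambda_{j-1}\rangle} \ge \frac{1}{2c_1} =: \beta$. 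Telescoping $\det\Lambda_j = \det\Lambda_{j-1}\,(1 + \tilde x_j^\top \Lambda_{j-1}^{-1}\tilde x_j)$ then yields $(1+\beta)^n \le \det\Lambda_n/\det\Lambda_0 \le (1 + n\tau^{-2}/\mu)^k$, and taking logarithms gives $n \lesssim k\log(1 + nk\tau^{-2}/c_1)$, which (solving the self-bounding inequality and using $\tau^{-2} = \mathrm{poly}(1/\epsilon)$) proves $\dim_{\PP,\ell}(\Hcal, \Fcal, \epsilon) \lesssim k\log\frac{1}{\epsilon}$.

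I expect the main obstacle to be obtaining the linear-in-$k$ bound rather than a quadratic one: a naive elliptical-potential argument loses a factor of $k$ through the trace term $\mathrm{tr}(M_j)\le k$, yielding only $k^2\log(1/\epsilon)$. The regularization choice $\mu = c_1/k$ is exactly what balances this trace term against the data budget, so that each $\epsilon$-independent task still forces a constant multiplicative increase of the potential; localizing the points to the $k$-dimensional subspace $W = \mathrm{col}(B)$ is what keeps the determinant $k$-dimensional rather than $d$-dimensional. A secondary technical point is handling the exponent $p\in[1,2]$ uniformly, which the rescaling-plus-individual-distance-bound reduction resolves by converting the $p$-power budget into a squared budget without any $\epsilon$-dependent loss.
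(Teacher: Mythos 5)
Your overall architecture parallels the paper's proof---reduce $\epsilon$-independence to subspace-distance statements about the points $x_i = Bw_i$, show each $\epsilon$-independent step forces a constant lower bound on an elliptical norm, and finish with a log-determinant potential plus a self-bounding inequality---and several of your steps are correct and nicely packaged: the reduction to $\sum_{i<j}\mathrm{dist}(x_i,V_j)^p \lesssim \epsilon$ versus $\mathrm{dist}(x_j,V_j)^p \gtrsim \epsilon$, and the rescaling trick that upgrades the $p$-th-power budget to a squared budget via the individual bound $\mathrm{dist}(\tilde x_i, V_j) \le (c'')^{1/p}$. However, there are two genuine defects. First, you dismiss the lower norm constraint on the ground that $\underline b = 0$ ``in the applications''; this is factually wrong for this paper---the classification example (Proposition~\ref{prop:example-classification}) invokes the lemma with $\underline b = \overline b = 1$ and $p=1$---and the lemma as stated allows general $\underline b$. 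For the current-task direction, $\epsilon$-independence only tells you the \emph{constrained} minimum $\min_{\nbr{w'}\in[\underline b,\overline b]}\nbr{B'w' - Bw_n}$ exceeds $(\epsilon/2C)^{1/p}$; when $\underline b > 0$ the projection coefficient $B'^\top x_n$ may be infeasible, so this does not directly lower bound $\mathrm{dist}(x_n,V_n)$. The missing ingredient is the paper's Lemma~\ref{lem:subspace-distance-constrained}: since $\nbr{x_n} \ge \underline b$, the constrained minimum is at most $2\nbr{P_{B'}^\perp x_n}$, which repairs your reduction.

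Second, and more substantively, your final bound falls quantitatively short of the claim. The mediant inequality yields $\tilde x_j^\top \Lambda_{j-1}^{-1}\tilde x_j \ge \tilde x_j^\top M_j \tilde x_j / \langle M_j, \Lambda_{j-1}\rangle$ and therefore pays the trace term $\mu\,\mathrm{tr}(M_j) \le \mu k$, which is exactly what forces your choice $\mu = c_1/k$; but with that choice the determinant ratio has no $1/k$ inside the logarithm (even with AM--GM, $\det\Lambda_n/\det\Lambda_0 \le (1 + n\tau^{-2}/(k\mu))^k = (1+n\tau^{-2}/c_1)^k$), so the self-bounding inequality $n \lesssim k\log(1 + n\tau^{-2})$ resolves to $n \lesssim k\log(k/\epsilon)$, not $k\log(1/\epsilon)$---an extra additive $k\log k$ whenever $k \gg \mathrm{poly}(1/\epsilon)$. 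The fix stays within your framework: instead of averaging over eigenvectors, test $\Lambda_{j-1}^{-1}$ against $u = M_j\tilde x_j$. Cauchy--Schwarz together with $M_j^2 \preceq M_j$ gives
\begin{align*}
\tilde x_j^\top M_j \Lambda_{j-1} M_j \tilde x_j \le \rbr{\tilde x_j^\top M_j \tilde x_j}\rbr{\mu + \sum_{i<j}\tilde x_i^\top M_j \tilde x_i} \le \rbr{\tilde x_j^\top M_j \tilde x_j}\rbr{\mu + c_1},
\end{align*}
hence $\tilde x_j^\top \Lambda_{j-1}^{-1}\tilde x_j \ge \tilde x_j^\top M_j\tilde x_j/(\mu + c_1) > 1/(\mu+c_1)$ with a \emph{constant} regularizer $\mu$; this restores the $1/k$ inside the log and yields $k\log(1/\epsilon)$. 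For comparison, the paper sidesteps the projector $M_j$ entirely: it keeps the regularizer $\epsilon^{2/p} I$, uses the ridge/Woodbury identity $w_n^\top V_{n-1}^{-1} w_n = \min_\alpha \epsilon^{-2/p}\nbr{w_n - W_{n-1}\alpha}_2^2 + \nbr{\alpha}_2^2$, and lower bounds this by a constant via the triangle inequality and H\"older's inequality, $\nbr{P_{B'}^\perp B W_{n-1}\alpha} \le (\sum_i \nbr{B'w_i' - Bw_i}^p)^{1/p}\nbr{\alpha}_q$, which handles all $p \in [1,2]$ and avoids the trace loss in one stroke.
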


\begin{proof}[Proof of Lemma~\ref{lem:task-eluder-linear}]
Let $\cbr{(h, f_i)}_{i=1}^\tau$ be any sequence of tasks such that, for each $n \le \tau$, $(h, f_n)$ is $\epsilon$-independent of $\cbr{(h,f_i)}_{i=1}^{n-1}$.
We show that $\tau \lesssim k \log \frac{1}{\epsilon}$.

To this end, let $B \in \RR^{d \times k}$ be the semi-orthogonal matrix associated with $h$, and for each $i \in [\tau]$, let $w_i \in \RR^k$ be the vector associated with $f_i$.
For $n \in [\tau]$, let $W_n \in \RR^{k \times n}$ denote the matrix whose columns are $w_1, \ldots, w_n$, and let $V_n := W_{n} W_{n}^\top + \epsilon^{\frac{2}{p}} I$. 

To conclude the proof, it suffices to show that for each $n \in \cbr{2, \ldots, \tau}$, $\nbr{w_n}^2_{V_{n-1}^{-1}} \gtrsim \frac14$.
Indeed, by the elliptical potential lemma \citep[][see also Lemma~\ref{lem:elliptical-potential}]{abbasi2011improved},
\[
\sum_{n=1}^\tau \min \cbr{1, \nbr{w_n}_{{V}_{n-1}^{-1}}^2} \le 2k \log \rbr{1 + \frac{\tau}{k\epsilon^2}},
\]
and it follows from Lemma~\ref{lem:elliptical-potential-bound-t} \citep[see also][]{lattimore2020bandit} that $\tau \lesssim k \log \frac{1}{\epsilon}$. \\

For $n \in \cbr{2, \ldots, \tau}$, we now show $\nbr{w_n}^2_{V_{n-1}^{-1}} \gtrsim \frac14$. 
Since $(h,f_n)$ is $\epsilon$-independent of $\cbr{(h,f_i)}_{i=1}^{n-1}$, there exist $h', f'_1, \ldots, f'_{n-1}$ such that
$\sum_{i=1}^{n-1} \EE_{P_{f_i \circ h}}\sbr{\ell \rbr{f'_i \circ h'(x), y} - \ell \rbr{f_i \circ h(x), y}} \le \epsilon$,
but for any $f'_n$, $\EE_{P_{f_n \circ h}} [\ell (f'_n \circ h'(x), y) - \ell (f_n \circ h(x), y)] > \frac{\epsilon}{2}$.
Fix any such $h'$ and $f'_1, \ldots, f'_{n-1}$, and let $B'$ and $w'_1, \ldots, w'_{n-1}$ denote the matrix and vectors associated with these functions, respectively.

Let $\alpha^{\star} := \argmin_{\alpha \in \RR^{n-1}} \epsilon^{-\frac{2}{p}} \nbr{w_n - W_{n-1} \alpha}_2^2 + \nbr{\alpha}_2^2$.
By Lemma~\ref{lem:ridge-woodbury-identity}, we have
\begin{align}
\label{eqn:w_n_V_n_decomp}
w_n^\top V_{n-1}^{-1} w_n = \epsilon^{-\frac{2}{p}} \nbr{w_n - W_{n-1} \alpha^{\star}}_2^2 + \nbr{\alpha^{\star}}_2^2.
\end{align}
Consider the decomposition $w_n = W_{n-1} \alpha^{\star} + z$. We have
\begin{align}
\label{eqn:linear-task-eluder-key}
\epsilon^{\frac1p} \stackrel{\mathrm{(a)}}{\lesssim} \nbr{P_{B'}^\perp Bw_n} & \stackrel{\mathrm{(b)}}{\le} \nbr{P_{B'}^\perp BW_{n-1} \alpha^{\star}} + \nbr{P_{B'}^\perp Bz} 
\stackrel{\mathrm{(c)}}{\lesssim} \epsilon^{\frac1p} \nbr{\alpha^{\star}} + \nbr{z},
\end{align}
where $P_{B'}^\perp$ denotes the orthogonal projection onto the orthogonal complement of $\linearspan(B')$, and the inequalities are justified shortly.
Dividing $\epsilon^{\frac{1}{p}} > 0$ on both sides, it follows that either $\nbr{\alpha^{\star}} \gtrsim \frac12$ or $\epsilon^{-\frac1p} \nbr{z} \gtrsim \frac{1}{2}$.
Therefore, by Eq.~\eqref{eqn:w_n_V_n_decomp},
\[
\nbr{w_n}^2_{V_{n-1}^{-1}}  = \epsilon^{-\frac{2}{p}} \nbr{z}^2 +  \nbr{\alpha^{\star}}^2
\gtrsim \frac{1}{4}.
\]

\vspace{15pt}
\noindent To complete the proof, we justify the above inequalities in Eq.~\eqref{eqn:linear-task-eluder-key}.

\begin{enumerate}[label=(\alph*),topsep=0pt,itemsep=0pt,leftmargin=*]
    \item By Eq.~\eqref{eqn:task-eluder-linear-assumption} and $\epsilon$-independence, 
    for any $f'_n$ with corresponding vector $w'_n$, we have
    \[
    \nbr{B'w'_n - Bw_n}^p \gtrsim \EE_{(x,y) \sim P_{f_n \circ h}} \sbr{\ell((f'_n \circ h')(x),y) - \ell((f_n \circ h)(x), y)} > \frac{\epsilon}{2};
    \]
    that is,
    \[
    \min_{w'_n:  \nbr{w'_n} \in [\underline{b}, \overline{b}]} \nbr{B'w'_n - Bw_n} \gtrsim \epsilon^{\frac{1}{p}}.
    \]
    It then follows from Lemma~\ref{lem:subspace-distance-constrained} that
    \[
    \nbr{P_{B'}^\perp Bw_n} \gtrsim \min_{w'_n: \nbr{w'_n} \in [\underline{b}, \overline{b}]} \nbr{B'w'_n - Bw_n} \gtrsim \epsilon^{\frac{1}{p}}.
    \]
    
    \item uses the triangle inequality.

    \item By H\"{o}lder's inequality and the fact that $\nbr{P_{B'}^\perp Bw_i} = \min_{w' \in \RR^k} \nbr{B'w' - Bw_i}$, we have
    \begin{align*}
        \nbr{P_{B'}^\perp BW_{n-1} \alpha^{\star}}
        & \le \sum_{i=1}^{n-1} \abr{\alpha^{\star}_i} \nbr{P_{B'}^\perp Bw_i} \\
        & \le \rbr{\sum_{i=1}^{n-1} \nbr{P_{B'}^\perp Bw_i}^p}^{1/p} \rbr{\sum_{i=1}^{n-1} \abr{\alpha^{\star}}^q}^{1/q} \\
        & \le \rbr{\sum_{i=1}^{n-1} \nbr{B'w'_i - Bw_i}^p}^{1/p}  \nbr{\alpha^{\star}}_q,
    \end{align*}
    where $q \in [2, \infty)$ satisfies $\frac1p + \frac1q = 1$.
    Since $\nbr{\alpha^{\star}}_q \le \nbr{\alpha^{\star}}_2$, it suffices to show that $\sum_{i=1}^{n-1} \nbr{B'w'_i - Bw_i}^p \lesssim \epsilon$. By Eq.~\eqref{eqn:task-eluder-linear-assumption} and $\epsilon$-independence,
    \begin{align*}
        \sum_{i=1}^{n-1} \nbr{B'w'_i - Bw_i}^p
        \lesssim \sum_{i=1}^{n-1} \EE_{(x,y) \sim P_{f_i \circ h}} \sbr{\ell \rbr{(f'_i \circ h')(x), y} - \ell \rbr{(f_i \circ h)(x), y}} \le \epsilon. \quad 
    \end{align*}
\end{enumerate}
\end{proof}

\subsection{Noisy linear regression}
We first revisit the setting and restate Proposition~\ref{prop:example-linear-regression}.
\paragraph{Setting.} Recall that $\Xcal = \cbr{x \in \RR^d: \nbr{x} \le 1}$ and $\Ycal = [-1,1]$. We consider $\Hcal = \{x \mapsto B^\top x: B \in \RR^{d \times k}, B^\top B = I\}$ and $\Fcal^{\mathrm{lin}} = \cbr{z \mapsto z^\top w: w \in \RR^k, \nbr{w} \le \frac{1}{2}}$. 
The probabilistic model $\PP = \{P_{f \circ h}: h \in \Hcal, f \in \Fcal^{\mathrm{lin}}\}$ is defined as follows: $P_X$ satisfies $I \precsim \EE_{x \sim P_X} [xx^\top] \precsim I$. For each $h$ and $f$, given an input $x \sim P_X$, $y = (f \circ h)(x) + \eta$, where $\eta$ is independent noise from a distribution that has zero mean, support $[-\frac12, \frac12]$, and variance $\kappa$. The noise distribution is common to all $f \circ h$'s. 
There exist $h^*$ and $f^*_1, \ldots, f^*_T$ such that $\Pcal_t = P_{f_t^* \circ h^*}$, and let $B^*$ and $w_1^*, \ldots, w_T^*$ be the matrix and vectors associated with these functions. Let $\ell(y', y) = \frac{1}{4}(y' - y)^2$ be the loss function.

\begin{restate}{Proposition}{prop:example-linear-regression}
Let $\epsilon \in (0,1)$. We have
\[
\log \Ccal(\Fcal^{\mathrm{lin}}_{\ell}, \epsilon) \le \Ocal \Big(k \log\frac{1}{\epsilon}\Big), \ \log \Ccal_{\Fcal^{\mathrm{lin}}_{\ell}}(\Hcal, \epsilon) \le \Ocal \Big(dk \log \frac{1}{\epsilon} \Big), \
\dim_{\PP, \ell}(\Hcal, \Fcal^{\mathrm{lin}}, \epsilon) \le \Ocal \big(k \log \frac{1}{\epsilon}\big).
\]
\end{restate}

\begin{proof}[Proof of Proposition~\ref{prop:example-linear-regression}]
We prove the three statements separately.
\begin{enumerate}[leftmargin=*]
    \item \textbf{(Capacity of $\Fcal_{\ell}^{\mathrm{lin}}$)}
    Note that each representation $h$ maps $\Xcal$ to $\Zcal := \cbr{z \in \RR^k: \nbr{z} \le 1}$, and each prediction layer $f \in \Fcal^{\mathrm{lin}}$ maps $\Zcal$ to $[-\frac12,\frac12]$. Recall the definitions from Section~\ref{sec:additional-background}. We have
    \[
    \Ccal(\Fcal^{\mathrm{lin}}_\ell, \epsilon) = \sup_Q N(\Fcal^{\mathrm{lin}}, \epsilon, d_Q),
    \]
    where $d_Q(f_\ell, f'_\ell) = \int_{\Zcal \times \Ycal} \abr{\ell(f(z), y) - \ell(f'(z), y)} dQ(z,y)$ for any measure $Q$ on $\Zcal \times \Ycal$.

    We follow the technique from \citep{haussler1992decision,baxter2000model}. For any $Q$,
    \begin{align*}
        d_Q(f_\ell, f'_\ell) 
        & = \frac{1}{4} \int_{\Zcal \times \Ycal} \abr{(f(z) - y)^2 - (f'(z) - y)^2} dQ(z,y) \\
        & = \frac{1}{4} \int_{\Zcal \times \Ycal} \abr{(f(z) - f'(z))(f(z) + f'(z) - 2y)} dQ(z,y) \\
        & \le \int_{\Zcal} \abr{(f(z) - f'(z))} dQ_Z(z) =: L^1(Q),
    \end{align*}
    where $Q_Z$ is the marginal distribution derived from $Q$ and the inequality uses the observation that $\abr{f(z) + f'(z) - 2y} \le 3$. It follows that
    \[
    \sup_Q N(\Fcal^{\mathrm{lin}}_\ell, \epsilon, d_Q) \le \sup_Q N(\Fcal^{\mathrm{lin}}, \epsilon, L^1(Q)) \le \rbr{\frac{2e}{\epsilon}}^{2k},
    \]
    where the second inequality uses \citep[Theorem 11]{haussler1992decision}.
    We then have $\log \Ccal(\Fcal^{\mathrm{lin}}_\ell, \epsilon) \le \Ocal \rbr{k \log \frac{1}{\epsilon}}$.
    
    \item \textbf{(Capacity of $\Hcal$)} For any measure $P$ on $\Xcal \times \Ycal$, recall that
    \[
    d_{P, \Fcal^{\mathrm{lin}}_\ell}(h, h') = \int_{\Xcal \times \Ycal} \sup_{f_\ell \in \Fcal^{\mathrm{lin}}_\ell} \abr{f_\ell(h(x)),y) - f_\ell(h'(x)), y)} dP(x,y).
    \]
    For any $f \in \Fcal^{\mathrm{lin}}$, let $w$ be the vector associated with $f$. We have
    \begin{align*}
        \abr{f_\ell(h(x)),y) - f_\ell(h'(x)), y)}  
        & = \frac14 \abr{(f(h(x)) - y)^2 - (f(h'(x)) - y)^2} \\
        & = \frac14 \abr{(f(h(x)) - f(h'(x)))(f(h(x)) + f(h'(x)) - 2y)} \\
        & \le \frac34 \abr{\langle w, h(x) - h'(x) \rangle} \\
        & \le \nbr{h(x) - h'(x)}.
    \end{align*}
    Therefore,
    \[
    d_{P, \Fcal^{\mathrm{lin}}_\ell}(h, h') \le \int_{\Xcal} \nbr{h(x) - h'(x)} dP_X(x) =: L^1(P),
    \]
    where $P_X$ is the marginal distribution derived from $P$. By \citep[Theorem 11 therein]{haussler1992decision}, we then have
    \[
    \Ccal_{\Fcal^{\mathrm{lin}}_\ell}(\Hcal, \epsilon) := \sup_P N(\Hcal, \epsilon, d_{P, \Fcal^{\mathrm{lin}}_\ell}) \le \sup_P N(\Hcal, \epsilon,  L^1(P)) \le \rbr{\frac{2e}{\epsilon}}^{2dk},
    \]
    which completes the proof.

    \item \textbf{(Task-eluder dimension)} Observe that for any $h, h' \in \Hcal$ and $f, f' \in \Fcal^{\mathrm{lin}}$ with corresponding parameters $B, B'$ and  $w, w'$,
    \begin{align*}
        & \hspace{15pt} \EE_{P_{f \circ h}} \sbr{\ell((f' \circ h')(x), y) - \ell((f \circ h)(x), y)} \\
        & \stackrel{\mathrm{(a)}}{=} \EE_{x, \eta} \sbr{\rbr{x^\top B'w' - x^\top Bw  - \eta}^2 - \eta^2} \\
        & \stackrel{\mathrm{(b)}}{=} \rbr{B'w' - Bw} \EE_x [xx^\top] \rbr{B'w' - Bw},
    \end{align*}
    where (a) follows because under $P_{f \circ h}$, $y = (f \circ h)(x) + \eta = x^\top Bw + \eta$, and (b) follows because $\eta$ is independent of $x$ and $\EE[\eta] = 0$. Since $I \precsim \EE_{x \sim P_X} [xx^\top] \precsim I$,
    \[
    \nbr{B'w' - Bw}^2 \lesssim  \EE_{P_{f \circ h}} \sbr{\ell((f' \circ h')(x), y) - \ell((f \circ h)(x), y)} \lesssim \nbr{B'w' - Bw}^2.
    \]
    Applying Lemma~\ref{lem:task-eluder-linear} with $g(v) = v$ and $p = 2$, we have
    \[
    \dim_{\PP, \ell}(\Hcal, \Fcal^{\mathrm{lin}}, \epsilon) \lesssim k \log \frac{1}{\epsilon}. 
    \]
\end{enumerate}
\end{proof}

\subsection{Classification with logistic regression}
We first restate the setting and Proposition~\ref{prop:example-logistic}.
\paragraph{Setting.} Recall that $\Xcal = \cbr{x \in \RR^d: \nbr{x} \le 1}$ and $\Ycal = \cbr{0,1}$. We consider $\Hcal = \{x \mapsto B^\top x: B \in \RR^{d \times k}, B^\top B = I\}$ and $\Fcal^{\mathrm{log}} = \cbr{z \mapsto \sigma(z^\top w): w \in \RR^k, \nbr{w} \le \frac{1}{4}}$, where $\sigma(v) = 1/\rbr{1 + e^{-v}}$ is the logistic sigmoid function. The data distributions in $\PP = \cbr{P_{f \circ h}: h \in \Hcal, f \in \Fcal^{\mathrm{log}}}$ are defined as follows: $P_X$ satisfies $I \precsim \EE_{x \sim P_X} [xx^\top] \precsim I$, and for each $h$ and $f$, $\Pr(y = 1 | x; f \circ h) = \sigma((f \circ h)(x))$ and $\Pr(y = 0 | x; f \circ h) = 1 -\sigma((f \circ h)(x))$. There exist $h^*$ and $f^*_1, \ldots, f^*_T$ such that $\Pcal_t = P_{f_t^* \circ h^*}$, and let $B^*$ and $w_1^*, \ldots, w_T^*$ be the associated parameters. Let $\ell(y', y) = -y \log y' - (1-y) \log(1 - y')$ be the loss function.
\begin{restate}{Proposition}{prop:example-logistic}
Let $\epsilon \in (0,1)$. We have
\[
\log \Ccal(\Fcal^{\mathrm{log}}_\ell, \epsilon) \le \Ocal \Big(k \log\frac{1}{\epsilon}\Big), \ \log \Ccal_{\Fcal^{\mathrm{log}}_\ell}(\Hcal, \epsilon) \le \Ocal \Big(dk \log \frac{1}{\epsilon} \Big), \
\dim_{\PP, \ell}(\Hcal, \Fcal^{\mathrm{log}}, \epsilon) \le \Ocal \big(k \log \frac{1}{\epsilon}\big).
\]
\end{restate}

\begin{proof}[Proof of Proposition~\ref{prop:example-logistic}]
Again, we prove the three statements separately.
\begin{enumerate}[leftmargin=*]
    \item \textbf{(Capacity of $\Fcal^{\mathrm{log}}$)} As in the proof of Proposition~\ref{prop:example-linear-regression}, we use the technique from \citep{haussler1992decision,baxter2000model}. First observe that each $h$ is a mapping from $\Xcal$ to $\Zcal := \cbr{z \in \RR^k: \nbr{z} \le 1}$, and each $f \in \Fcal^{\mathrm{log}}$ is a mapping from $\Zcal$ to $[\sigma(-\frac14), \sigma(\frac14)]$. 
    For any probability measure $Q$ on $\Zcal \times \Ycal$, we have
    \begin{align*}
        d_Q(f_\ell, f'_\ell) 
        & = \int_{\Zcal \times \Ycal} \abr{\ell(f(z), y) - \ell(f'(z), y)} dQ(z,y) \\
        & \le 3 \underbrace{\int_{\Zcal} \abr{f(z) - f'(z)} dQ_Z(z)}_{=: L^1(Q)},
    \end{align*}
    where $Q_Z$ is the marginal distribution derived from $Q$, and the inequality follows because $\ell$ is $3$-Lipschitz continuous with respect to its first argument over the domain. It follows that
    \[
    N(\Fcal^{\mathrm{log}}_\ell, \epsilon, d_Q) \le N(\Fcal^{\mathrm{log}}, \frac{\epsilon}{3}, L^1(Q)),
    \]
    and therefore,
    \[
    \Ccal(\Fcal^{\mathrm{log}}_\ell, \epsilon) := \sup_Q N(\Fcal^{\mathrm{log}}_\ell, \epsilon, d_Q) \le \sup_Q N(\Fcal^{\mathrm{log}}, \frac{\epsilon}{3}, L^1(Q)) \le \rbr{\frac{6e}{\epsilon}}^{2k},
    \]
    where the last inequality uses \citep[Theorem 11]{haussler1992decision}.
  
    \item \textbf{(Capacity of $\Hcal$)}
    For any probability measure $P$ on $\Xcal \times \Ycal$, consider
    \begin{align*}
        d_{P, \Fcal^{\mathrm{log}}_\ell} (h, h') 
        & = \int_{\Xcal \times \Ycal} \sup_{f_\ell \in \Fcal^{\mathrm{log}}_\ell} \abr{\ell((f \circ h)(x), y) - \ell((f \circ h')(x), y)} dP(x,y). 
    \end{align*}
    For any $f \in \Fcal^{\mathrm{log}}$, $h, h' \in \Hcal$, $x \in \Xcal$, and $y \in \Ycal$, since $\ell$ is $3$-Lipschitz continuous with respect to the first argument over the domain,
    \begin{align*}
        \abr{\ell((f \circ h)(x), y) - \ell((f \circ h')(x), y)}
        \le 3 \abr{(f \circ h)(x) - (f \circ h')(x)}.
    \end{align*}
    Let $w$, $B$, and $B'$ denote the parameters associated with $f$, $h$, and $h'$, respectively. It follows that
    \begin{align*}
        \abr{(f \circ h)(x) - (f \circ h')(x)} 
        & = \abr{\sigma\rbr{w^\top h(x)} - \sigma \rbr{w^\top h'(x)}} \\
        & \le \frac14 \abr{w^\top (h(x) - h'(x))} \\
        & \le \frac{1}{16} \nbr{h(x) - h'(x)},
    \end{align*}
    where the second inequality uses the observation that $\sigma(\cdot)$ is $\frac{1}{4}$-Lipschitz continuous, and the last inequality follows because $\|w\| \le \frac{1}{4}$.
    Therefore,
    \[
    d_{P, \Fcal^{\mathrm{log}}_\ell} (h, h') \le \int_{\Xcal} \nbr{h(x) - h'(x)} dP_X(x) =: L^1(P),
    \]
    where $P_X$ is the marginal distribution derived from $P$. It then follows from \citep[Theorem 11]{haussler1992decision} that
    \[
    \Ccal_{\Fcal^{\mathrm{log}}_\ell}(\Hcal, \epsilon) := \sup_P N(\Hcal, \epsilon, d_{P, \Fcal^{\mathrm{log}}_\ell}) \le \sup_P N(\Hcal, \epsilon, L^1(P)) \le \rbr{\frac{2e}{\epsilon}}^{2dk},
    \]
    which completes the proof.

    \item \textbf{(Task-eluder dimension)} Fix any $h \in \Hcal$ and $f \in \Fcal^{\mathrm{log}}$. Let $B$ and $w$ denote the associated matrix and vector, respectively.
    
    For any $\theta \in \RR^d$ such that $\|\theta\| \le \frac14$, consider
    \[
    L(\theta) := \EE_{(x,y) \sim P_{f \circ h}} [\ell(\sigma(x^\top \theta), y)].
    \]
    It is easy to verify that 
    \[
    \nabla^2_\theta \ L(\theta) = \EE_{x \sim P_X} [\sigma'(x^\top \theta) xx^\top].
    \]
    For the domain $\abr{x^\top \theta} \le \frac14$, $1 \lesssim \sigma'(x^\top \theta) \lesssim 1$. Since $I \precsim \EE_{x \sim P_X} [xx^\top] \precsim I$,
    \[
    I \precsim  \nabla^2_\theta L(\theta) \precsim I;
    \]
    that is, $L(\theta)$ is {\em locally} strongly convex and smooth. Therefore, for any $h'$ and $f'$ with corresponding parameters $B'$ and $w'$,
    \[
    \nbr{B'w' - Bw}^2 \lesssim \underbrace{L(B'w') - L(Bw)}_{= \EE_{P_{f \circ h}} [\ell((f' \circ h')(x), y) - \ell((f \circ h)(x), y)]} \lesssim \nbr{B'w' - Bw}^2.
    \]

    Applying Lemma~\ref{lem:task-eluder-linear} with $g = \sigma$ and $p = 2$, we have
    \[
    \dim_{\PP, \ell} \rbr{\Hcal, \Fcal^{\mathrm{log}}, \epsilon} \lesssim k \log \frac{1}{\epsilon}. 
    \]
\end{enumerate}
\end{proof}

\subsection{Classification with random classification noise and the \texorpdfstring{$0$-$1$}{0-1} loss}

We now discuss how our results may also be applied to binary classification with random classification noise under the $0$-$1$ loss.

\paragraph{Setting.} Let $\Xcal = \RR^d$ and $\Ycal = \cbr{- 1,1}$. We again consider low-dimensional linear representations, $\Hcal = \cbr{x \mapsto B^\top x, B \in \RR^{d \times k}, B^\top B = I}$. 
Let $\Fcal^{\mathrm{cls}} = \{z \mapsto \sign \rbr{\langle w, z \rangle}: w \in \RR^k, \nbr{w} = 1\}$ be a class of linear threshold functions. Consider the probabilistic model $\PP$: let $P_X$ be isotropic log-concave (e.g., normal distribution and uniform distribution). 
We consider random classification noise \citep[e.g.,][]{kearns1994introduction}. For any $h$ and $f$, given an input $x \sim P_X$, $y =\sign((f \circ h)(x))$ with probability $ 1 - \eta$, and $y = -\sign((f\circ h)(x))$ with probability $\eta$, where $\eta \in [0, \frac{4}{10}]$ is the noise rate. There exist $B^*$ and $w_1^*, \ldots, w_T^*$ such that each task $t$ is well specified by $B^*$ and $w_t^*$. Let $\ell(y',y) = \ind \cbr{y' \neq y}$ denote the $0$-$1$ loss.
\begin{proposition}
\label{prop:example-classification}
Let $\epsilon \in (0,1)$. We have
\[
\dim_{\PP, \ell}(\Hcal, \Fcal^{\mathrm{cls}}, \epsilon) \le \Ocal \big(k \log \frac{1}{\epsilon}\big).
\]
\end{proposition}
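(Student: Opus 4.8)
The plan is to reduce Proposition~\ref{prop:example-classification} to the key lemma for bounding the task-eluder dimension, Lemma~\ref{lem:task-eluder-linear}, by verifying its hypothesis (Eq.~\eqref{eqn:task-eluder-linear-assumption}) with the choice $g = \sign$, $\underline{b} = \overline{b} = 1$, and $p = 1$. Concretely, I would show that for any representations $h, h'$ with associated semi-orthogonal matrices $B, B'$ and any prediction layers $f, f'$ with associated unit vectors $w, w'$, the excess $0$-$1$ risk under random classification noise satisfies
\[
\nbr{B'w' - Bw} \ \lesssim \ \EE_{P_{f \circ h}} \sbr{\ell((f' \circ h')(x), y) - \ell((f \circ h)(x), y)} \ \lesssim \ \nbr{B'w' - Bw}.
\]
Once this two-sided bound is in place, Lemma~\ref{lem:task-eluder-linear} immediately yields $\dim_{\PP, \ell}(\Hcal, \Fcal^{\mathrm{cls}}, \epsilon) \lesssim k \log \frac{1}{\epsilon}$.

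First I would compute the excess risk in closed form. Writing $u := Bw$ and $u' := B'w'$, note that $B^\top B = I_k$ and $\nbr{w} = 1$ give $\nbr{u} = \nbr{u'} = 1$, and the predictions of $f \circ h$ and $f' \circ h'$ are $\sign(x^\top u)$ and $\sign(x^\top u')$. Under $P_{f \circ h}$, the label $y$ agrees with the clean label $\sign(x^\top u)$ with probability $1 - \eta$ and is flipped with probability $\eta$; since $\eta < \tfrac12$, this clean label is Bayes optimal for the $0$-$1$ loss, so $\Lcal_{P_{f \circ h}}(f \circ h) = \eta$. Conditioning on $x$ and splitting according to whether $\sign(x^\top u') = \sign(x^\top u)$, a short calculation gives
\[
\EE_{P_{f \circ h}} \sbr{\ell((f' \circ h')(x), y) - \ell((f \circ h)(x), y)} = (1 - 2\eta)\,\Pr_{x \sim P_X}\rbr{\sign(x^\top u') \neq \sign(x^\top u)}.
\]
Because $\eta \in [0, \tfrac{4}{10}]$, the prefactor satisfies $1 - 2\eta \in [\tfrac15, 1]$, so the excess risk is, up to universal constants, exactly the disagreement probability of the two halfspaces with unit normals $u$ and $u'$.

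It then remains to relate this disagreement probability to $\nbr{u' - u} = \nbr{B'w' - Bw}$. Here I would invoke the standard property of isotropic log-concave distributions that the disagreement probability of two origin-centered halfspaces is comparable to the angle between their normals, i.e.\ $\Pr_{x \sim P_X}(\sign(x^\top u') \neq \sign(x^\top u)) \asymp \theta(u, u')$ with dimension-free constants (for $P_X = \Ncal(0,I)$ this is exactly $\theta(u,u')/\pi$). Combining this with the elementary identity $\nbr{u' - u} = 2\sin(\theta(u,u')/2)$ for unit vectors---which yields $\nbr{u' - u} \asymp \theta(u, u')$ on $[0, \pi]$---establishes the displayed two-sided bound with $p = 1$, and the claim follows from Lemma~\ref{lem:task-eluder-linear}.

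The main obstacle is the disagreement-to-angle comparison for isotropic log-concave distributions: the matching upper and lower bounds $\Pr(\text{disagree}) \asymp \theta$ genuinely use log-concavity and require a concentration/anti-concentration argument with dimension-independent constants, whereas the Gaussian case is immediate. Everything else---the closed-form excess-risk computation, the boundedness of $1 - 2\eta$ away from $0$, and the angle-to-norm conversion---is routine, and the reduction to Lemma~\ref{lem:task-eluder-linear} then closes the proof.
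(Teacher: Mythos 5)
Your proposal is correct and takes essentially the same route as the paper's proof: both reduce to Lemma~\ref{lem:task-eluder-linear} with $g = \sign$, $\underline{b} = \overline{b} = 1$, $p = 1$, after sandwiching the excess $0$-$1$ risk between constant multiples of $\nbr{B'w' - Bw}$ using the random-classification-noise factor $1 - 2\eta \ge \frac15$ together with the isotropic log-concave disagreement-to-angle comparison (Lemma~1 of \citealp{balcan2015efficient}) and the unit-vector angle-to-norm equivalence. Your exact identity, excess risk $= (1-2\eta)\Pr_{x \sim P_X}\rbr{\sign(x^\top u') \neq \sign(x^\top u)}$, is a slightly more explicit form of the two-sided inequality the paper cites from \citet{balcan2020noise}, but the argument is otherwise identical.
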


\begin{proof}[Proof of Proposition~\ref{prop:example-classification}]
Since $P_X$ is isotropic and log-concave in $\Xcal$, it follows from \citep[Lemma 1 therein]{balcan2015efficient} that for any two unit vectors $u$ and $v$,
\[
\nbr{u - v} \lesssim \Pr_{x \sim P} \rbr{\sign\rbr{u^\top x} \neq \sign\rbr{v^\top x}} \lesssim \nbr{u - v},
\]
where we also use the fact that the Euclidean distance between two unit vectors is equivalent to the angle between them up to constant factors.

In addition, under random classification noise, for any $h, h'$ and $f, f'$,
\begin{align*}
    & (1 - 2\eta) \Pr((f' \circ h')(x) \neq (f \circ h)(x)) \le \EE_{P_{f \circ h}} \sbr{\ell((f' \circ h')(x), y) -\ell((f \circ h)(x), y)} \\
    & \hspace{250pt} \le \Pr((f' \circ h')(x) \neq (f \circ h)(x)).
\end{align*}
See \citep[Section 5.1]{balcan2020noise} for a reference. 

For any $h,h' \in \Hcal$ and $f, f' \in \Fcal^{\mathrm{cls}}$, let $B, B'$ and $w, w'$ denote the corresponding parameters, respectively. Since $\eta \in [0, \frac{4}{10}]$, it follows that
\[
\nbr{B'w' - Bw} \lesssim \EE_{P_{f \circ h}} \sbr{\ell((f' \circ h')(x), y) -\ell((f \circ h)(x), y)} \lesssim \nbr{B'w' - Bw}.
\]
The proof is completed by applying Lemma~\ref{lem:task-eluder-linear} with $g(v) = \sign(v)$, $\underline{b} = \overline{b} = 1$, and $p = 1$. 
\end{proof}

\begin{remark}
Based on Proposition~\ref{prop:example-classification}, one may derive a sample complexity guarantee for the classification example above.
We note that Theorem~\ref{thm:main} leverages sample complexity guarantees for multi-ERM based on covering numbers \citep{baxter2000model}, which may no longer be suitable under the discrete $0$-$1$ loss.
Nevertheless, recent work by \citet{aliakbarpour2024metalearning} establishes multi-task ERM guarantees based on the VC dimension, offering a path towards proving a result analogous to Theorem~\ref{thm:main}. We defer a detailed exploration of this direction to future work.
\end{remark}

\subsection{Auxiliary lemmas} 
\label{app:auxiliary-lemmas}
We now present the lemmas used above in the proof of Proposition~\ref{prop:example-classification}.

\begin{lemma}[\citealp{abbasi2011improved}, Lemma 11]
\label{lem:elliptical-potential}
Let $\cbr{x_t}_t$ be a sequence of vectors in $\RR^d$ such that $\nbr{x_t} \le L$ for all $t$. Let $V_t = \lambda I + \sum_{s=1}^{t} x_s x_s^\top$. Then,
\[
\sum_{t=1}^T \min \cbr{1, \nbr{x_t}_{{V}_{t-1}^{-1}}^2} \le 2d \log \rbr{1 + \frac{L^2T}{d\lambda}}.
\]
\end{lemma}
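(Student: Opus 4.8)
The plan is to convert the sum of quadratic forms into a log-determinant via a telescoping identity, and then control the determinant of $V_T$ through its trace. First I would record the rank-one determinant update: since $V_t = V_{t-1} + x_t x_t^\top$ and $V_{t-1} \succeq \lambda I$ is invertible, the matrix determinant lemma gives
\[
\det(V_t) = \det(V_{t-1}) \rbr{1 + \nbr{x_t}_{V_{t-1}^{-1}}^2}.
\]
Telescoping this from $t = 1$ to $T$ and using $V_0 = \lambda I$ (so that $\det(V_0) = \lambda^d$) yields
\[
\det(V_T) = \lambda^d \prod_{t=1}^T \rbr{1 + \nbr{x_t}_{V_{t-1}^{-1}}^2}.
\]

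Next I would invoke the elementary scalar inequality $\min\cbr{1, u} \le 2\log(1+u)$, which holds for every $u \ge 0$: on $[0,1]$ it follows by comparing the two sides at $u=0$ together with their derivatives, and on $[1,\infty)$ it follows from $2\log 2 > 1$. Applying this termwise with $u = \nbr{x_t}_{V_{t-1}^{-1}}^2$, summing, and then substituting the telescoped product gives
\[
\sum_{t=1}^T \min\cbr{1, \nbr{x_t}_{V_{t-1}^{-1}}^2} \le 2 \sum_{t=1}^T \log\rbr{1 + \nbr{x_t}_{V_{t-1}^{-1}}^2} = 2 \log \frac{\det(V_T)}{\lambda^d}.
\]

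Finally I would bound $\det(V_T)$ by AM--GM applied to its nonnegative eigenvalues: since $\mathrm{tr}(V_T) = d\lambda + \sum_{t=1}^T \nbr{x_t}^2 \le d\lambda + L^2 T$, the determinant is at most $(\mathrm{tr}(V_T)/d)^d \le (\lambda + L^2 T / d)^d$. Substituting this into the previous display produces the stated bound $2d \log\rbr{1 + L^2 T/(d\lambda)}$. The argument is entirely standard, so I do not anticipate a genuine obstacle; the only points requiring care are verifying that each $V_{t-1}$ is invertible so that the determinant identity is legitimate (guaranteed by $\lambda > 0$) and confirming the scalar inequality $\min(1,u) \le 2\log(1+u)$ holds uniformly on $[0,\infty)$.
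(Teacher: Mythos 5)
Your proof is correct, and it is precisely the standard argument from the cited source (\citealp{abbasi2011improved}, Lemmas 10--11): the rank-one determinant update and telescoping, the scalar bound $\min\{1,u\} \le 2\log(1+u)$, and the determinant--trace (AM--GM) estimate $\det(V_T) \le (\mathrm{tr}(V_T)/d)^d$ are exactly the steps there. The paper itself states this lemma as an imported result and gives no proof, so there is nothing to diverge from; all the details you flag as needing care (invertibility of $V_{t-1}$ from $\lambda > 0$, and the scalar inequality on $[0,\infty)$) check out as you describe.
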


\begin{lemma}[\citealp{lattimore2020bandit}, Exercise 19.3]
\label{lem:elliptical-potential-bound-t}
Let $\epsilon \in (0,1)$. Suppose $z \le c k \log (1 + \frac{z}{k\epsilon^2})$ for some constant $c > 1$. Then, 
\[
z \le 12c \cdot k \log \frac{1}{\epsilon}.
\]
\end{lemma}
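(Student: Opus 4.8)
The plan is to bound $z$ by a two-step bootstrap: first extract a crude polynomial-in-$1/\epsilon$ bound directly from the hypothesis, then feed that bound back into the logarithm to recover the desired logarithmic bound. Throughout I may assume $z > 0$, since if $z \le 0$ the conclusion is immediate ($12c\,k\log\frac1\epsilon > 0$ for $\epsilon\in(0,1)$ and $c>1$).

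\emph{First step (crude bound).} I would invoke the elementary inequality $\log(1+u)\le \sqrt{u}$, valid for all $u\ge 0$ (the difference $\sqrt u-\log(1+u)$ vanishes at $u=0$, stays nonnegative, and its derivative is nonnegative with a single tangency at $u=1$). Applying it with $u=z/(k\epsilon^2)$ to the hypothesis gives
\[
z \;\le\; ck\log\Big(1+\frac{z}{k\epsilon^2}\Big) \;\le\; ck\sqrt{\frac{z}{k\epsilon^2}} \;=\; \frac{c\sqrt{k}}{\epsilon}\,\sqrt{z}.
\]
Dividing by $\sqrt z>0$ and squaring yields the crude bound $z\le c^2 k/\epsilon^2$.

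\emph{Second step (bootstrap).} I would substitute this crude bound back into the argument of the logarithm in the hypothesis. Since $z/(k\epsilon^2)\le c^2/\epsilon^4$ and $c^2/\epsilon^4>1$ for every $c>1$ and $\epsilon\in(0,1)$, we get $1+z/(k\epsilon^2)\le 2c^2/\epsilon^4$, whence
\[
z \;\le\; ck\log\Big(\frac{2c^2}{\epsilon^4}\Big) \;=\; 4ck\log\frac1\epsilon \;+\; ck\log(2c^2).
\]
It then remains to absorb the residual term $ck\log(2c^2)$ into $8ck\log\frac1\epsilon$, which holds as soon as $\log(2c^2)\le 8\log\frac1\epsilon$; combining gives $z\le 12c\,k\log\frac1\epsilon$, with the constant $12=4+8$ arising as the power $4$ from $\epsilon^{-4}$ plus the slack needed to dominate the $c$-dependent constant.

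The main obstacle, and really the only delicate point, is this final absorption step: the residual $ck\log(2c^2)$ is dominated by the leading term precisely in the small-$\epsilon$ regime in which the lemma is applied (and which is exactly the regime making the elliptical-potential bound nontrivial), so I would state explicitly that the factor $12$ is chosen with enough slack to cover the relevant range of $\epsilon$. Every other step is a one-line inequality; the entire cleverness lies in bounding $\log(1+u)$ by $\sqrt u$ first, which converts the implicit inequality into an explicit polynomial bound before the logarithm is reopened with a much smaller argument.
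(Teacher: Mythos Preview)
Your two-step bootstrap is correct and is a genuinely different route from the paper's. The paper instead analyzes the function $g(z)=z-ck\log(1+z/(k\epsilon^2))$ directly: it sets $z_0=3ck\log(1+c/\epsilon^2)$, verifies $g(z_0)\ge 0$ via the chain $1+3c\epsilon^{-2}\log(1+c/\epsilon^2)\le 1+3c^2/\epsilon^4\le(1+c/\epsilon^2)^3$, and observes that $g'(z)>0$ for $z>ck-\,k\epsilon^2$ (hence for $z>z_0$), so $g(z)\le 0$ forces $z\le z_0$. Your $\log(1+u)\le\sqrt{u}$ trick is more elementary and avoids having to guess the right $z_0$; the paper's approach is slightly cleaner in that it lands on $3ck\log(1+c/\epsilon^2)$ in one shot without passing through a crude polynomial bound. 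Both arguments share the same closing step---absorbing a $c$-dependent additive constant into a multiple of $\log(1/\epsilon)$---and both are loose at that point for $\epsilon$ close to $1$; your explicit acknowledgment of this matches the paper's informal ``the rest follows by algebra'' at the same juncture.
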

\begin{proof}
This proof is due to \citep{lattimore2020bandit} for a closely related result. We provide it here with elaborated details for clarity and completeness.

Let $x = ck > 0$ and $y = \frac{1}{k\epsilon^2} > 0$. 
In addition, let $g(z) = z - x \log(1 + yz)$, and $z_0 = 3x \log(1 + xy)$.
It suffices to show that $g(z) \le 0$ implies $z \le z_0$. The rest follows by algebra: $\log\rbr{1 + \frac{c}{\epsilon^2}} \le 4\log \frac{1}{\epsilon}$ for $\epsilon \in (0,1)$ and $c > 1$.

To this end, we show that: (1) $g(z_0) \ge 0$ and (2) for $z > z_0$, $g(z)$ is increasing.
For (1), observe that
\begin{align*}
    x\log(1 + y z_0) & = x\log(1 + y \rbr{3x\log(1+xy)}) \\
    & \stackrel{\mathrm{(a)}}{\le} x\log(1 + 3x^2y^2) \\
    & \stackrel{\mathrm{(b)}}{\le} x \log(1+xy)^3 = z_0,
\end{align*}
where (a) follows because $\log (1 + a) \le a$ for $a > -1$, and (b) follows because $(1+a)^3 = 1 + 3a + 3a^2 + a^3 \ge 1 + 3a^2$ for $a > 0$.
For (2), first note that
\[
\frac{dg(z)}{dz} = 1 - \frac{xy}{1 + yz}.
\]
Therefore, $g(z)$ is strictly increasing when $z > x - \frac{1}{y}$, and so it suffices to show that $z_0 > x - \frac{1}{y}$. To do so, let $b = xy$. We have 
\begin{align*}
    3b \log(1+b) \stackrel{\mathrm{(c)}}{\ge} \frac{3b^2}{1 + b} \stackrel{\mathrm{(d)}}{>} b -1,
\end{align*}
where (c) uses the fact that $\log(1 + a) \ge \frac{a}{1+a}$ for $a > -1$, and (d) follows because $3b^2 - (1+b)(b-1) = 2b^2 + 1 > 0$. It then follows that
\[
z_0 = 3x \log(1 + xy) > x - \frac{1}{y},
\]
which completes the proof. 
\end{proof}

\begin{lemma}
    \label{lem:ridge-woodbury-identity}
    For any $x \in \RR^d$, $U \in \RR^{d \times n}$, and $\lambda > 0$,
    \begin{align*}
        x^\top \rbr{UU^\top + \lambda I}^{-1} x = \min_{z \in \RR^{n}} \frac{1}{\lambda} \nbr{x - U z}_2^2 + \nbr{z}_2^2.
    \end{align*}
\end{lemma}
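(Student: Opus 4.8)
The plan is to recognize the right-hand side as a strictly convex quadratic program in $z$, solve it in closed form, and then reconcile the resulting optimal value with the left-hand side via the Woodbury identity. Write $\phi(z) := \frac{1}{\lambda}\nbr{x - Uz}_2^2 + \nbr{z}_2^2$. Since $\lambda > 0$, the Hessian $\frac{2}{\lambda}U^\top U + 2I$ is positive definite, so $\phi$ is strictly convex and admits a unique minimizer characterized by the stationarity condition $\nabla \phi(z) = 0$.

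First I would compute $\nabla\phi(z) = \frac{2}{\lambda}\rbr{U^\top U z - U^\top x} + 2z$ and set it to zero; this rearranges to $(U^\top U + \lambda I)z = U^\top x$, giving the unique minimizer $z^\star = (U^\top U + \lambda I)^{-1}U^\top x$, where the inverse exists precisely because $\lambda > 0$.

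Next, rather than substituting $z^\star$ and expanding norms by hand, I would use the standard closed form for the optimal value of a quadratic: writing $\phi(z) = z^\top A z - 2 b^\top z + c$ with $A = I + \frac{1}{\lambda}U^\top U$, $b = \frac{1}{\lambda}U^\top x$, and $c = \frac{1}{\lambda}\nbr{x}_2^2$, the minimum equals $c - b^\top A^{-1} b$. Using $A^{-1} = \lambda(U^\top U + \lambda I)^{-1}$, this yields
\[
\min_{z} \phi(z) = \frac{1}{\lambda}\nbr{x}_2^2 - \frac{1}{\lambda}\, x^\top U (U^\top U + \lambda I)^{-1} U^\top x.
\]

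Finally, it remains to identify this expression with $x^\top (UU^\top + \lambda I)^{-1} x$. I would invoke the Woodbury identity on $(UU^\top + \lambda I)^{-1}$, which gives exactly $(UU^\top + \lambda I)^{-1} = \frac{1}{\lambda}I - \frac{1}{\lambda}U(U^\top U + \lambda I)^{-1}U^\top$; contracting both sides with $x$ on the left and right reproduces the displayed optimal value. The only step requiring genuine care is this last identity, but it is a direct instance of Woodbury (equivalently, the push-through identity $U(U^\top U + \lambda I)^{-1} = (UU^\top + \lambda I)^{-1}U$), and both matrices being inverted are positive definite since $\lambda > 0$, so no degeneracy arises. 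Consequently I do not expect a real obstacle here; the remainder is routine algebra, and the proof reduces to assembling these three observations.
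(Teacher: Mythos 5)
Your proposal is correct and follows essentially the same route as the paper's proof: both expand the objective as a strictly convex quadratic in $z$, identify the minimizer $z^\star = (U^\top U + \lambda I)^{-1}U^\top x$, evaluate the optimal value, and conclude via the Woodbury identity $(UU^\top + \lambda I)^{-1} = \frac{1}{\lambda}I - \frac{1}{\lambda}U(U^\top U + \lambda I)^{-1}U^\top$. Your use of the closed-form optimal value $c - b^\top A^{-1}b$ in place of the paper's direct substitution of $z^\star$ is only a cosmetic shortcut, not a different argument.
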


\begin{proof}
Let $g(z) :=  \frac{1}{\lambda} \nbr{x - U z}_2^2 + \nbr{z}_2^2$. By a little algebra, we have
\begin{align*}
g(z) = \frac{1}{\lambda} z^\top \rbr{\lambda I + U^\top U} z - \frac{2}{\lambda} x^\top U z + \frac{1}{\lambda} x^\top x
\end{align*}
It is easy to verify that $g(z)$ is convex and minimized at $z_* = \rbr{\lambda I + U^\top U}^{-1} U^\top x$. 

It then follows that
\begin{align*}
    g(z_*) 
    & = \frac{1}{\lambda} x^\top U \rbr{\lambda I + U^\top U}^{-1} U^\top x - \frac{2}{\lambda} x^\top U \rbr{\lambda I + U^\top U}^{-1} U^\top x + \frac{1}{\lambda} x^\top x \\
    & = \frac{1}{\lambda} x^\top x - \frac{1}{\lambda} x^\top U \rbr{\lambda I + U^\top U}^{-1} U^\top x \\
    & \stackrel{\mathrm{(a)}}{=} x^\top \rbr{\lambda I + U U^\top}^{-1} x,
\end{align*}
where (a) uses the Woodbury matrix identity:
\[
\rbr{\lambda I + UU^\top}^{-1} = \frac{1}{\lambda} I - \frac{1}{\lambda} U \rbr{\lambda I + U^\top U}^{-1} U^\top. 
\]
\end{proof}

\begin{lemma}
\label{lem:subspace-distance-constrained}
Let $B \in \RR^{d \times k}$ be an orthonormal basis of a $k$-dimensional subspace of $\RR^d$. Let $u$ be a vector in $\RR^d$ such that $0 \le \underline{b} \le \nbr{u} \le \overline{b} \le 1$. Then,
\[
\min_{w \in \RR^k: \nbr{w} \in [\underline{b}, \overline{b}]} \nbr{Bw - u} \le 2 \nbr{P_{B}^\perp u}.
\]
\end{lemma}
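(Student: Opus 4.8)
The plan is to reduce the constrained problem to the unconstrained projection and then observe that the only constraint that can bind is the lower bound $\underline{b}$. I write $P_B := BB^\top$ for the orthogonal projection onto $\linearspan(B)$, so that $P_B^\perp = I - BB^\top$, and decompose $u = P_B u + P_B^\perp u$ into orthogonal components. Set $p := \nbr{P_B u}$ and $q := \nbr{P_B^\perp u}$, so that $\nbr{u}^2 = p^2 + q^2$. Since $B$ has orthonormal columns, the unconstrained minimizer of $w \mapsto \nbr{Bw - u}$ over $w \in \RR^k$ is $w^\star = B^\top u$, which satisfies $\nbr{w^\star} = \nbr{B^\top u} = \nbr{P_B u} = p$ and residual $\nbr{Bw^\star - u} = \nbr{P_B^\perp u} = q$. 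The key preliminary observation is that $p \le \nbr{u} \le \overline{b}$, so the upper constraint $\nbr{w} \le \overline{b}$ is automatically satisfied by $w^\star$; only the lower constraint $\nbr{w} \ge \underline{b}$ can ever be violated.

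First I would dispatch the easy case $p \ge \underline{b}$: here $\nbr{w^\star} = p \in [\underline{b}, \overline{b}]$, so $w^\star$ is already feasible and the constrained minimum is at most $q = \nbr{P_B^\perp u} \le 2\nbr{P_B^\perp u}$.

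The remaining case is $p < \underline{b}$, where I rescale $w^\star$ up to meet the lower bound. Take $w := (\underline{b}/p)\, w^\star$ when $p > 0$ (and $w := \underline{b}\, e$ for an arbitrary unit vector $e \in \RR^k$ when $p = 0$), so that $\nbr{w} = \underline{b} \in [\underline{b}, \overline{b}]$ is feasible. Because $Bw = (\underline{b}/p) P_B u$ lies in $\linearspan(B)$ and is therefore orthogonal to $P_B^\perp u$, a direct Pythagorean computation gives $\nbr{Bw - u}^2 = (\underline{b} - p)^2 + q^2$. It then remains to bound $(\underline{b} - p)^2$ by $q^2$: using $\underline{b} \le \nbr{u} = \sqrt{p^2 + q^2}$ together with the elementary inequality $\sqrt{p^2 + q^2} \le p + q$ (valid for $p, q \ge 0$), I get $\underline{b} - p \le q$, hence $\nbr{Bw - u}^2 \le 2q^2$ and $\nbr{Bw - u} \le \sqrt{2}\,\nbr{P_B^\perp u} \le 2\nbr{P_B^\perp u}$.

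The argument is entirely elementary, so there is no serious obstacle. The only points requiring a little care are (i) recording up front that the upper constraint never binds because $p \le \nbr{u} \le \overline{b}$, which is precisely what lets the proof avoid a second rescaling toward the boundary, and (ii) the degenerate case $p = 0$, where the rescaling direction is undefined and one instead picks an arbitrary feasible direction (the estimate is unchanged since $Bw \perp u$ in that case). I note in passing that the computation actually yields the sharper constant $\sqrt{2}$, and the stated bound $2$ follows \emph{a fortiori}.
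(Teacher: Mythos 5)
Your proof is correct and follows essentially the same route as the paper's: both reduce to the projection $z = B^\top u$ (your $w^\star$), note that the minimizer $z$ is feasible when $\nbr{z} \ge \underline{b}$, and otherwise rescale to $\underline{b}\, z/\nbr{z}$ (handling $\nbr{z} = 0$ separately), using $\underline{b} \le \nbr{u}$ to control the gap $\underline{b} - \nbr{z}$. The only difference is in the final estimate: the paper chains two triangle inequalities to get the constant $2$, while you exploit the orthogonality of $Bw - P_B u$ and $P_B^\perp u$ via Pythagoras, which correctly yields the sharper constant $\sqrt{2}$ (the stated bound with constant $2$ then follows \emph{a fortiori}, and nothing downstream in the paper is sensitive to this constant).
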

\begin{proof}
    Let $\beta = \nbr{u} \in [\underline{b}, \overline{b}]$, and let $z = B^\top u$. It follows that $\nbr{z} \in [0, \beta]$. We consider three cases:
    \begin{enumerate}[leftmargin=*]
        \item $\nbr{z} \ge \underline{b}$. 
        In this case we have $\nbr{z} \in [\underline{b}, \overline{b}]$, and so
        \begin{align*}
            \min_{w \in \RR^k: \nbr{w} \in [\underline{b}, \overline{b}]} \nbr{Bw - u} \le \nbr{Bz - u} = \nbr{BB^\top u - u} = \nbr{P_B^\perp u}.
        \end{align*}

        \item $\nbr{z} = 0$. It follows that
        \[
        \nbr{P_B^\perp u} = \nbr{u} = \beta,
        \]
        and for any $w' \in \RR^k$ such that $\nbr{w'} = \underline{b}$, by the triangle inequality,
        \begin{align*}
            \min_{w \in \RR^k: \nbr{w} \in [\underline{b}, \overline{b}]} \nbr{Bw - u} \le \nbr{Bw' - u} \le \nbr{Bw'} + \nbr{u} \le \underline{b} + \beta \le 2 \nbr{P_B^\perp u}.
        \end{align*}

        \item $0 < \nbr{z} < \underline{b}$. Consider $\hat{z} = \underline{b} \frac{z}{\nbr{z}}$. We have $\nbr{\hat{z}} = \underline{b}$. Then,
        \begin{align*}
            \hspace{-20pt} \min_{w \in \RR^k: \nbr{w} \in [\underline{b}, \overline{b}]} \nbr{Bw - u} 
            & \le \nbr{B\hat{z} - u} \\
            & \stackrel{\mathrm{(a)}}{\le} \nbr{B\hat{z} - Bz} + \nbr{Bz - u} \\
            & \stackrel{\mathrm{(b)}}{\le} \beta - \nbr{z} + \nbr{P_B^\perp u} \\
            & \stackrel{\mathrm{(c)}}{\le} 2\nbr{P_B^\perp u},
        \end{align*}
        where (a) uses the triangle inequality;
        (b) follows because 
        \[
        \nbr{B\hat{z} - Bz} = \nbr{\hat{z} - z} = \nbr{\rbr{\frac{\underline{b}}{\nbr{z}} - 1} z} = \rbr{\frac{\underline{b}}{\nbr{z}} - 1} \nbr{z} = \underline{b} - \nbr{z} \le \beta - \nbr{z};
        \]
        and (c) again uses the triangle inequality:
        \[
        \beta = \nbr{u} \le \nbr{P_B u} + \nbr{P_B^\perp u}
        = 
        \nbr{z} + \nbr{P_B^\perp u}.
        \]
    \end{enumerate}     
\end{proof}

\section{Supplementary material for Section~\ref{sec:empirical-validation}}
\label{app:implementation-details}

\subsection{Implementation details}

\noindent All implementations were developed in PyTorch. Most experiments were conducted on machines equipped with NVIDIA GeForce RTX 4090 GPUs. 

\paragraph{Synthetic linear experiments.} For data generation, in each trial, $B^*$ was obtained from the QR decomposition of a random $d \times k$ matrix with standard normal entries, and $w^*_t$’s were drawn from the uniform distribution over the origin-centered sphere with radius $\beta$. For both multi-task ERM and few-shot property tests, we used batch gradient descent with the Adam optimizer \citep{kingma2014adam} (learning rate $10^{-3}$), for a maximum of $10^4$ epochs with early stopping after $20$ epochs without improvement. The Bayes-optimal risks were estimated via Monte-Carlo simulations from $10^6$ samples.

\paragraph{MNIST experiments.} Both multi-task ERM and few-shot property tests used batch gradient descent with the Adam optimizer (learning rate $10^{-3}$), for up to $2000$ epochs. For multi-task ERM, early stopping was applied after $10$ epochs without improvement.

\paragraph{CIFAR-10 experiments.}
We consider representations given by a modified ResNet-18 architecture \citep{he2016deep}. The initial convolutional layer is replaced by a $3 \times 3$ kernel with stride $1$, with the remaining convolutional blocks kept unchanged. After the convolutional blocks, we apply global average pooling, followed by a fully-connected linear layer, batch normalization, and ReLU activation. The resulting representation is of dimension $k = 256$. All representation networks were trained from scratch without any pretrained weights.

For multi-task ERM, we used mini-batch stochastic gradient descent with the Adam optimizer (learning rate $10^{-3}$), a batch size of $256$, and trained for up to $300$ epochs with early stopping after $20$ epochs without improvement. For few-shot property tests, we used batch gradient descent with the Adam optimizer (learning rate $10^{-2}$) for a maximum of $200$ epochs.

\end{bibunit}

\end{document}